\documentclass{article} 
\usepackage{PRIMEarxiv}

\usepackage{amsmath}
\usepackage{amsthm}
\usepackage{amsopn}
\usepackage{amsfonts}
\usepackage{amssymb}
\usepackage{mathrsfs}
\usepackage{bbm}
\usepackage{enumitem}
\usepackage{hyperref}
\usepackage{url}
\usepackage{graphicx}
\usepackage{subcaption}
\usepackage{algorithm}
\usepackage{algorithmic}
\usepackage{natbib}
\usepackage{booktabs}
\usepackage{cleveref}
\usepackage{todonotes}
\usepackage{pifont}
\usepackage{dsfont}

\usepackage{localmacros}

\title{Riemannian Archetypal Analysis:\\ Interpretable non-linear data analysis \\ on deformed star distributions}
\author{Willem Diepeveen \\
Department of Mathematics\\
University of California, Los Angeles\\
Los Angeles, CA 90095, USA \\
\texttt{wdiepeveen@math.ucla.edu} \\
\And
Deanna Needell \\
Department of Mathematics\\
University of California, Los Angeles\\
Los Angeles, CA 90095, USA \\
\texttt{deanna@math.ucla.edu}
}

\allowdisplaybreaks

\begin{document}

\maketitle

\begin{abstract}
    Classical archetypal analysis is appealing for its interpretability, but its linear geometry can limit performance on data with strongly non-linear structure; at the same time, existing neural extensions improve flexibility while often weakening the geometric meaning of archetypes and interpolations. In this work, we develop a Riemannian version of archetypal analysis based on data-driven pullback geometry for real-valued data, with the goal of combining the interpretability of classical archetypal analysis with the expressive power of modern non-linear models. We introduce a class of deformed star distributions together with associated pullback Riemannian geometry to provide a statistical interpretation of the resulting manifold mappings, define the Riemannian archetypal mapping (RAM) as a projection onto the manifold of geodesically convex combinations of archetypes, and propose a practical optimization scheme based on convex relaxation followed by non-convex refinement. We further propose a learning scheme that yields reasonable, albeit generally suboptimal, deformed star distributions from data. Experiments on synthetic examples and MNIST show that the resulting framework produces meaningful geodesics, useful denoising projections, and geometry-aware classifications, while also clarifying where current optimization limitations remain.
\end{abstract}

\blfootnote{Our code is available at \href{https://github.com/wdiepeveen/Riemannian-Archetypal-Analysis}{https://github.com/wdiepeveen/Riemannian-Archetypal-Analysis}.}

\section{Introduction}

Extracting interpretable features from data is a central task in exploratory analysis, understanding constituent components, and enabling downstream classification and decision-making. The usefulness of any such method is largely determined by how well its underlying geometric model fits the data. For example, principal component analysis (PCA) \cite{pearson1901liii} assumes that the data lie near a low-dimensional linear subspace fitted in a least-squares sense; when this assumption is violated, the resulting components can be difficult to interpret and of limited practical value. In response to this limitation, classical but still widely used methods have adopted richer geometric models: independent component analysis (ICA) \cite{comon1994independent} models data as a linear mixture of statistically independent components, non-negative matrix factorization (NMF) \cite{paatero1994positive} represents data as lying in the conical hull of non-negative basis vectors in the positive orthant, and archetypal analysis (AA) \cite{cutler1994archetypal} describes data as residing in a polytope whose vertices are themselves data points. For different data types and end goals, particular combinations and refinements of these basic ideas are often preferred \cite{abdolali2021simplex,ding2008convex,kleverov2026non,li2008minimum,lin2018maximum,miao2007endmember,nascimento2005vertex}.

In applications where the goal is to identify representative data points (archetypes), to quantify how other observations relate to them, and to leverage these relationships in classification or other downstream tasks, AA enjoys several advantages. Highlighting its advantages over ICA and NMF, AA yields interpretable factors in the form of extremal data points (unlike ICA, whose statistically independent components are linear directions that need not correspond to specific representative observations) and applies to general real-valued data without non-negativity constraints (unlike NMF, which requires non-negative data and typically yields basis elements that are not actual observations). For a comprehensive overview of archetypal analysis and its variants, see \cite{alcacer2025survey}.

\paragraph{Towards non-linear archetypal analysis}

However, AA is inherently linear: it operates in an ambient Euclidean space and encodes geometric assumptions via linear combinations or convex mixtures of data points. In many modern applications, data concentrate near highly non-linear manifolds \cite{fefferman2016testing,whiteley2022statistical}, and naive linear or conical models fail to capture the true geometry and to provide meaningful notions of distance or interpolation between archetypes and observations.

Motivated by this limitation, several non-linear, neural network-based extensions of AA have been proposed. Models such as AAnet \cite{van2019finding} replace the linear archetypal map by a learned encoder-decoder architecture while retaining an archetypal structure in the latent space, and subsequent adaptations such as Deep AA \cite{keller2021learning}, which ground archetypal representations in probabilistic generative models -- variational autoencoders (VAEs) \cite{kingma2013auto} -- rather than standard autoencoders, further extend this idea. Despite the empirical success of these models in applications such as single-cell analysis \cite{venkat2025aanet} and related extensions \cite{tasissa2023k,wieser2025revisiting}, current non-linear AA variants still treat the latent space as an ad hoc Euclidean space, without guarantees that distances or interpolations there reflect meaningful geometric relationships between archetypes and observations -- even though such interpretability is a primary motivation for these methods.

This lack of rigorous geometric interpretability is a common critique of non-linear dimension reduction methods \cite{chari2023specious}, and it has motivated a growing body of work on endowing (variational) autoencoder latent spaces with meaningful geometric structure \cite{bhasker2025uncovering,hartwig2026geodesic,kohli2021ldle,lee2025geometry,moon2019visualizing,psenka2024representation,vigouroux2026discovering}. However, to the best of our knowledge, these efforts have so far not been extended to the non-linear AA setting.

\paragraph{Towards Riemannian archetypal analysis}

Our goal in this work is to develop provably geometrically meaningful non-linear AA methodologies by leveraging Riemannian geometry, rather than generalizing the above-mentioned (still heuristic) approaches to endowing the non-linear AA latent space with meaningful geometric structure. Importantly, this should not be confused with simply choosing a non-Euclidean latent manifold \cite{cho2023hyperbolic,davidson2018hyperspherical}, which typically inherits the same interpretability issues as standard autoencoders when the latent geometry is not explicitly tied to the data distribution and archetypal structure.

At a high level, the aim of a Riemannian reformulation of a machine learning method is to learn a Riemannian structure on the ambient space such that the data form (or are well-approximated by) a low-dimensional, totally geodesic submanifold. The machine learning task is then phrased as an optimization problem over this data manifold, and solved using specialized Riemannian optimization techniques; see \Cref{fig:river-ram-schematic} for an illustration in the context of archetypal analysis.

\begin{figure}[h!]
    \centering
    \begin{subfigure}{0.48\linewidth}
        \centering
        \includegraphics[width=\linewidth]{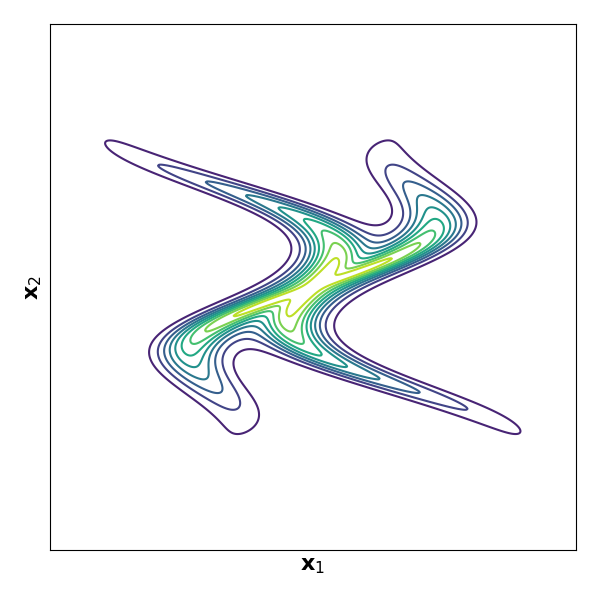}
        \caption{A deformed star distribution visualized by its level sets.}
        \label{fig:river-ram-schematic-a}
    \end{subfigure}
    \hfill
    \begin{subfigure}{0.48\linewidth}
        \centering
        \includegraphics[width=\linewidth]{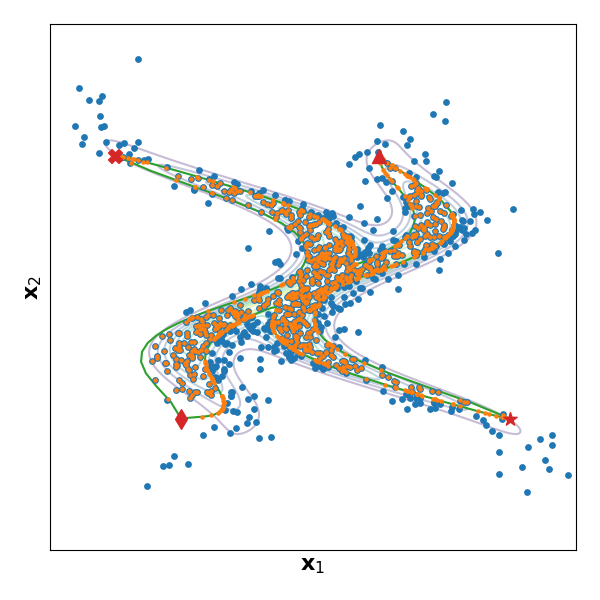}
        \caption{An illustration of Riemannian archetypal analysis.}
        \label{fig:river-ram-schematic-b}
    \end{subfigure}
    \caption{Data from the deformed star distribution (blue) are projected in the $\ell^2$-sense (orange) onto the manifold bounded by geodesics (green) connecting the four archetypes (red) via the Riemannian archetypal mapping (RAM). Each projected point admits a geodesic barycentric representation whose weight vector is sparse for points projected onto a boundary or corner, and points that already lie in the manifold approximation of the data set remain unchanged.}
    \label{fig:river-ram-schematic}
\end{figure}


Learning such Riemannian structures and efficiently evaluating manifold maps (geodesics, exponential, and logarithmic mappings) from data has been an active research area \cite{arvanitidis2016locally,diepeveen2024pulling,hauberg2012geometric,peltonen2004improved,Scarvelis2023,sorrenson2025learning,sun2024geometryaware}; see also \cite{gruffaz2025riemannian} for a recent overview. Only recently have both ingredients been realized in a unified, data-driven way \cite{diepeveen2025scorebased}, in which volume-preserving normalizing flows are trained to “unroll’’ the data manifold, yielding a pullback Riemannian metric with closed-form manifold mappings that provably follow high-likelihood regions of the underlying deformed Gaussian density. On top of such a geometry, Riemannian neural networks become practical. In particular, Riemannian autoencoders (RAEs) \cite{diepeveen2024pulling}, which can be viewed as non-linear generalizations of PCA, implement an $\ell^2$-projection onto a low-dimensional submanifold, where the projection can often be approximated in closed form for sufficiently regular diffeomorphisms \cite{diepeveen2024pulling} or efficiently solved via iso-Riemannian optimization otherwise \cite{diepeveen2025isoriemanopt}.

However, just as classical PCA is well suited for (approximately) Gaussian data in Euclidean space, RAEs and more generally the framework proposed in \cite{diepeveen2025scorebased} are primarily tailored to settings where the underlying geometric model is a deformed Gaussian distribution obtained via a volume-preserving diffeomorphism (or, more generally, a diffeomorphism with constant Jacobian determinant). This is restrictive: not every data distribution can be pushed forward to a Gaussian under such diffeomorphisms, and in particular deformed star distributions -- the natural geometric model underlying archetypal analysis -- do not fit this paradigm. In practice, the existing Riemannian geometry workflow has therefore only reached non-linear PCA-type methods and remains unable to capture the polytope- and star-like geometries (even in linear settings) that AA is designed for.


\subsection{Contributions}

This gap motivates the development of Riemannian archetypal analysis, which is the focus of this work. Our main contributions are as follows:

\begin{itemize}
\item \textbf{Pullback geometries for deformed star distributions.} We introduce a broad class of deformed star distributions and define a corresponding family of pullback Riemannian geometries with diffeomorphisms that do not have a constant Jacobian determinant. We show that pullback geodesics remain within high-likelihood regions of the distribution and that, by choosing a non-trivial pullback metric within the proposed family, we obtain more stable and meaningful geodesics.
\item \textbf{Riemannian archetypal mappings (RAMs).} Given a pullback structure induced by a deformed star distribution and a collection of archetypes, we define the Riemannian archetype mapping (RAM) as the projection of data points onto the manifold of geodesically convex combinations of the archetypes. We reformulate this projection as a non-convex constrained optimization problem and propose an algorithm that initializes from a convex relaxation to obtain reliable solutions, whose distances to each archetype can be directly used for classification and other downstream tasks.
\item \textbf{Learning archetypes and star distributions.} We explain why standard negative log-likelihood training is ill-suited for reliably recovering deformed star distributions, and instead propose a constructive learning scheme combined with classical normalizing flow training. This scheme is motivated by the observation that multiscale normalizing flows with constant Jacobian determinant naturally induce star-shaped latent distributions, which we exploit to learn both archetypes and their associated deformed star geometry.
\end{itemize}

\subsection{Related work and broader impact}

\paragraph{Feature extraction on abstract manifolds}

RAEs are built on ideas originally developed in the abstract manifold setting, both for constructing low-dimensional data manifolds \cite{fletcher2004principal} and for understanding their geometric behavior \cite{diepeveen2025curvature}. Although RAEs ultimately use a (non-intrinsic) $\ell^2$-projection in the ambient space, this distinction from the fully intrinsic setting can to a large extent be addressed by both learning the manifold and performing data analysis under a connection different from the Levi-Civita connection called the iso-connection \cite{diepeveen2025manifold}, chosen so that geodesics preserve their shape while acquiring constant $\ell^2$-speed and inducing notions of distance that align better with the ambient Euclidean geometry.

From this perspective, it would be natural to first look for a generalization of AA in the abstract manifold setting and then specialize to data-driven geometries. However, to the best of our knowledge there is currently no abstract manifold analogue of AA, in contrast to ICA and NMF, for which manifold-based generalizations have been proposed \cite{ho2013nonlinear,chew2025curvature}. Having said that, these manifold ICA and NMF methods have not yet been adapted to the mixed (intrinsic-extrinsic) metric setting that arises in machine learning applications. All this suggests that looking for AA outside the purely intrinsic setting could indicate how to construct it for the intrinsic setting and provide a roadmap for extending ICA, NMF, and a broad family of manifold-based methods \cite{miolane2020geomstats} to a substantially wider range of applications.


\paragraph{Signal processing on learned data manifolds}

For RAEs induced by diffeomorphisms that are not highly regular (in particular, that do not have constant Jacobian determinant as in our deformed star setting), one generally has to resort to optimization rather than closed-form approximations of projections. Classical Riemannian optimization methods \cite{absil2008optimization,boumal2023introduction} rely on geodesic convexity to obtain efficient algorithms, but this assumption is typically too strong in the mixed-metric setting. One can still formally apply Riemannian gradient descent; however, for pullback geometries this is effectively equivalent to performing gradient descent in a chart.

Such chart-based optimization can be made to work \cite{diepeveen2026riemannian} and underlies several non-pullback, latent-space or chart-based approaches \cite{alberti2024manifold,daras2021intermediate,hand2018phase,hand2018global,lei2019inverting,robinett2025manifold,shustin2022manifold}, but strong convexity and Lipschitz properties of the objective often deteriorate under the chart, forcing very small step sizes in practice. Very recently, geodesic convexity on the data manifold and Euclidean convexity of the objective were combined more effectively \cite{diepeveen2025isoriemanopt} by switching the iso-connection \cite{diepeveen2025manifold}, in a way analogous to the fix used for mixed-metric dimension reduction mentioned before. This leads to an iso-Riemannian optimization framework that identifies the “right” vector field for descent and permits larger step sizes, yielding faster convergence than standard chart-based methods.

At present, however, only the iso-Riemannian analogue of gradient descent has been systematically developed \cite{diepeveen2025isoriemanopt}, and there is no general iso-Riemannian framework for handling constraints -- precisely what is needed to accelerate evaluations of RAMs in our setting, where the underlying diffeomorphisms are far from having constant Jacobian determinant. Developing constrained optimization under a star-shaped geometry thus not only enables our Riemannian archetypal analysis, but also points to a natural next step for the iso-Riemannian optimization literature.

\paragraph{Star geometry in machine learning and data science}

More broadly, star-shaped data distributions and their deformations have recently begun to play a prominent role both in explaining the success of modern machine learning methods and in suggesting more principled design directions. For instance, non-convex regularizers based on star geometry can be shown to be optimal for certain data models \cite{leong2025optimal}, motivating the development of non-convex regularization schemes in inverse problems and learning \cite{hertrich2025learning,leong2024star}. Related geometric star models also underlie recent analyses of diffusion models, where such structures lead to efficient learning and recovery guarantees \cite{wang2025diffusion,leong2025recovery}, and they appear in white-box representations of deep networks: deformed star-geometry-based models for architectures such as ReduNet \cite{chan2022redunet} and for transformers \cite{yu2023white} reveal explicit links between star-shaped latent structure and classification or sequence modeling performance. All of this suggests that a deformed star-based Riemannian model may simultaneously sharpen the statistical understanding of machine learning methods more broadly and provide a natural geometric foundation for Riemannian neural networks in realistic data regimes.


\subsection{Outline}

\Cref{sec:prelims} recalls the Riemannian and pullback geometry underlying our constructions and highlights limitations of current pullback manifold learning, further motivating the need for our approach. We then develop a Riemannian formulation of archetypal analysis and its data-driven instantiation in a step-by-step manner. First, \Cref{sec:star-geometry} introduces deformed star distributions as the statistical backbone of the assumed data geometry, together with their pullback structure, and shows how these induce stable, high-likelihood geodesics adapted to archetypal structure, while iso-Riemannian geometry restores an interpretable notion of time. Next, \Cref{sec:rams} derives the Riemannian archetypal mapping (RAM) from these statistical and geometric assumptions, develops relaxed and refined optimization schemes for its evaluation, and explains how iso-corrected weights can be used for classification. Building on this, \Cref{sec:learning-stars} proposes a practical three-step learning procedure that produces reasonable, albeit suboptimal, deformed star models from data, thereby making the tools of \Cref{sec:star-geometry,sec:rams} accessible in practice. To assess how well the star-shaped assumption fits real data and to expose practical limitations, \Cref{sec:numerics} reports experiments on MNIST, focusing on interpolation, denoising, and classification under the learned geometry. Finally, \Cref{sec:conclusions} summarizes the main findings and outlines directions for future work.

\section{Preliminaries}
\label{sec:prelims}

For the purposes of this work we will need the following notions and results, which we present in basic notations from differential and Riemannian geometry, see \cite{boothby2003introduction,carmo1992riemannian,lee2013smooth,sakai1996riemannian} for details.

\paragraph{Smooth manifolds and tangent spaces} 
Let $\manifold$ be a \emph{$\dimInd$-dimensional smooth manifold}, i.e., a topological manifold of dimension $\dimInd$ equipped with a \emph{maximal smooth atlas}, meaning a collection of charts whose transition functions are all smooth, making the manifold locally diffeomorphic to $\Real^\dimInd$. We write $C^\infty(\manifold)$ for the space of smooth functions over $\manifold$. The \emph{tangent space} at $\mPoint \in \manifold$, which is defined as the space of all \emph{derivations} at $\mPoint$, is denoted by $\tangent_\mPoint \manifold$ and for \emph{tangent vectors} we write $\tangentVector_\mPoint \in \tangent_\mPoint \manifold$. For the \emph{tangent bundle} we write $\tangent\manifold$ and smooth vector fields, which are defined as \emph{smooth sections} of the tangent bundle, are written as $\vectorfield(\manifold) \subset \tangent\manifold$.

\paragraph{Riemannian manifolds} 
A smooth manifold $\manifold$ becomes a \emph{Riemannian manifold} if it is equipped with a smoothly varying \emph{metric tensor field} $(\cdot, \cdot) : \vectorfield(\manifold) \times \vectorfield(\manifold) \to C^\infty(\manifold)$. This tensor field induces a \emph{(Riemannian) metric} $\distance_{\manifold} : \manifold\times\manifold\to\Real$. The metric tensor can also be used to construct a unique affine connection, the \emph{Levi-Civita connection}, that is denoted by $\nabla_{(\,\cdot\,)}(\,\cdot\,) : \vectorfield(\manifold) \times \vectorfield(\manifold) \to \vectorfield(\manifold)$. 
This connection is in turn the cornerstone of a myriad of manifold mappings.

One is the notion of a \emph{geodesic}, which for two points $\mPoint,\mPointB \in \manifold$ is defined as a curve $\geodesic_{\mPoint,\mPointB} : [0,1] \to \manifold$ with minimal length that connects $\mPoint$ with $\mPointB$ -- that is, if such a curve exists. To ensure existence, we often consider \emph{(geodesically) convex} subsets, i.e., sets $\mathcal{D}\subset \manifold$ such that $\geodesic_{\mPoint,\mPointB}\subset \mathcal{D}$ for any $\mPoint,\mPointB\in \mathcal{D}$. In addition, when geodesics are also unique on $\mathcal{D}$, we call $\mathcal{D}$ \emph{strongly (geodesically) convex}. Another closely related notion to geodesics is the curve $t \mapsto \geodesic_{\mPoint,\tangentVector_\mPoint}(t)$  for a geodesic starting from $\mPoint\in\manifold$ with velocity $\dot{\geodesic}_{\mPoint,\tangentVector_\mPoint} (0) = \tangentVector_\mPoint \in \tangent_\mPoint\manifold$. This can be used to define the \emph{exponential map} $\exp_\mPoint : \mathcal{D}_\mPoint \to \manifold$ at $\mPoint$ as \(\exp_\mPoint(\tangentVector_\mPoint) := \geodesic_{\mPoint,\tangentVector_\mPoint}(1),\) where \(\mathcal{D}_\mPoint \subset \tangent_\mPoint \manifold\) is the set on which \(\geodesic_{\mPoint,\tangentVector_\mPoint}(1)\) is defined. The manifold $\manifold$ is said to be \emph{(geodesically) complete} whenever $\mathcal{D}_{\mPoint}=\tangent_{\mPoint} \manifold$ for all $\mPoint \in \manifold$. Furthermore, the \emph{logarithmic map} $\log_\mPoint : \exp_\mPoint(\mathcal{D}'_\mPoint ) \to \mathcal{D}'_\mPoint$ at $\mPoint$ is defined as the inverse of $\exp_\mPoint$, so it is well-defined on  $\mathcal{D}'_{\mPoint} \subset \mathcal{D}_{\mPoint}$ where $\exp_\mPoint$ is a diffeomorphism. Moreover, for \emph{parallel transport} $\partransport_{\mPointB \leftarrow \mPoint}: \tangent_\mPoint \manifold \to \tangent_\mPointB \manifold$ of a vector $\tangentVector_\mPoint \in \tangent_\mPoint \manifold$ along a geodesic from $\mPoint$ to
$\mPointB$ we write $\partransport_{\mPointB \leftarrow \mPoint} \tangentVector_\mPoint$. 


\paragraph{Pullback manifolds} 
If $(\manifold, (\cdot,\cdot))$ is a $\dimInd$-dimensional Riemannian manifold, $\manifoldB$ is a $\dimInd$-dimensional smooth manifold and $\diffeo:\manifoldB \to \manifold$ is a diffeomorphism, the \emph{pullback metric}
\begin{equation}
    (\tangentVector, \tangentVectorB)_\mPoint^\diffeo := (D_{\mPoint}\diffeo[\tangentVector_{\mPoint}], D_{\mPoint}\diffeo[\tangentVectorB_{\mPoint}])_{\diffeo(\mPoint)}, \quad \mPoint \in \manifoldB, \tangentVector, \tangentVectorB \in \vectorfield(\manifoldB)
    \label{eq:pull-back-metric}
\end{equation}
where $D_{\mPoint}\diffeo: \tangent_\mPoint \manifoldB \to \tangent_{\diffeo(\mPoint)} \manifold$ denotes the differential of $\diffeo$,
defines a Riemannian structure on $\manifoldB$, which we denote by $(\manifoldB, (\cdot,\cdot)^\diffeo)$. 
Pullback mappings are denoted similarly to (\ref{eq:pull-back-metric}) with the diffeomorphism $\diffeo$ as a superscript, i.e., we write $\distance^\diffeo_{\manifoldB}(\mPoint, \mPointB)$, $\geodesic^\diffeo_{\mPoint, \mPointB}$, $\exp^\diffeo_\mPoint (\tangentVector_\mPoint)$, $\log^\diffeo_{\mPoint} \mPointB$, and $\partransport^{\diffeo}_{\mPointB \leftarrow \mPoint}$ for $\mPoint,\mPointB \in \manifoldB$ and $\tangentVector_\mPoint \in \tangent_\mPoint \manifoldB$. Pullback metrics literally pull back all geometric information from the Riemannian structure on $\manifold$. 
In particular, closed-form manifold mappings on $(\manifold, (\cdot,\cdot))$ yield under mild assumptions closed-form manifold mappings on $(\manifoldB, (\cdot,\cdot)^\diffeo)$. 

\paragraph{Data-driven Euclidean pullback manifolds}

Notably, for Euclidean pullback manifolds $(\Real^\dimInd,(\cdot,\cdot)^\diffeo)$ generated by a diffeomorphism $\diffeo:\Real^\dimInd\to \Real^\dimInd$ pulling back the standard Euclidean structure $(\Real^\dimInd, (\cdot, \cdot)_2)$ -- which is how scalable data-driven Riemannian geometry is constructed for high-dimensional data \cite{diepeveen2025scorebased,diepeveen2025manifold} --, we have \cite[Prop~2.1]{diepeveen2024pulling}
\begin{align}
    \distance_{\Real^{\dimInd}}^{\diffeo}(\Vector, \VectorB) &= \|\diffeo(\Vector) - \diffeo(\VectorB)\|_2,
    \label{eq:thm-distance-remetrized}\\
    \geodesic^{\diffeo}_{\Vector, \VectorB}(t) &= \diffeo^{-1}((1 - t)\diffeo(\Vector) + t \diffeo(\VectorB)),
    \label{eq:thm-geodesic-remetrized}\\
    \exp^{\diffeo}_\Vector (\tangentVector_\Vector) &= \diffeo^{-1}(\diffeo(\Vector) + D_{\Vector} \diffeo[\tangentVector_\Vector]),
    \label{eq:thm-exp-remetrized}\\
    \log^{\diffeo}_{\Vector} (\VectorB) &= D_{\diffeo(\Vector)}\diffeo^{-1}[\diffeo(\VectorB) - \diffeo(\Vector)],
    \label{eq:thm-log-remetrized}\\
    \partransport^{\diffeo}_{\VectorB \leftarrow \Vector} \tangentVector_\Vector &= D_{\diffeo(\VectorB)}\diffeo^{-1}[D_{\Vector} \diffeo[\tangentVector_\Vector]],
    \label{eq:thm-exp-remetrized}
\end{align}
where $\Vector, \VectorB\in \Real^\dimInd$ and $\tangentVector_\Vector \in \tangent_\Vector \Real^\dimInd \cong \Real^\dimInd$, and have \cite[Prop~3.7]{diepeveen2024pulling}
\begin{equation}
    \operatorname{argmin}_{\Vector\in \Real^\dimInd} \sum_{\sumIndA=1}^\dataNum \distance^\diffeo_{\Real^\dimInd}(\Vector, \Vector^\sumIndA)^2 = \diffeo^{-1} (\frac{1}{\dataNum} \sum_{\sumIndA=1}^\dataNum \diffeo(\Vector^\sumIndA)),
    \label{eq:thm-bary-remetrized}
\end{equation}
for the Riemannian barycentre \cite{karcher1977riemannian}, where $\Vector^1, \ldots, \Vector^\dataNum\in \Real^\dimInd$. 

In the context of a data-driven pullback structure, the manifold mappings above gain a practical interpretation. A well-trained $\diffeo$ essentially flattens out the data space, i.e., it maps a data set -- residing close to a non-linear data manifold -- into the vicinity of a (low-dimensional) linear subspace of $\Real^\dimInd$. Manifold mappings are essentially computed using Euclidean rules applied to points and tangent vectors mapped into this linear subspace by $\diffeo$ and then mapped back to the original data domain using $\diffeo^{-1}$. As a result, geodesics between two points will always move through regions with large amounts of data -- or probabilistically speaking through regions with high likelihood. For a more detailed discussion and the manifold mapping for the general pullback setting, we refer the reader to \cite{diepeveen2024pulling}.

\paragraph{Learning Euclidean pullback manifolds}
To learn such a diffeomorphism $\diffeo:\Real^\dimInd\to\Real^\dimInd$ that generates geodesics that interpolate through regions of high likelihood, normalizing flow training has shown to be a scalable approach \cite{diepeveen2025scorebased,diepeveen2025manifold}. Following \cite{diepeveen2025manifold}, this boils down to minimizing the negative log likelihood loss
\begin{equation}
    \mathcal{L}(\networkParams):=\mathbb{E}_{\mathbf{\stoVector} \sim \density_{\text{data }}}\left[-\log \density_{\diffeo_\networkParams}(\stoVector)\right],
    \label{eq:loss-nf}
\end{equation}
where $\density_{\diffeo_\networkParams}: \Real^\dimInd \to \Real$ is given by
\begin{equation}
    \density_{\diffeo_\networkParams}(\Vector):=\frac{1}{\sqrt{(2 \pi)^\dimInd}} e^{-\frac{1}{2}\|\diffeo_\networkParams(\Vector)\|_2^2}|\det(D_{\Vector} \diffeo_\networkParams)| ,
    \label{eq:deformed-gaussian-density}
\end{equation}
and where $\diffeo_\networkParams:\Real^\dimInd\to\Real^\dimInd$ is an invertible neural network with parameters $\networkParams$ such that the mapping $\Vector\mapsto |\det(D_{\Vector} \diffeo_\networkParams)|$ is constant. In practice, the latter constraint is typically guaranteed by using additive coupling layers \cite{dinh2014nice} combined with invertible linear channel mixing and normalization strategies \cite{kingma2018glow}.

To get intuition as to why this approach yields a suitable diffeomorphism and pullback geometry by extension, we first note that the function $t \mapsto -\log \bigl(\density_{\diffeo_\networkParams}(\geodesic^{\diffeo_{\networkParams}}_{\Vector, \VectorB}(t))\bigr)$ is strongly convex for any combination of end points $\Vector,\VectorB\in \Real^\dimInd$ and network parametrization $\networkParams$ (\cite[Thm.~3.3]{diepeveen2025scorebased}). Then, if $\density_{\text{data}}$ is feasible, i.e., there exists some $\networkParams^*$ such that $\density_{\diffeo_{\networkParams^*}} = \density_{\text{data}}$, minimizing (\ref{eq:loss-nf}) will find this $\networkParams^*$. In other words, if we have $\density_{\diffeo_{\networkParams^*}}  = \density_{\text{data}}$ this means that geodesics $\geodesic^{\diffeo_{\networkParams^*}}_{\Vector, \VectorB}$ between data points move through regions with higher likelihood than the end points, which is exactly what we set out to do. 

\begin{remark}
\label{rem:deformed-gaussian-shortcomings}
    In practice, we cannot expect the true data density $\density_{\text{data}}$ to belong to the feasible class described above. This mismatch is a key motivation for our work, which extends pullback learning beyond deformed Gaussian models to a richer family of deformed star distributions that more faithfully capture non-linear polytope-type geometry underlying archetypal analysis.
\end{remark}

\section{Riemannian geometry for deformed star distributions}
\label{sec:star-geometry}

Before discussing how to formulate Riemannian archetypal analysis and learn the associated Riemannian structures, we first address \Cref{rem:deformed-gaussian-shortcomings} and introduce a class of data distributions that provides a more suitable starting point for this endeavor and discuss several geometric considerations. In what follows, we formalize this class as a family of deformed star distributions. In the archetypal analysis setting, archetypes naturally concentrate near the “corners” of the deformed star. We then systematically study the associated pullback structures they induce and address interpretability issues of geodesics via the iso-Riemannian geometry generated by these pullback metrics. All proofs are deferred to \Cref{app:star-geometry}.

\subsection{Deformed star distributions}

First, we seek a class of distributions that generalizes \eqref{eq:deformed-gaussian-density}. In particular, we will consider densities $\density_{\diffeoA, \radial}:\Real^\dimInd\to\Real$ of the form

\begin{equation}
    \density_{\diffeoA, \radial} (\Vector):= \frac{1}{2^{\frac{\dimInd}{2}-1} \Gamma\left(\frac{\dimInd}{2}\right)} \frac{e^{-\frac{1}{2}\radial( \frac{\diffeoA(\Vector)}{\|\diffeoA(\Vector)\|_2} )^{-2}\|\diffeoA(\Vector)\|_2^2}}{\int_{\Sphere^{\dimInd-1}} \radial(\sVector)^\dimInd d \sigma(\sVector)} |\det D_\Vector \diffeoA|,
    \label{eq:starflow-density}
\end{equation}
where $\diffeoA:\Real^\dimInd\to\Real^\dimInd$ is a diffeomorphism with constant Jacobian determinant,  $\radial:\Sphere^{\dimInd-1}\to\Real_{>0}$ is a radial function whose role is to modulate the radial scaling in each direction (so that the ``star corners'' correspond to directions $\sVector$ with large values of $\radial(\sVector)$), and $\sigma$ is the standard surface measure on the unit sphere $\Sphere^{\dimInd-1} \subset \Real^\dimInd$, i.e. the $(\dimInd-1)$-dimensional Hausdorff measure restricted to $\Sphere^{\dimInd-1}$ -- so that $\sigma(\Sphere^{\dimInd-1}) = 2 \pi^{\frac{\dimInd}{2} } / \Gamma\left(\frac{\dimInd}{2}\right)$.

Densities of the form \eqref{eq:starflow-density} exhibit a deformed star-shaped structure, as depicted in \Cref{fig:river-ram-schematic-a}, and can be interpreted as the pushforward -- via $\diffeoA^{-1}$ -- of a latent star-shaped distribution whose level sets are completely determined by the radial function $\radial$. It is worth highlighting that \eqref{eq:starflow-density} reduces to a distribution of the form \eqref{eq:deformed-gaussian-density} when $\radial(\sVector) := 1$ for all $\sVector\in \Sphere^{\dimInd-1}$ and that for general $\radial$ this family of densities still defines valid probability distributions.

\begin{proposition}
\label{prop:deformed-star-prob-density}
Let $\diffeoA:\Real^\dimInd\to\Real^\dimInd$ be a smooth diffeomorphism and let $\radial:\Sphere^{\dimInd-1}\to\Real_{>0}$ be a radial function. 

Then, the function $\density_{\diffeoA,\radial}$ defined as in \eqref{eq:starflow-density} is a probability density on
$\mathbb{R}^\dimInd$, i.e., 
\begin{equation}
    \int_{\mathbb{R}^\dimInd} \density_{\diffeoA,\radial}(\Vector) \, d \Vector = 1.
\end{equation}
\end{proposition}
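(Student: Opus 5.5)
The plan is a change of variables $\mathbf{y} = \diffeoA(\Vector)$, followed by polar coordinates $\mathbf{y} = r\sVector$ with $r>0$ and $\sVector\in\Sphere^{\dimInd-1}$. Since $\diffeoA$ is a smooth diffeomorphism of $\Real^\dimInd$, the change of variables formula gives $\int_{\Real^\dimInd} g(\diffeoA(\Vector))\,|\det D_\Vector\diffeoA|\,d\Vector = \int_{\Real^\dimInd} g(\mathbf{y})\,d\mathbf{y}$ for nonnegative measurable $g$. The Jacobian factor in \eqref{eq:starflow-density} is consumed exactly, so the constant-determinant assumption is not needed for normalization. We are left to show that
\begin{equation*}
g(\mathbf{y}) := e^{-\frac{1}{2}\radial(\mathbf{y}/\|\mathbf{y}\|_2)^{-2}\|\mathbf{y}\|_2^2}
\end{equation*}
integrates to $2^{\frac{\dimInd}{2}-1}\Gamma(\frac{\dimInd}{2})\int_{\Sphere^{\dimInd-1}}\radial(\sVector)^\dimInd\,d\sigma(\sVector)$. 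The integrand is undefined at $\mathbf{y}=0$, but that is a null set. Nonnegativity lets us use Tonelli throughout.

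In polar coordinates $d\mathbf{y} = r^{\dimInd-1}\,dr\,d\sigma(\sVector)$, so
\begin{equation*}
\int_{\Real^\dimInd} g(\mathbf{y})\,d\mathbf{y} = \int_{\Sphere^{\dimInd-1}} \int_0^\infty e^{-\frac{r^2}{2\radial(\sVector)^2}} r^{\dimInd-1}\,dr\,d\sigma(\sVector).
\end{equation*}
For fixed $\sVector$, substitute $r = \radial(\sVector)\,t$. The inner integral becomes $\radial(\sVector)^\dimInd\int_0^\infty e^{-t^2/2}t^{\dimInd-1}\,dt$. With $u=t^2/2$, the remaining integral equals $2^{\frac{\dimInd}{2}-1}\Gamma(\frac{\dimInd}{2})$. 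Integrating over the sphere then yields exactly the reciprocal of the normalizing constant, so the total is $1$.

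The main subtle point is making sure every quantity is finite and measurable. The spherical integral $\int_{\Sphere^{\dimInd-1}}\radial^\dimInd\,d\sigma$ must be finite and positive. Positivity follows from $\radial>0$. Finiteness requires $\radial$ to be measurable and in $L^\dimInd(\Sphere^{\dimInd-1})$; continuity of $\radial$ on the compact sphere suffices. I would state this as the implicit standing assumption on a ``radial function'' and note that otherwise the density is not well defined. Everything else is routine Tonelli and Gamma-function bookkeeping. As a sanity check, with $\radial\equiv 1$ the constant reduces to $(2\pi)^{\dimInd/2}$, using $\sigma(\Sphere^{\dimInd-1}) = 2\pi^{\dimInd/2}/\Gamma(\frac{\dimInd}{2})$. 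This recovers \eqref{eq:deformed-gaussian-density}.
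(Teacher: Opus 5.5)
Your proposal is correct and follows the same route as the paper's proof. You change variables $\mathbf{y}=\diffeoA(\Vector)$ to absorb the Jacobian, pass to polar coordinates, and apply the Gamma-function identity for $\int_0^\infty e^{-a r^2} r^{\dimInd-1}\,dr$; your substitution $r=\radial(\sVector)\,t$ is that identity with $a=\tfrac12\radial(\sVector)^{-2}$. Your remark that $\radial^{\dimInd}$ must be integrable on the sphere (e.g.\ $\radial$ continuous) for the normalizing constant to be finite is a worthwhile point the paper leaves implicit.
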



\subsection{Pullback geometry and stability} 
Assuming that the data distribution is of the form \eqref{eq:starflow-density}, our next goal is to construct a diffeomorphism $\diffeo : \Real^{\dimInd} \to \Real^{\dimInd}$ -- presumably depending on $\diffeoA$ and $\radial$ -- such that, for any choice of endpoints $\Vector, \VectorB \in \Real^{\dimInd}$, the mapping $t \mapsto -\log\bigl(\density_{\diffeoA,\radial}(\geodesic^{\diffeo}_{\Vector,\VectorB}(t))\bigr)$ is strongly convex. In direct analogy with the deformed Gaussian setting, this would imply that geodesics between data points pass through regions of higher likelihood than the endpoints, but now with respect to a more realistic data distribution than the deformed Gaussian model.

In contrast to the deformed Gaussian case, however, obtaining stable and meaningful geodesics is somewhat less straightforward here. Before turning to the construction in detail, we first describe the basic building blocks of the pullback geometry: in addition to the diffeomorphism $\diffeoA$, we will introduce two further mappings.



\begin{proposition}
\label{prop:rho-v-diffeos}
Let $\radial:\Sphere^{\dimInd-1}\to\Real_{>0}$ be a radial function and let $\monotone:\Real_{> 0}\to \Real_{> 0}$ be a strictly increasing function. 
    \begin{enumerate}[label=(\roman*)]
        \item The mapping $\diffeoB_\radial:\Real^\dimInd\to\Real^\dimInd$ defined as
        \begin{equation}
            \diffeoB_\rho (\Vector) :=  \left\{\begin{matrix}
        \rho\bigl(\frac{\Vector}{\|\Vector\|_2}\bigr)^{-1} \Vector  & \|\Vector\|_2 > 0 \\
        \mathbf{0} & \|\Vector\|_2 = 0 \\
        \end{matrix}\right. ,
        \label{eq:radial-homeo}
        \end{equation}
        is invertible and its inverse is given by
        \begin{equation}
            \diffeoB_\rho^{-1} (\VectorB) :=  \left\{\begin{matrix}
        \rho\bigl(\frac{\VectorB}{\|\VectorB\|_2}\bigr) \VectorB  & \|\VectorB\|_2 > 0 \\
        \mathbf{0} & \|\VectorB\|_2 = 0 \\
        \end{matrix}\right. .
        \label{eq:radial-homeo-inv}
        \end{equation}
        \item The mapping $\diffeoC_\monotone:\Real^\dimInd\to\Real^\dimInd$ defined as
        \begin{equation}
            \diffeoC_\monotone (\Vector) :=  \left\{\begin{matrix}
        \monotone\bigl(\|\Vector\|_2\bigr) \frac{\Vector}{\|\Vector\|_2}  & \|\Vector\|_2 > 0 \\
        \mathbf{0} & \|\Vector\|_2 = 0 \\
        \end{matrix}\right. ,
        \label{eq:monotone-homeo}
        \end{equation}
        is invertible and its inverse is given by
        \begin{equation}
            \diffeoC_\monotone^{-1} (\VectorB) :=  \left\{\begin{matrix}
        \monotone\bigl(\|\VectorB\|_2\bigr)^{-1} \frac{\VectorB}{\|\VectorB\|_2}   & \|\VectorB\|_2 > 0 \\
        \mathbf{0} & \|\VectorB\|_2 = 0 \\
        \end{matrix}\right. .
        \label{eq:monotone-homeo-inv}
        \end{equation}
    \end{enumerate}
\end{proposition}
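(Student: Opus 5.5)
The plan is to prove both parts by checking directly that the proposed inverse is a two-sided inverse. Both maps have the same structure: they fix $\mathbf{0}$, send nonzero vectors to nonzero vectors, and only rescale a nonzero vector by a positive factor. The whole argument therefore rests on one observation. If $\Vector \neq \mathbf{0}$ and $\lambda>0$, then $\lambda\Vector \neq \mathbf{0}$ and
\[
\frac{\lambda\Vector}{\|\lambda\Vector\|_2} = \frac{\Vector}{\|\Vector\|_2},
\]
so the direction on $\Sphere^{\dimInd-1}$ is unchanged by either map.

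For (i), take $\Vector\neq\mathbf{0}$ and set $\lambda := \radial(\Vector/\|\Vector\|_2)^{-1}>0$, which is well defined because $\radial$ takes values in $\Real_{>0}$. Then $\diffeoB_\radial(\Vector)=\lambda\Vector$ is nonzero and has the same direction as $\Vector$. Applying \eqref{eq:radial-homeo-inv} therefore multiplies by $\radial(\Vector/\|\Vector\|_2)=\lambda^{-1}$, which returns $\Vector$. The composition in the other order works the same way with the roles of $\radial$ and $\radial^{-1}$ swapped. At the origin both maps return $\mathbf{0}$, so both compositions are the identity on all of $\Real^\dimInd$. This shows $\diffeoB_\radial$ is a bijection whose inverse is \eqref{eq:radial-homeo-inv}.

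For (ii), the map $\diffeoC_\monotone$ keeps the direction $\Vector/\|\Vector\|_2$ and replaces the norm $\|\Vector\|_2$ by $\monotone(\|\Vector\|_2)>0$. Its inverse must keep the direction and undo this change of norm. I read $\monotone(\cdot)^{-1}$ in \eqref{eq:monotone-homeo-inv} as the inverse function $\monotone^{-1}$, not the reciprocal, since the reciprocal would not give an inverse in general. Then $\diffeoC_\monotone^{-1}(\diffeoC_\monotone(\Vector)) = \monotone^{-1}(\monotone(\|\Vector\|_2))\,\Vector/\|\Vector\|_2=\Vector$, and the other composition is the identity in the same way.

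The main obstacle is this second part. Strict monotonicity makes $\monotone$ injective, and that alone gives injectivity of $\diffeoC_\monotone$: two points with equal images have the same direction and, by injectivity of $\monotone$, the same norm. However, \eqref{eq:monotone-homeo-inv} is only defined on all of $\Real^\dimInd$ if $\monotone$ maps onto $\Real_{>0}$. I would handle this by stating the additional assumption that $\monotone$ is a bijection of $\Real_{>0}$, for instance continuous with $\monotone(0^+)=0$ and $\monotone(\infty)=\infty$. Without that assumption, the statement should be weakened to invertibility onto the image $\{\mathbf{0}\}\cup\{\VectorB:\|\VectorB\|_2\in\monotone(\Real_{>0})\}$.
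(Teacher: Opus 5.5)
Your argument is correct and is essentially the paper's proof: both verify directly that the proposed maps are two-sided inverses, using that a positive rescaling preserves the direction $\Vector/\|\Vector\|_2$, and the paper likewise reads $\monotone(\cdot)^{-1}$ in \eqref{eq:monotone-homeo-inv} as the inverse function $\monotone^{-1}$ rather than the reciprocal. Your surjectivity caveat is well founded and more careful than the paper, which defines $\monotone^{-1}$ only on $\monotone(\Real_{>0})$ yet asserts $\diffeoC_\monotone\circ\diffeoC_\monotone^{-1}=\operatorname{Id}$ ``in exactly the same way''; this tacitly requires $\monotone(\Real_{>0})=\Real_{>0}$, which fails e.g.\ for $\monotone(s)=1-e^{-s}$, even under the hypotheses of \Cref{lem:strong-convexity}.
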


The pullback geometry we advocate for in this work is defined by the composition of the form $\diffeo := \diffeoC_{\monotone} \circ \diffeoB_{\radial} \circ \diffeoA$. While the role of $\radial$ in $\diffeoB_{\radial}$ is perhaps somewhat anticipated, the introduction of $\monotone$ and $\diffeoC_{\monotone}$ requires additional motivation. Before presenting the main result showing that this geometry indeed yields high-likelihood geodesics, it is instructive to compare pullback geodesics obtained with and without $\diffeoC_{\monotone}$. \Cref{fig:river-geodesics-schematic-b-curves} shows that pullback geodesics with $\diffeoC_{\monotone}$ contract more strongly towards the center; this is particularly beneficial for geodesics connecting to the $\textcolor{tabtenfour}{\pmb{\blacklozenge}}$-archetype, which otherwise swing out considerably without this adaptation, as illustrated in \Cref{fig:river-geodesics-schematic-a-curves}. More generally, reduced swinging in the figure corresponds to trajectories that pass through regions of even higher likelihood than the default geodesics. Thus, rather than compromising our initial goal of obtaining high-likelihood geodesics, the inclusion of $\diffeoC_{\monotone}$ is expected to enhance it, while yielding more stable geodesics. 

\begin{figure}[h!]
    \centering
    \begin{subfigure}{0.48\linewidth}
        \centering
        \includegraphics[width=\linewidth]{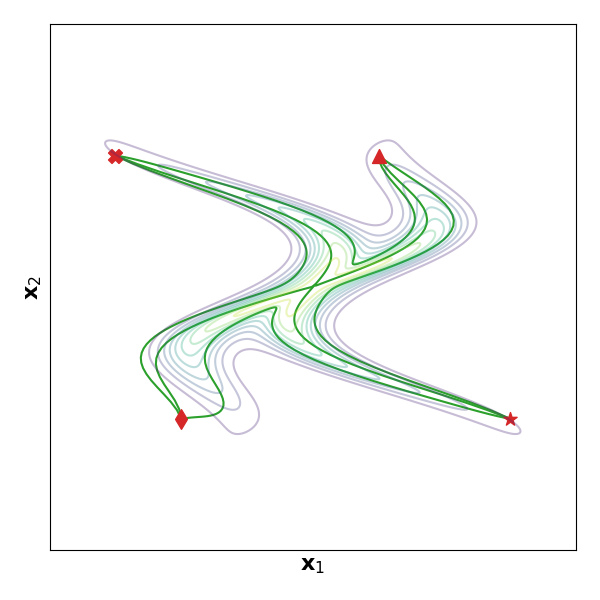}
        \caption{Pullback geodesics under $\diffeo := \diffeoC_{\monotone} \circ \diffeoB_{\radial} \circ \diffeoA$.}
        \label{fig:river-geodesics-schematic-b-curves}
    \end{subfigure}
    \hfill
    \begin{subfigure}{0.48\linewidth}
        \centering
        \includegraphics[width=\linewidth]{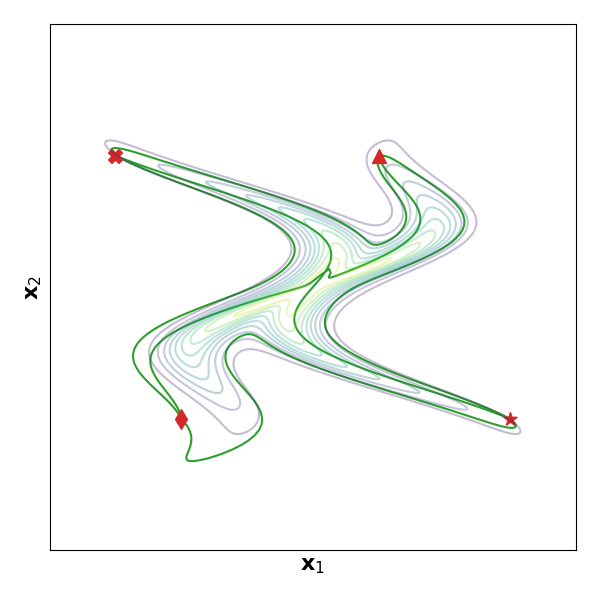}
        \caption{Pullback geodesics under $\diffeo := \diffeoB_{\radial} \circ \diffeoA$.}
        \label{fig:river-geodesics-schematic-a-curves}
    \end{subfigure}
    \caption{Using the default pullback geometry for a data distribution $\density_{\diffeoA, \radial}$ can cause geodesics to swing out in an undesirable way, whereas a modified pullback geometry obtained by composing with $\diffeoC_{\monotone}$ for $\monotone(s) = \log(10s + 1)$ eliminates this behavior.}
    \label{fig:river-geodesics-schematic-curves}
\end{figure}

To make this intuition precise and move toward a general statement, we first require the following lemma, which essentially indicates when incorporating $\diffeoC_{\monotone}$ into the diffeomorphism does not destroy strong convexity.



\begin{lemma}
\label{lem:strong-convexity}
    Let $\diffeoC_\monotone:\Real^\dimInd\to\Real^\dimInd$ be an invertible mapping of the form \eqref{eq:monotone-homeo} generated by a concave strictly increasing function $\monotone:\Real_{> 0}\to \Real_{> 0}$ with $\lim_{s\to 0} \monotone(s)=0$ and $\lim_{s\to 0} \monotone'(s)>0$.
    
    Then, for any vectors $\Vector\neq \VectorB\in \Real^\dimInd$, the mapping
    \begin{equation}
        t\mapsto \|\diffeoC_\monotone^{-1}((1 - t)\diffeoC_\monotone(\Vector) + t \diffeoC_\monotone(\VectorB))\|_2^2
    \end{equation}
    is strongly convex.
\end{lemma}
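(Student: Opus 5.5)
The plan is to reduce the statement to a one-dimensional convexity argument for the radial profile. Write $\mathbf{z}(t) := (1-t)\diffeoC_\monotone(\Vector) + t\diffeoC_\monotone(\VectorB)$, which is affine in $t$. Since $\diffeoC_\monotone$ only rescales the norm, its inverse acts as $\VectorB \mapsto \monotone^{-1}(\|\VectorB\|_2)\,\VectorB/\|\VectorB\|_2$, so that $\|\diffeoC_\monotone^{-1}(\VectorB)\|_2 = \monotone^{-1}(\|\VectorB\|_2)$. (This is how I read \eqref{eq:monotone-homeo-inv}, with the inverse function of $\monotone$ rather than its reciprocal; it is what is needed for $\diffeoC_\monotone^{-1}$ to invert \eqref{eq:monotone-homeo}.) Setting $h := \monotone^{-1}$ and extending by $h(0) := 0$, the map in question becomes
\begin{equation*}
f(t) = g(\|\mathbf{z}(t)\|_2), \qquad g(r) := h(r)^2 ,
\end{equation*}
with $g$ defined on $[0,\infty)$. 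Here I use that $\lim_{s\to 0}\monotone(s) = 0$. If $\monotone$ is bounded, $h$ is only defined up to $\sup\monotone$, which I take to be consistent with the range of $\diffeoC_\monotone$.

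Next I collect the properties of $h$. Set $L := \lim_{s\to 0}\monotone'(s) \in (0,\infty)$; I read the hypothesis as saying this limit is finite and positive. Because $\monotone$ is concave, strictly increasing and satisfies $\monotone(0^+) = 0$, its inverse $h$ is convex, strictly increasing and has $h(0) = 0$. Concavity also gives $\monotone' \le L$, hence $h' = 1/\monotone'\circ h \ge 1/L$. I would phrase this with one-sided derivatives, or with difference quotients, so that no smoothness beyond concavity is needed.

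The key step is the decomposition
\begin{equation*}
g(r) - \frac{r^2}{L^2} = \Bigl(h(r) - \frac{r}{L}\Bigr)\Bigl(h(r) + \frac{r}{L}\Bigr).
\end{equation*}
Both factors are nonnegative, nondecreasing and convex on $[0,\infty)$. For the first factor this follows from $h' \ge 1/L$ together with $h(0)=0$ and the convexity of $h$. A product of two nonnegative, nondecreasing, convex functions is again convex and nondecreasing, which I verify directly from the convexity inequality. Hence $\tilde g(r) := g(r) - r^2/L^2$ is convex and nondecreasing. Since $t\mapsto\|\mathbf{z}(t)\|_2$ is convex, the composition $t\mapsto\tilde g(\|\mathbf{z}(t)\|_2)$ is convex.

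It then remains to write
\begin{equation*}
f(t) = \tilde g(\|\mathbf{z}(t)\|_2) + \frac{1}{L^2}\|\mathbf{z}(t)\|_2^2 .
\end{equation*}
The last term is a quadratic in $t$ with second derivative $2\|\diffeoC_\monotone(\VectorB) - \diffeoC_\monotone(\Vector)\|_2^2/L^2$. This is strictly positive because $\diffeoC_\monotone$ is injective by \Cref{prop:rho-v-diffeos} and $\Vector \ne \VectorB$. So $f$ is a convex function plus a strongly convex one, and is therefore strongly convex.

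I expect the main obstacle to be the decomposition step: isolating a uniform quadratic lower part, which works only because $h' \ge 1/L$ holds globally. This is exactly where $L < \infty$ enters. If $\monotone'(0^+) = \infty$, then $h'(0) = 0$, and a segment passing through the origin could lose strong convexity. A secondary nuisance is the non-smoothness of $\|\mathbf{z}(t)\|_2$ at $\mathbf{z}(t) = \mathbf{0}$, which the composition argument sidesteps because it never differentiates the norm.
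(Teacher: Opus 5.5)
Your proof is correct, and it takes a genuinely different route from the paper's. In your notation, the paper writes $f=g\circ r$ with $g=(\monotone^{-1})^2$ and $r(t)=\|\mathbf{z}(t)\|_2$. It obtains $g'>0$ and $g''>0$ from the inverse-function formulas for $(\monotone^{-1})'$ and $(\monotone^{-1})''$, which implicitly requires $\monotone\in C^2$. It then concludes from $f''=g''(r)(r')^2+g'(r)\,r''$.

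As written, that argument has three weaknesses:
\begin{itemize}
\item It asserts that $r''$ is a positive constant, which conflates $r$ with $r^2$. Only $r^2$ is quadratic, and $r''\equiv 0$ for segments lying on a line through the origin, where positivity must come from the $g''(r)(r')^2$ term instead.
\item The origin-crossing case is dispatched by an informal smooth-extension remark, and only pointwise positivity of $f''$ is claimed.
\item The hypothesis on $\lim_{s\to 0}\monotone'(s)$ is never actually used.
\end{itemize}

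Your splitting $g(r)=\bigl(h(r)-r/L\bigr)\bigl(h(r)+r/L\bigr)+r^2/L^2$ avoids all of this. It never differentiates the norm and needs only concavity of $\monotone$. It treats segments through the origin, or along rays from it, uniformly. It also produces the explicit modulus $2\|\diffeoC_\monotone(\VectorB)-\diffeoC_\monotone(\Vector)\|_2^2/L^2$.

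It also shows that $L<\infty$ is genuinely required, as you suspected. Take $\monotone(s)=\sqrt{s}$, which satisfies $\lim_{s\to 0}\monotone'(s)=+\infty>0$, and $\VectorB=-\Vector$ with $\|\Vector\|_2=1$. Then $f(t)=(1-2t)^4$, which fails to be strongly convex at $t=1/2$. So your reading of the hypothesis is the one under which the lemma is true.

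The paper's calculus route can be repaired, for instance by composing the convex function $q\mapsto \monotone^{-1}(\sqrt{q})^2$ with the quadratic $q(t)=\|\mathbf{z}(t)\|_2^2$. The repaired version rests on the same two bounds that drive your argument: $\monotone^{-1}(r)/r\ge 1/L$ and $(\monotone^{-1})'(r)\ge 1/L$. Its only extra benefit is a pointwise formula showing how $\monotone''\le 0$ enters.

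For your bounded-$\monotone$ caveat, note that $\|\mathbf{z}(t)\|_2\le\max\{\monotone(\|\Vector\|_2),\monotone(\|\VectorB\|_2)\}<\sup\monotone$. Hence the whole segment stays in the domain of $h$.
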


With all these components in place, we can now state the main result, which formalizes the behavior illustrated in \Cref{fig:river-geodesics-schematic}: geodesics under a pullback geometry that is possibly modified by a diffeomorphism $\diffeoC_{\monotone}$ with additional structure on $\monotone$ indeed pass through high-likelihood regions in general.

\begin{theorem}[high-likelihood geodesics]
\label{thm:geodesic-convexity-starflows}
    Let $\diffeoA: \Real^\dimInd \to \Real^\dimInd$ be a smooth diffeomorphism with constant Jacobian determinant, let $\radial:\Sphere^{\dimInd-1}\to\Real_{>0}$ be a radial function, and let  $\monotone:\Real_{> 0}\to \Real_{> 0}$ be a concave strictly increasing function with $\lim_{s\to 0} \monotone(s)=0$ and $\lim_{s\to 0} \monotone'(s)>0$. In addition, let $\density_{\diffeoA,\radial}$ be a probability density of the form \eqref{eq:starflow-density} generated by $\diffeoA$ and $\radial$, and let $\diffeoB_\radial:\Real^\dimInd\to\Real^\dimInd$ and $\diffeoC_\monotone:\Real^\dimInd\to\Real^\dimInd$ be invertible mappings of the form \eqref{eq:radial-homeo} and \eqref{eq:monotone-homeo} generated by $\radial$ and $\monotone$, respectively.

Then, for any vectors $\Vector\neq \VectorB\in \Real^\dimInd$ and $\diffeo := \diffeoC_\monotone \circ \diffeoB_\radial \circ \diffeoA$, mapping

\begin{equation}
    t \mapsto - \log \bigl( \density_{\diffeoA, \radial} (\gamma_{\mathbf{x}, \mathbf{y}}^{\diffeo}(t) ) \bigr), \quad t \in[0,1]
    \label{eq:thm1-neg-log-likelihood-geo}
\end{equation}

is strongly convex.
    
\end{theorem}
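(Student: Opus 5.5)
The plan is to reduce the negative log-likelihood along the geodesic to the squared norm appearing in \Cref{lem:strong-convexity}, up to an additive constant.

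\textbf{Step 1: rewrite the exponent.} Since $\diffeoA$ has constant Jacobian determinant, the factor $|\det D_\Vector \diffeoA|$ in \eqref{eq:starflow-density} is constant, and so is the normalization. Hence
\begin{equation*}
-\log \density_{\diffeoA,\radial}(\Vector) = C + \tfrac12 \radial\bigl(\tfrac{\diffeoA(\Vector)}{\|\diffeoA(\Vector)\|_2}\bigr)^{-2}\|\diffeoA(\Vector)\|_2^2
\end{equation*}
for some constant $C$. The direction of $\diffeoB_\radial(\VectorC)$ coincides with that of $\VectorC$, because $\radial>0$ only rescales. Therefore the exponent equals $\tfrac12\|\diffeoB_\radial(\diffeoA(\Vector))\|_2^2$, and we get
\begin{equation*}
-\log \density_{\diffeoA,\radial}(\Vector) = C + \tfrac12\|(\diffeoB_\radial\circ\diffeoA)(\Vector)\|_2^2 .
\end{equation*}
At the origin both sides vanish consistently, so the identity holds for all $\Vector\in\Real^\dimInd$.

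\textbf{Step 2: push through the geodesic formula.} By \eqref{eq:thm-geodesic-remetrized} with $\diffeo=\diffeoC_\monotone\circ\diffeoB_\radial\circ\diffeoA$,
\begin{equation*}
(\diffeoB_\radial\circ\diffeoA)(\geodesic^\diffeo_{\Vector,\VectorB}(t)) = \diffeoC_\monotone^{-1}\bigl((1-t)\diffeoC_\monotone(\VectorA)+t\,\diffeoC_\monotone(\VectorA')\bigr),
\end{equation*}
where $\VectorA:=\diffeoB_\radial(\diffeoA(\Vector))$ and $\VectorA':=\diffeoB_\radial(\diffeoA(\VectorB))$. Substituting into Step 1, the map \eqref{eq:thm1-neg-log-likelihood-geo} equals $C$ plus one half of the map in \Cref{lem:strong-convexity} evaluated at the pair $(\VectorA,\VectorA')$.

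\textbf{Step 3: apply the lemma.} By \Cref{prop:rho-v-diffeos}, $\diffeoB_\radial$ is invertible, and $\diffeoA$ is a diffeomorphism. So $\Vector\neq\VectorB$ implies $\VectorA\neq\VectorA'$, and \Cref{lem:strong-convexity} gives strong convexity. Adding a constant and scaling by $\tfrac12$ preserve strong convexity, which concludes the argument.

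\textbf{Main obstacle.} The substantive difficulty lies entirely inside \Cref{lem:strong-convexity}, which we take as given. Within this proof, the only delicate points are the well-definedness of the geodesic formula \eqref{eq:thm-geodesic-remetrized} and the degenerate case where the segment passes through $\mathbf 0$. There, $\diffeo$ is only a homeomorphism: $\radial$ need not be smooth, and $\diffeoC_\monotone$ is not smooth at the origin. I would handle this by defining the geodesic through the closed form \eqref{eq:thm-geodesic-remetrized}, as the paper implicitly does, and by noting that Step 1 holds pointwise at the origin as well.
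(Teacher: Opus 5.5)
Your proposal is correct and follows essentially the same route as the paper's proof. You absorb the constant Jacobian and the normalization into a constant, identify the exponent with $\tfrac12\|\diffeoB_\radial(\diffeoA(\cdot))\|_2^2$, push the closed-form geodesic through to land exactly on the expression of \Cref{lem:strong-convexity} at the points $\diffeoB_\radial(\diffeoA(\Vector))$ and $\diffeoB_\radial(\diffeoA(\VectorB))$, and invoke the lemma — with your explicit check that these two points are distinct (by injectivity of $\diffeoB_\radial\circ\diffeoA$) being a hypothesis of the lemma that the paper's proof uses without stating.
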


\begin{proof}[Proof sketch]
    We rewrite the mapping \eqref{eq:thm1-neg-log-likelihood-geo} to the form  
\[
t \mapsto \tfrac12\bigl\|\diffeoC_\monotone^{-1}\bigl((1-t)\,\diffeoC_\monotone(\Vector') + t\,\diffeoC_\monotone(\VectorB')\bigr)\bigr\|_2^2 + \text{const},
\]
for suitable \(\Vector',\VectorB'\), and then invoke \Cref{lem:strong-convexity}, which shows that this squared norm is strongly convex in \(t\) on \([0,1]\).
\end{proof}

\begin{remark}
    It is worth noting that \Cref{thm:geodesic-convexity-starflows} strictly generalizes the deformed Gaussian setting of \cite[Thm.~3.3]{diepeveen2025scorebased}, which is recovered when having $\radial(\sVector) = 1$ -- so that the deformed star distribution \eqref{eq:starflow-density} reduces to a deformed Gaussian distribution \eqref{eq:deformed-gaussian-density} -- and choosing $\monotone (s) = s$ -- so that the pullback geometry solely comes from the diffeomorphism $\diffeoA$.
\end{remark}

\subsection{Iso-Riemannian geometry and interpretability}

Before considering the pullback construction complete, it is important to note that, although the geodesics have the desired shape, their speed is not constant in the Euclidean ($\ell^2$) sense, mostly due to the diffeomorphisms $\diffeoB_{\radial}$ and $\diffeoC_{\monotone}$ having non-constant Jacobian determinants. This leads to misleading interpolations -- an effect predicted by theory \cite[Thm.~3.4]{diepeveen2024pulling} and clearly visible in \Cref{fig:river-geodesics-schematic-b,fig:river-geodesics-schematic-a}, where time-equidistant points along the geodesics can cluster towards the center of the distribution, even more so after introducing $\diffeoC_{\monotone}$. As a result, one obtains the erroneous impression that most of the data between the endpoints looks very similar.

\begin{figure}[h!]
    \centering
    \begin{subfigure}{0.48\linewidth}
        \centering
        \includegraphics[width=\linewidth]{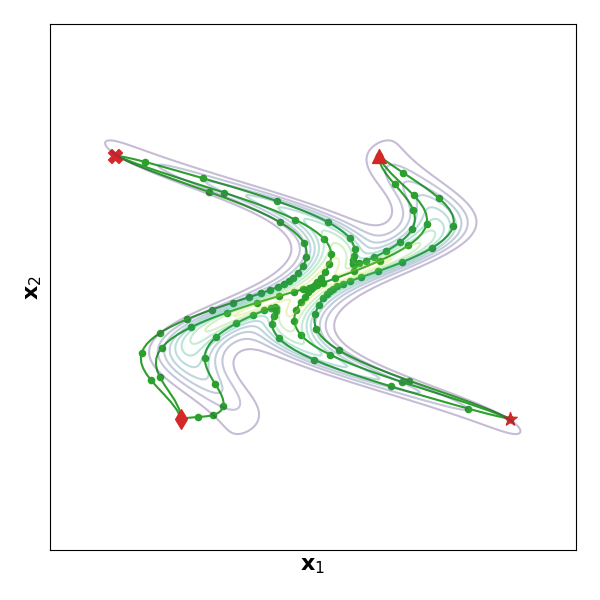}
        \caption{Pullback geodesics under $\diffeo := \diffeoC_{\monotone} \circ \diffeoB_{\radial} \circ \diffeoA$.}
        \label{fig:river-geodesics-schematic-b}
    \end{subfigure}
    \hfill
    \begin{subfigure}{0.48\linewidth}
        \centering
        \includegraphics[width=\linewidth]{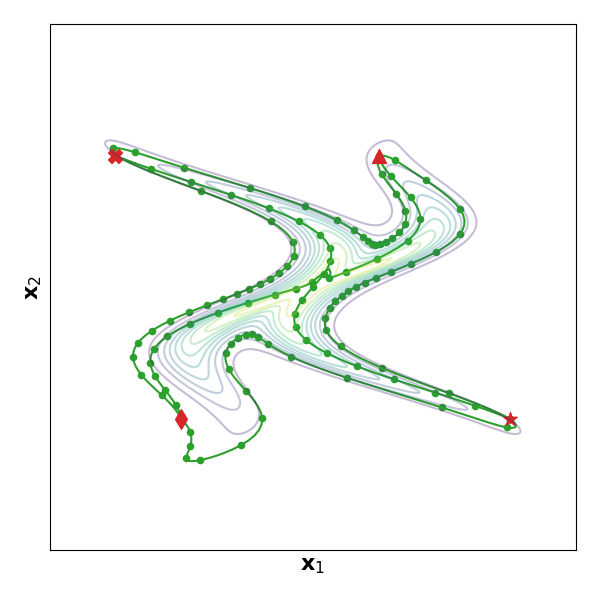}
        \caption{Pullback geodesics under $\diffeo := \diffeoB_{\radial} \circ \diffeoA$.}
        \label{fig:river-geodesics-schematic-a}
    \end{subfigure}\\
    \begin{subfigure}{0.48\linewidth}
        \centering
        \includegraphics[width=\linewidth]{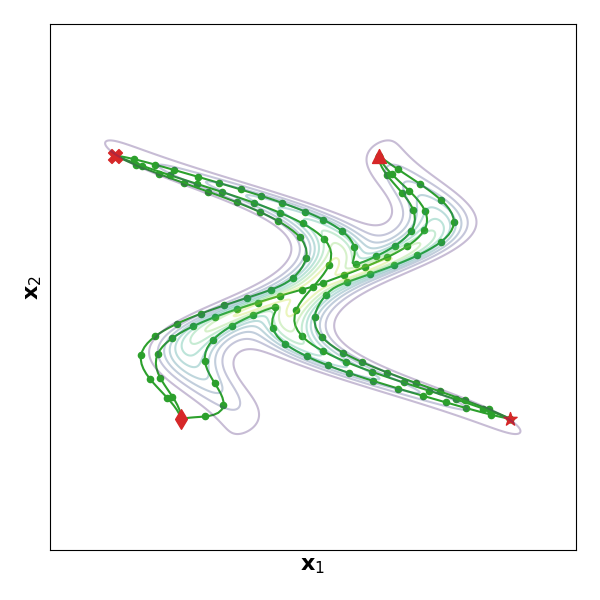}
        \caption{Iso-pullback geodesics under $\diffeo := \diffeoC_{\monotone} \circ \diffeoB_{\radial} \circ \diffeoA$.}
        \label{fig:river-geodesics-schematic-d}
    \end{subfigure}
    \hfill
    \begin{subfigure}{0.48\linewidth}
        \centering
        \includegraphics[width=\linewidth]{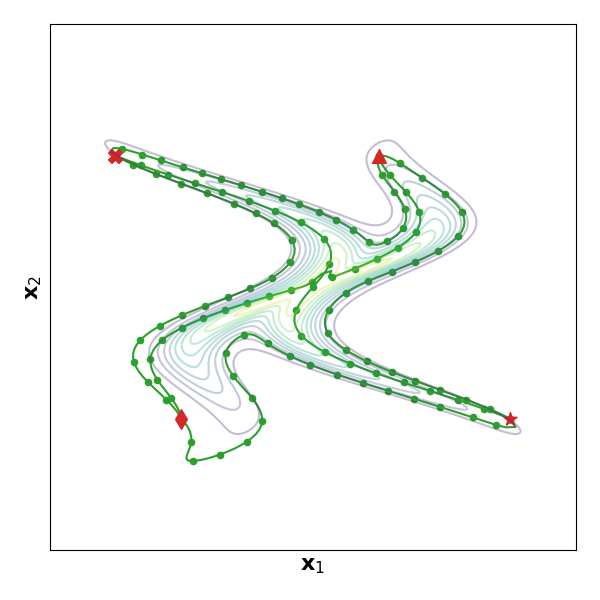}
        \caption{Iso-pullback geodesics under $\diffeo := \diffeoB_{\radial} \circ \diffeoA$.}
        \label{fig:river-geodesics-schematic-c}
    \end{subfigure}
    \caption{Using either the default pullback geometry or its modified variant for a data distribution $\density_{\diffeoA,\radial}$ produces geodesics that can spend most of their time near the center of the deformed star (top), whereas their iso-Riemannian counterparts remove this effect (bottom).}
    \label{fig:river-geodesics-schematic}
\end{figure}

This issue was anticipated by \cite{diepeveen2025manifold}, in which the authors proposed iso-Riemannian geometry as a remedy. For our purposes, it suffices to recall that iso-Riemannian geometry constructs the fundamental manifold mappings using the iso-connection rather than the Levi-Civita connection. In practice, this implies that geodesics retain the same shape as those induced by the Levi-Civita connection but are reparametrized to have constant $\ell^2$-speed, as illustrated in \Cref{fig:river-geodesics-schematic-d,fig:river-geodesics-schematic-c}. In particular, writing $\geodesic^{\diffeo,\iso}_{\Vector,\VectorB}:[0,1]\to \Real^\dimInd$ for iso-geodesics under $(\Real^\dimInd, (\cdot, \cdot)^{\diffeo})$, we have
\begin{equation}
    \geodesic^{\diffeo,\iso}_{\Vector,\VectorB}(t) = \geodesic^{\diffeo}_{\Vector,\VectorB}(\timechange_{\Vector,\VectorB}(t)), \quad t \in [0,1],    
    \label{eq:iso-geodesic}
\end{equation}
where $\geodesic^{\diffeo}_{\Vector,\VectorB}$ is the standard pullback geodesic between $\Vector$ and $\VectorB$ induced by $(\Real^{\dimInd}, (\cdot,\cdot)^{\diffeo})$, and where the reparametrization mapping $\timechange_{\Vector,\VectorB} : [0,1] \to [0,1]$ is a diffeomorphism specified via its inverse
\begin{equation}
    \timechange_{\Vector,\VectorB}^{-1}(t') := \frac{\displaystyle\int_0^{t'} \bigl\|\dot{\geodesic}^{\diffeo}_{\Vector,\VectorB}(s)\bigr\|_2 \, ds} {\displaystyle\int_0^1 \bigl\|\dot{\geodesic}^{\diffeo}_{\Vector,\VectorB}(s)\bigr\|_2 \, ds}.
    \label{eq:iso-geodesic-time-change}
\end{equation}

\begin{remark}
    It is worth highlighting that in practice $\timechange_{\Vector,\VectorB}$ is evaluated approximately by replacing $\geodesic^{\diffeo}_{\Vector,\VectorB}$ with a piecewise linear approximation, for which \eqref{eq:iso-geodesic-time-change} can be computed in closed form. See \cite{diepeveen2025manifold} for details.
\end{remark}

\section{Riemannian archetypal projections for deformed star distributions}
\label{sec:rams}

Now that we have an appropriate statistical and geometric framework for a Riemannian version of archetypal analysis (AA) -- a deformed star distribution together with its associated pullback geometry -- we turn to formulating archetypal analysis itself in this setting. Classical AA can be viewed as comprising two components: learning the archetypes and using these archetypes to project and classify the entire data set; in our framework, it will be more natural to tackle the second part first, since the archetypes and the underlying distribution will ultimately be learned jointly in the next section. Accordingly, we assume the data are already endowed with a star-shaped statistical model and a fixed collection of archetypes situated near the “corners” of the deformed star. In what follows, we introduce the Riemannian archetypal mapping (RAM) as the projection of data points onto the manifold defined by geodesic convex combinations of the archetypes. To address the nonconvexity of the RAM problem, we propose a convex relaxation that provides a strong initialization; solving the resulting RAM then yields a “denoised” approximation on this manifold. Finally, to extract an interpretable weight vector encoding the corresponding archetypal mixture, we discuss the necessity of transitioning to the iso-Riemannian geometry induced by the pullback metric and propose a simple way of doing so. As before, we defer all proofs to \Cref{app:rams}.

\subsection{The Riemannian archetypal mapping}

Before turning to classification, we first seek to project data onto the set of convex combinations of the archetypes. To this end, we define the \emph{Riemannian archetypal mapping} (RAM) generated by archetypes $\emVector^{(1)}, \ldots, \emVector^{(\dimIndB)} \in \Real^\dimInd$ and pullback structure $(\Real^\dimInd,(\cdot,\cdot)^{\diffeo})$ -- in practice arising from a diffeomorphism of the form $\diffeo = \diffeoC_{\monotone} \circ \diffeoB_{\rho} \circ \diffeoA$ -- as the mapping $\RAM^\diffeo:\Real^\dimInd\to\Real^\dimInd$ given by
\begin{equation}
    \RAM^\diffeo(\Vector) := \argmin_{\VectorB\in \bar{\manifold}^\diffeo} \|\Vector - \VectorB\|_2^2,
    \label{eq:RAM}
\end{equation}
where 
\begin{equation}
    \bar{\manifold}^\diffeo := \bigl\{ \Vector \in \Real^\dimInd \; \mid \; \Vector = \argmin_{\VectorB\in \Real^\dimInd} \sum_{\sumIndB=1}^\dimIndB \RAMweight_\sumIndB \distance^{\diffeo}_{\Real^\dimInd} (\VectorB, \emVector^{(\sumIndB)})^2 \text{ for some } \RAMweight \in \Simplex^\dimIndB\bigr\},
    \label{eq:RAM-manifold}
\end{equation}
and where $\Simplex^\dimIndB\subset \Real^\dimIndB$ is the unit simplex in $\Real^\dimIndB$.

The RAM can thus be viewed as the $\ell^2$-projection onto the constraint set $\bar{\manifold}^{\diffeo}$, consisting of all weighted Riemannian barycentres (with respect to $(\Real^{\dimInd}, (\cdot,\cdot)^{\diffeo})$) of the archetypes. This choice is natural: it reduces to the classical interpretation of linear AA in the Euclidean setting, while in our more general framework every geodesic segment between a pair of archetypes is contained in $\bar{\manifold}^{\diffeo}$. In addition, for points that already live in the constraint set, the RAM reduces to the identity mapping, i.e., $\RAM^\diffeo (\Vector) = \Vector$ for any $\Vector \in \bar{\manifold}^\diffeo$.

However, in its present formulation, computing the RAM for general inputs appears intractable. The first step toward making \eqref{eq:RAM} more manageable is to simplify the constraint set, which turns out to be geodesically convex submanifold with corners, i.e., an embedded polytope-like set shaped by $\diffeo$, and 
admits a more convenient representation.

\begin{theorem}[Properties of $\bar{\manifold}^\diffeo$]
\label{thm:ram-manifold-properties}
    Let $\diffeo:\Real^\dimInd\to\Real^\dimInd$ be a smooth diffeomorphism and let $\emVector^{(1)}, \ldots, \emVector^{(\dimIndB)} \in \Real^\dimInd$ be any $\dimIndB$ vectors.
    
    Then, the set $\bar{\manifold}^\diffeo$ defined in \eqref{eq:RAM-manifold}
    \begin{enumerate}[label=(\roman*)]
        \item satisfies
        \begin{equation}
            \bar{\manifold}^\diffeo = \{\Vector \in \Real^\dimInd \; \mid \; \Vector = \diffeo^{-1} \Bigl(\sum_{\sumIndB=1}^\dimIndB \RAMweight_\sumIndB  \diffeo(\emVector^{(\sumIndB)})\Bigr) \text{ for some } \RAMweight \in \Simplex^\dimIndB\}.
        \end{equation}
        \item is a strongly geodesically convex set of $(\mathbb{R}^{\dimInd},(\cdot,\cdot)^{\diffeo})$.
        \item is a smooth manifold with corners, whose interior has dimension
        \begin{equation}
            \dim( \operatorname{int}(\bar{\manifold}^\diffeo)) = \operatorname{rank} ([ \diffeo(\emVector^{(1)}) - \diffeo(\emVector^{(\dimIndB)}), \ldots, \diffeo(\emVector^{(\dimIndB-1)}) - \diffeo(\emVector^{(\dimIndB)})]) \leq \dimIndB-1.
        \end{equation}
    \end{enumerate}
\end{theorem}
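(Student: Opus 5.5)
The whole argument reduces to the Euclidean picture in the chart $\diffeo$. The key identity is the closed-form distance \eqref{eq:thm-distance-remetrized}, $\distance^{\diffeo}_{\Real^\dimInd}(\VectorB,\emVector^{(\sumIndB)}) = \|\diffeo(\VectorB)-\diffeo(\emVector^{(\sumIndB)})\|_2$. With it, each of the three parts becomes a statement about the convex polytope
\[
P := \Bigl\{\sum_{\sumIndB=1}^\dimIndB \RAMweight_\sumIndB \diffeo(\emVector^{(\sumIndB)}) \;\Bigm|\; \RAMweight\in\Simplex^\dimIndB\Bigr\} \subset \Real^\dimInd,
\]
and I will show $\bar{\manifold}^\diffeo = \diffeo^{-1}(P)$.

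For (i), fix $\RAMweight\in\Simplex^\dimIndB$ and substitute $\mathbf{z} = \diffeo(\VectorB)$, which is a bijection of $\Real^\dimInd$. The objective in \eqref{eq:RAM-manifold} becomes $\sum_\sumIndB \RAMweight_\sumIndB \|\mathbf{z}-\diffeo(\emVector^{(\sumIndB)})\|_2^2$. Since $\sum_\sumIndB\RAMweight_\sumIndB=1$, this is a strictly convex quadratic with identity Hessian. Its unique minimiser is $\mathbf{z}^* = \sum_\sumIndB \RAMweight_\sumIndB\diffeo(\emVector^{(\sumIndB)})$, obtained by setting the gradient to zero. This is exactly the weighted version of \eqref{eq:thm-bary-remetrized}. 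The argmin is therefore a single point, $\diffeo^{-1}(\mathbf{z}^*)$, which gives (i).

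For (ii), take $\Vector,\VectorB\in\bar{\manifold}^\diffeo$, so $\diffeo(\Vector),\diffeo(\VectorB)\in P$. By \eqref{eq:thm-geodesic-remetrized}, the pullback geodesic is $\diffeo^{-1}$ of the segment between $\diffeo(\Vector)$ and $\diffeo(\VectorB)$, and this segment lies in $P$ because $P$ is convex. Hence the geodesic lies in $\bar{\manifold}^\diffeo$.

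Uniqueness needs an argument, since the paper's definition of geodesic is a length-minimising curve. I would argue as follows. $\diffeo$ is an isometry from $(\Real^\dimInd,(\cdot,\cdot)^\diffeo)$ onto Euclidean space, so it maps length-minimising curves to length-minimising curves, up to reparametrisation. In Euclidean space the only minimiser is the straight segment. Pulling back, the minimiser in $\Real^\dimInd$ with the pullback metric is unique, which gives strong convexity.

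For (iii), I view $\bar{\manifold}^\diffeo$ as the image of $P$ under the diffeomorphism $\diffeo^{-1}$. A compact convex polytope is a manifold with corners inside its affine hull. I will show this by choosing affine coordinates on $\operatorname{aff}(P)$, where $P$ is cut out by finitely many linear inequalities with full-dimensional interior. At a vertex the local model is a cone, not necessarily simplicial. I expect this to be the \textbf{main technical obstacle}: a non-simple vertex is not locally a quadrant, so strictly speaking it is not a manifold-with-corners chart. I would either assume the archetype images are affinely independent, or use the weaker notion of manifold with corners in the sense of stratified or polyhedral sets, and note this caveat.

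The diffeomorphism $\diffeo^{-1}$ restricted to the affine subspace $\operatorname{aff}(P)$ is a smooth embedding. So $\bar{\manifold}^\diffeo$ inherits the corner structure, with interior dimension $\dim\operatorname{aff}(P)$. Finally, $\dim\operatorname{aff}(P)$ equals the rank of the matrix of differences $\diffeo(\emVector^{(\sumIndB)})-\diffeo(\emVector^{(\dimIndB)})$, $\sumIndB<\dimIndB$. That matrix has $\dimIndB-1$ columns, which gives the bound $\leq\dimIndB-1$.
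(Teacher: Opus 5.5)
Your proposal is correct and follows the paper's route. Everything is transported from the polytope $P=\operatorname{conv}\{\diffeo(\emVector^{(1)}),\ldots,\diffeo(\emVector^{(\dimIndB)})\}$ through $\diffeo$. Your argument for (ii) is the paper's linear interpolation of weights, phrased as convexity of $P$. For (iii), the paper likewise uses $\diffeo$ restricted to $\bar{\manifold}^\diffeo$ as a chart onto $P$.

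The small differences favour you:
\begin{itemize}
\item For (i), the paper cites a lemma giving strong geodesic convexity of $\distance^{\diffeo}_{\Real^\dimInd}(\cdot,\VectorB)^2$. It then verifies the Riemannian first-order condition $\sum_{\sumIndB}\RAMweight_\sumIndB\log^\diffeo_{\Vector^*}(\emVector^{(\sumIndB)})=\mathbf{0}$ at the candidate $\Vector^*$. Your substitution $\mathbf{z}=\diffeo(\VectorB)$ gets existence and uniqueness of the minimiser from an elementary Euclidean quadratic instead.
\item For (ii), the paper never argues uniqueness of geodesics; it is only implicit in the closed-form geodesic formula. Your isometry argument supplies the ``strong'' part.
\end{itemize}

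Your caveat in (iii) is genuine and not an artefact of your approach; the paper simply asserts that the polytope is a smooth manifold with corners. With charts modelled on $[0,\infty)^k\times\Real^{m-k}$, this fails at non-simple vertices. For example, take the apex of a square pyramid formed by five archetype images spanning a $3$-dimensional affine hull. The differential of a corner chart there would be a linear isomorphism mapping a tangent cone with four extreme rays onto an orthant with three, which is impossible. So (iii) holds literally only when $P$ is a simple polytope, for instance when the $\diffeo(\emVector^{(\sumIndB)})$ are affinely independent or the rank is at most $2$. Otherwise it holds only in a generalised (polyhedral-corner) sense. The dimension formula is unaffected either way.
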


\begin{proof}[Proof sketch]

For (i), each term \(\Vector \mapsto \distance_{\Real^\dimInd}^\diffeo(\Vector,\VectorB)^2\) is strongly geodesically convex since it is just \(\|\diffeo(\Vector)-\diffeo(\VectorB)\|_2^2\) in the embedding, so the weighted sum has a unique minimizer, and checking the first-order optimality conditions shows that this minimizer is exactly \(\diffeo^{-1}\bigl(\sum_{\sumIndB} \RAMweight_{\sumIndB}\diffeo(\emVector^{(\sumIndB)})\bigr)\) for any \(\RAMweight\in\Delta_{\dimIndB}\). 

For (ii), using the representation from (i) we write \(\Vector,\VectorB\) as weighted barycenters in the \(\diffeo\)-embedding and interpolate their weights linearly in \(t\); the induced curve is both a geodesic by the explicit formula \(\geodesic^\diffeo_{\Vector,\VectorB}(t)=\diffeo^{-1}((1-t)\diffeo(\Vector)+t\diffeo(\VectorB))\) and remains in \(\bar{\manifold}^\diffeo\), which shows geodesic convexity. 

For (iii), the map \(\diffeo'\colon\bar{\manifold}^\diffeo\to\operatorname{conv}\{\diffeo(\emVector^{(1)}),\ldots,\diffeo(\emVector^{(\dimIndB)})\}\), \(\diffeo'(\Vector)=\diffeo(\Vector)\), is a diffeomorphism, so \(\bar{\manifold}^\diffeo\) is a smooth manifold with corners whose interior dimension equals that of the convex polytope, namely the rank of the difference matrix of the embedded archetypes, which is at most \(\dimIndB-1\).

\end{proof}

Especially the first property in \Cref{thm:ram-manifold-properties} is very useful in terms of evaluating the RAM. That is, it allows us to rephrase the optimization problem \eqref{eq:RAM} as a problem over a simplex.

\begin{theorem}[Equivalent formulation of $\RAM^\diffeo$]
\label{thm:ram-mapping-equivalence}
    Let $\diffeo:\Real^\dimInd\to\Real^\dimInd$ be a smooth diffeomorphism and let $\emVector^{(1)}, \ldots, \emVector^{(\dimIndB)} \in \Real^\dimInd$ be any $\dimIndB$ vectors. 
    
    Then, the Riemannian archetypal mapping $\RAM^\diffeo:\Real^\dimInd\to\Real^\dimInd$ generated by $\emVector^{(1)}, \ldots, \emVector^{(\dimIndB)}$ and $(\Real^\dimInd,(\cdot,\cdot)^{\diffeo})$ satisfies
    \begin{equation}
        \RAM^\diffeo(\Vector) = \diffeo^{-1} \Bigl(\sum_{\sumIndB=1}^\dimIndB \RAMweight_\sumIndB^*  \diffeo(\emVector^{(\sumIndB)})\Bigr), \quad \text{where } \RAMweight^* \in \argmin_{\RAMweight \in \Simplex^\dimIndB} \|\Vector - \diffeo^{-1} \Bigl(\sum_{\sumIndB=1}^\dimIndB \RAMweight_\sumIndB  \diffeo(\emVector^{(\sumIndB)})\Bigr)\|_2^2.
        \label{eq:RAM-rewrite}
    \end{equation}
\end{theorem}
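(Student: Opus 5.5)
The plan is to reduce the projection problem \eqref{eq:RAM} to a reparametrized problem over the simplex, using the characterization of $\bar{\manifold}^\diffeo$ in \Cref{thm:ram-manifold-properties}(i). Define the map $\Phi:\Simplex^\dimIndB\to\Real^\dimInd$ by $\Phi(\RAMweight) := \diffeo^{-1}\bigl(\sum_{\sumIndB=1}^\dimIndB \RAMweight_\sumIndB \diffeo(\emVector^{(\sumIndB)})\bigr)$. By \Cref{thm:ram-manifold-properties}(i), $\bar{\manifold}^\diffeo = \Phi(\Simplex^\dimIndB)$, i.e., $\Phi$ is a surjection onto the constraint set. The objective in \eqref{eq:RAM} is then $f(\VectorB) := \|\Vector-\VectorB\|_2^2$ restricted to $\Phi(\Simplex^\dimIndB)$, and the objective in \eqref{eq:RAM-rewrite} is $f\circ\Phi$ on $\Simplex^\dimIndB$.

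First I will establish existence. $\Simplex^\dimIndB$ is compact and $\Phi$ is continuous, since $\diffeo^{-1}$ is smooth and the inner map is affine. Hence $\bar{\manifold}^\diffeo$ is compact and $f\circ\Phi$ attains its minimum at some $\RAMweight^*$. This also shows that the $\argmin$ in \eqref{eq:RAM} is nonempty.

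Next comes the elementary reparametrization argument. Let $\RAMweight^*$ minimize $f\circ\Phi$ and take any $\VectorB\in\bar{\manifold}^\diffeo$. Write $\VectorB=\Phi(\RAMweight)$ for some $\RAMweight\in\Simplex^\dimIndB$, which is possible by surjectivity. Then $f(\Phi(\RAMweight^*))\le f(\Phi(\RAMweight)) = f(\VectorB)$, so $\Phi(\RAMweight^*)$ is a minimizer of \eqref{eq:RAM}. Conversely, if $\VectorB^*$ minimizes \eqref{eq:RAM}, any preimage $\RAMweight$ with $\Phi(\RAMweight)=\VectorB^*$ minimizes $f\circ\Phi$. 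So the two optimal values coincide, and the minimizers correspond under $\Phi$.

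The step I expect to need care is the well-definedness of $\RAM^\diffeo$ as a map, i.e., uniqueness of the projection. The minimizer $\RAMweight^*$ need not be unique: $\Phi$ fails to be injective when the embedded archetypes are affinely dependent, and $f\circ\Phi$ is non-convex. However, the right-hand side of \eqref{eq:RAM-rewrite} only requires that $\Phi(\RAMweight^*)$ be independent of the choice of minimizer. I will state the equality in the set-valued sense. The set of values $\Phi(\RAMweight^*)$ over all minimizers $\RAMweight^*$ equals $\argmin_{\VectorB\in\bar{\manifold}^\diffeo}\|\Vector-\VectorB\|_2^2$, by the correspondence above. Hence whenever the left side is single-valued, as the notation $\RAM^\diffeo(\Vector)$ implicitly assumes, every choice of $\RAMweight^*$ yields the same point. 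I will not attempt to prove uniqueness of the $\ell^2$-projection in general. Geodesic convexity in \Cref{thm:ram-manifold-properties}(ii) is with respect to the pullback metric, not Euclidean convexity, so uniqueness can fail. The theorem is therefore an identity between the (possibly set-valued) argmin and the image of the simplex minimizers.
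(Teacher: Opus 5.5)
Your proposal is correct and takes the same route as the paper, whose proof is the one-line observation that the claim follows from the representation $\bar{\manifold}^\diffeo = \Phi(\Simplex^\dimIndB)$ in \Cref{thm:ram-manifold-properties}(i), i.e., your reparametrization argument. You also prove existence via compactness of the simplex and read the identity in the set-valued sense, since the $\ell^2$-projection onto $\bar{\manifold}^\diffeo$ need not be unique; the paper leaves both points implicit, and both are sound.
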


\begin{proof}
    The claim follows directly from (i) in \Cref{thm:ram-manifold-properties}.
\end{proof}

At this point, it is important to note that the optimization problem \eqref{eq:RAM-rewrite} is generally non-convex, so computing the RAM requires some care. Before turning to algorithmic aspects, it is helpful to briefly relate this mapping to the specific setting of interest, namely the deformed star distributions $\density_{\diffeoA,\radial}$ under the choice $\diffeo := \diffeoC_{\monotone} \circ \diffeoB_{\rho} \circ \diffeoA$. Beyond providing a convenient reformulation of the RAM, this perspective also allows us to establish its statistical properties with respect to $\density_{\diffeoA,\radial}$. In particular, any RAM-projected vector can be shown to be at least as likely as the corresponding weighted combination of the archetype likelihoods.

\begin{corollary}
    Let $\diffeoA: \Real^\dimInd \to \Real^\dimInd$ be a smooth diffeomorphism with constant Jacobian determinant, let $\radial:\Sphere^{\dimInd-1}\to\Real_{>0}$ be a radial function, and let  $\monotone:\Real_{> 0}\to \Real_{> 0}$ be a concave strictly increasing function with $\lim_{s\to 0} \monotone(s)=0$ and $\lim_{s\to 0} \monotone'(s)>0$. In addition, let $\density_{\diffeoA,\radial}$ be a probability density of the form \eqref{eq:starflow-density} generated by $\diffeoA$ and $\radial$, and let $\diffeoB_\radial:\Real^\dimInd\to\Real^\dimInd$ and $\diffeoC_\monotone:\Real^\dimInd\to\Real^\dimInd$ be invertible mappings of the form \eqref{eq:radial-homeo} and \eqref{eq:monotone-homeo} generated by $\radial$ and $\monotone$, respectively.
    
    Then, for any $\Vector \in \Real^\dimInd$ and $\diffeo := \diffeoC_\monotone \circ \diffeoB_\rho \circ \diffeoA$, the Riemannian archetypal mapping satisfies
        \begin{equation}
            \density_{\diffeoA, \radial} (\RAM^\diffeo(\Vector)) \geq   \exp \Bigl( \sum_{\sumIndB=1}^\dimIndB \RAMweight_\sumIndB^* \log \bigl( \density_{\diffeoA, \radial} (\emVector^\sumIndB) \bigr) \Bigr) , \quad  \text{for any } \RAMweight^* \in \argmin_{\RAMweight \in \Simplex^\dimIndB} \|\Vector - \diffeo^{-1} \Bigl(\sum_{\sumIndB=1}^\dimIndB \RAMweight_\sumIndB  \diffeo(\emVector^{(\sumIndB)})\Bigr)\|_2^2.
        \end{equation}
\end{corollary}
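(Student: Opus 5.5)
The plan is to move everything into the ``flat'' coordinates $\mathbf{z} = \diffeo(\Vector)$ and apply Jensen's inequality to a convex function there. By \Cref{thm:ram-mapping-equivalence}, $\RAM^\diffeo(\Vector) = \diffeo^{-1}(\bar{\mathbf{z}})$ with $\bar{\mathbf{z}} := \sum_{\sumIndB} \RAMweight^*_\sumIndB \mathbf{z}_\sumIndB$ and $\mathbf{z}_\sumIndB := \diffeo(\emVector^{(\sumIndB)})$. The claimed inequality is then equivalent, after taking $-\log$ on both sides, to
\[
-\log \density_{\diffeoA,\radial}\bigl(\diffeo^{-1}(\bar{\mathbf{z}})\bigr) \leq \sum_{\sumIndB=1}^{\dimIndB} \RAMweight^*_\sumIndB \bigl(-\log \density_{\diffeoA,\radial}(\diffeo^{-1}(\mathbf{z}_\sumIndB))\bigr).
\]
So it suffices to show that $f := -\log \density_{\diffeoA,\radial} \circ \diffeo^{-1}$ is convex on $\Real^\dimInd$. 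The minimizer $\RAMweight^*$ plays no special role beyond being an element of the simplex, so the statement holds for any such minimizer.

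First I compute $f$ explicitly. Since $\diffeoA$ has constant Jacobian determinant, \eqref{eq:starflow-density} gives
\[
-\log \density_{\diffeoA,\radial}(\Vector) = \tfrac12 \radial\bigl(\tfrac{\diffeoA(\Vector)}{\|\diffeoA(\Vector)\|_2}\bigr)^{-2}\|\diffeoA(\Vector)\|_2^2 + c = \tfrac12 \|\diffeoB_\radial(\diffeoA(\Vector))\|_2^2 + c,
\]
with $c$ a constant, using the form \eqref{eq:radial-homeo} of $\diffeoB_\radial$. Substituting $\Vector = \diffeo^{-1}(\mathbf{z}) = \diffeoA^{-1}\circ\diffeoB_\radial^{-1}\circ\diffeoC_\monotone^{-1}(\mathbf{z})$ yields
\[
f(\mathbf{z}) = \tfrac12 \|\diffeoC_\monotone^{-1}(\mathbf{z})\|_2^2 + c.
\]
This is exactly the reduction used in the proof sketch of \Cref{thm:geodesic-convexity-starflows}.

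Next I establish convexity of $f$. One route is to invoke \Cref{lem:strong-convexity}: for any $\mathbf{z}\neq\mathbf{z}'$, take $\Vector := \diffeoC_\monotone^{-1}(\mathbf{z})$ and $\VectorB := \diffeoC_\monotone^{-1}(\mathbf{z}')$. The lemma then says $t\mapsto \|\diffeoC_\monotone^{-1}((1-t)\mathbf{z}+t\mathbf{z}')\|_2^2$ is strongly convex, so $f$ is convex along every segment and hence convex on $\Real^\dimInd$. As a direct alternative, $\|\diffeoC_\monotone^{-1}(\mathbf{z})\|_2 = \monotone^{-1}(\|\mathbf{z}\|_2)$, and $\monotone^{-1}$ is convex and increasing because it is the inverse of a concave increasing function. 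So $s\mapsto (\monotone^{-1}(s))^2$ is convex and nondecreasing on $[0,\infty)$, and composing it with the convex function $\|\cdot\|_2$ preserves convexity.

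Finally, Jensen's inequality for convex $f$ with weights $\RAMweight^*\in\Simplex^\dimIndB$ gives $f(\bar{\mathbf{z}}) \le \sum_\sumIndB \RAMweight^*_\sumIndB f(\mathbf{z}_\sumIndB)$, and exponentiating the negated inequality gives the claim. The main point to check carefully is the identification of $-\log\density_{\diffeoA,\radial}$ with $\tfrac12\|\diffeoB_\radial\circ\diffeoA\|_2^2$ plus a constant. A second point is that $\diffeoC_\monotone$ is onto $\Real^\dimInd$, which requires $\monotone$ to be unbounded; this is implicit in \Cref{prop:rho-v-diffeos} and \Cref{lem:strong-convexity}, and I would adopt it as the standing assumption. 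The origin and the degenerate case where all $\mathbf{z}_\sumIndB$ coincide are trivial.
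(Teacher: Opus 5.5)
Your proof is correct and follows the route the paper intends; the corollary is stated there without a written-out proof, but it is meant to follow exactly as you argue. That is, you represent $\RAM^\diffeo(\Vector)$ as $\diffeo^{-1}$ of the $\RAMweight^*$-convex combination of the $\diffeo(\emVector^{(\sumIndB)})$ via \Cref{thm:ram-mapping-equivalence}, reduce $-\log\density_{\diffeoA,\radial}\circ\diffeo^{-1}$ to $\tfrac12\|\diffeoC_\monotone^{-1}(\cdot)\|_2^2$ plus a constant as in the proof of \Cref{thm:geodesic-convexity-starflows}, obtain convexity from \Cref{lem:strong-convexity} (or from your cleaner direct argument composing the convex nondecreasing map $s\mapsto(\monotone^{-1}(s))^2$ with $\|\cdot\|_2$), and conclude with Jensen's inequality; your surjectivity caveat is not even needed, since for bounded $\monotone$ the image of $\diffeoC_\monotone$ is an open ball, which is convex, so Jensen applies there just as well.
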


\begin{remark}
    The reformulation of both the constraint set $\bar{\manifold}^{\diffeo}$ in \Cref{thm:ram-manifold-properties} and the RAM $\RAM^{\diffeo}$ itself in \Cref{thm:ram-mapping-equivalence} crucially relies on working with a Euclidean pullback, for which weighted barycentres admit a closed-form expression -- whereas in general Riemannian settings they must be computed via numerical optimization.
\end{remark}

\subsection{Solving the RAM problem}

The optimization problem \eqref{eq:RAM-rewrite} in \Cref{thm:ram-mapping-equivalence} remains non-convex, so a good initialization strategy is essential. In analogy with the relaxation used for Riemannian autoencoders (RAEs) \cite{diepeveen2024pulling}, we therefore introduce a relaxed variant. Specifically, we define the relaxed Riemannian archetypal mapping (relaxed RAM) generated by archetypes $\emVector^{(1)}, \ldots, \emVector^{(\dimIndB)} \in \Real^\dimInd$ and pullback structure $(\Real^\dimInd,(\cdot,\cdot)^{\diffeo})$ as the mapping $\relRAM^\diffeo:\Real^\dimInd\to\Real^\dimInd$ given by
\begin{equation}
    \relRAM^\diffeo(\Vector) := \argmin_{\VectorB\in \bar{\manifold}^\diffeo} \distance_{\Real^\dimInd}^\diffeo(\Vector, \VectorB)^2.
    \label{eq:relRAM}
\end{equation}
The relaxed RAM can be interpreted as the Riemannian metric projection onto the constraint set $\bar{\manifold}^{\diffeo}$. For diffeomorphisms with non-constant Jacobian determinant -- again due to the presence of $\diffeoB_{\radial}$ and $\diffeoC_{\monotone}$ in a diffeomorphism $\diffeo = \diffeoC_{\monotone} \circ \diffeoB_{\rho} \circ \diffeoA$ we would use in practice -- this generally yields substantially different minimizers than the (original) RAM. Nevertheless, for points already lying in the constraint set the relaxed RAM still reduces to the identity, i.e., $\relRAM^{\diffeo}(\Vector) = \Vector$ for all $\Vector \in \bar{\manifold}^{\diffeo}$.

As before, \Cref{thm:ram-manifold-properties} provides a practical way to evaluate the relaxed RAM, with the key difference that the reformulated optimization problem is now convex.

\begin{theorem}[Equivalent formulation of $\relRAM^\diffeo$]
\label{thm:rel-ram-mapping-equivalence}
    Let $\diffeo:\Real^\dimInd\to\Real^\dimInd$ be a smooth diffeomorphism and let $\emVector^{(1)}, \ldots, \emVector^{(\dimIndB)} \in \Real^\dimInd$ be any $\dimIndB$ vectors. 
    
    Then, the relaxed Riemannian archetypal mapping $\relRAM^\diffeo:\Real^\dimInd\to\Real^\dimInd$ generated by $\emVector^{(1)}, \ldots, \emVector^{(\dimIndB)}$ and $(\Real^\dimInd,(\cdot,\cdot)^{\diffeo})$ satisfies
    \begin{equation}
        \relRAM^\diffeo(\Vector) = \diffeo^{-1} \Bigl(\sum_{\sumIndB=1}^\dimIndB \RAMweight_\sumIndB^*  \diffeo(\emVector^{(\sumIndB)})\Bigr), \quad \text{where } \RAMweight^* \in \argmin_{\RAMweight \in \Simplex^\dimIndB} \|\diffeo(\Vector) - \sum_{\sumIndB=1}^\dimIndB \RAMweight_\sumIndB  \diffeo(\emVector^{(\sumIndB)})\|_2^2,
        \label{eq:relRAM-rewrite}
    \end{equation}
\end{theorem}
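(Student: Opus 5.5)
The plan is to reduce the relaxed RAM problem to a problem over the simplex in two substitutions, mirroring the one-line proof of \Cref{thm:ram-mapping-equivalence}. First I rewrite the objective with the closed-form pullback distance \eqref{eq:thm-distance-remetrized}: for every $\VectorB\in\Real^\dimInd$,
\[
\distance^\diffeo_{\Real^\dimInd}(\Vector,\VectorB)^2=\|\diffeo(\Vector)-\diffeo(\VectorB)\|_2^2 .
\]
Next I substitute the parametrization of the constraint set from (i) of \Cref{thm:ram-manifold-properties}. Every $\VectorB\in\bar{\manifold}^\diffeo$ can be written as $\VectorB=\diffeo^{-1}\bigl(\sum_{\sumIndB}\RAMweight_\sumIndB\diffeo(\emVector^{(\sumIndB)})\bigr)$ for some $\RAMweight\in\Simplex^\dimIndB$, and conversely every such expression lies in $\bar{\manifold}^\diffeo$. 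Under this map the objective becomes
\[
\Bigl\|\diffeo(\Vector)-\sum_{\sumIndB=1}^\dimIndB\RAMweight_\sumIndB\diffeo(\emVector^{(\sumIndB)})\Bigr\|_2^2 ,
\]
because $\diffeo\circ\diffeo^{-1}=\mathrm{id}$.

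The surjective map $\RAMweight\mapsto\diffeo^{-1}\bigl(\sum_{\sumIndB}\RAMweight_\sumIndB\diffeo(\emVector^{(\sumIndB)})\bigr)$ from $\Simplex^\dimIndB$ onto $\bar{\manifold}^\diffeo$ pulls the objective back exactly. So minimizing over $\VectorB\in\bar{\manifold}^\diffeo$ is the same as minimizing over $\RAMweight\in\Simplex^\dimIndB$, and the minimizers correspond under this map. This step holds even though the map need not be injective; the archetype images may be affinely dependent, as reflected in the rank statement (iii). Concretely, if $\RAMweight^*$ is a minimizer over the simplex, its image has objective value no larger than the value at any $\VectorB\in\bar{\manifold}^\diffeo$, since each such $\VectorB$ is the image of some $\RAMweight$. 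Hence the image of $\RAMweight^*$ is a minimizer of \eqref{eq:relRAM}.

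The remaining points are existence and well-definedness. The reduced problem minimizes a continuous convex quadratic over a compact set, so a minimizer $\RAMweight^*$ exists. The point $\sum_{\sumIndB}\RAMweight^*_\sumIndB\diffeo(\emVector^{(\sumIndB)})$ is the Euclidean projection of $\diffeo(\Vector)$ onto the polytope $\operatorname{conv}\{\diffeo(\emVector^{(\sumIndB)})\}$. Projection onto a closed convex set is unique, so every minimizer $\RAMweight^*$ yields the same point in the $\diffeo$-embedding. Applying $\diffeo^{-1}$ then gives the same $\relRAM^\diffeo(\Vector)$, which makes the formula independent of the choice of $\RAMweight^*$ and the argmin in \eqref{eq:relRAM} single-valued.

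This uniqueness is the only place needing care, and it is mild. The main point is that the polytope is a closed convex set, so the projection onto it is uniquely defined.
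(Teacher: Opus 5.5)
Your proposal is correct and follows the paper's route. The paper's proof is the one-line remark that the claim follows from (i) of \Cref{thm:ram-manifold-properties}, which you unpack by combining that parametrization with the closed-form pullback distance \eqref{eq:thm-distance-remetrized}. Your extra argument is also correct and fills in a detail the paper leaves implicit: the projection onto the closed convex polytope $\operatorname{conv}\{\diffeo(\emVector^{(\sumIndB)})\}$ is unique, so $\relRAM^\diffeo(\Vector)$ is single-valued and does not depend on the choice of $\RAMweight^*$.
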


\begin{proof}
    The claim follows directly from (i) in \Cref{thm:ram-manifold-properties}.
\end{proof}

The optimization problem \eqref{eq:relRAM-rewrite} can be solved efficiently with proximal gradient descent \cite{parikh2014proximal}, which yields the update scheme
\begin{equation}
    \RAMweight_{\operatorname{rel}}^{(\iterInd+1)} := \Pi_{\Simplex^\dimIndB}\Bigl( \RAMweight_{\operatorname{rel}}^{(\iterInd)} - \alpha \,\diffeo(\emMatrix)^\top \bigl(\diffeo(\emMatrix) \RAMweight_{\operatorname{rel}}^{(\iterInd)} - \diffeo(\Vector) \bigr)\Bigr), \quad \text{with } \diffeo(\emMatrix):= [\diffeo(\emVector^{(1)}), \ldots, \diffeo(\emVector^{(\dimIndB)})] \in \Real^{\dimInd \times \dimIndB},
    \label{eq:pgd-rel-ram}
\end{equation}
where $\Pi_{\Simplex^\dimIndB}:\Real^\dimIndB\to \Simplex^\dimIndB$ is the $\ell^2$-orthogonal projection onto the  simplex and the step size 
\begin{equation}
    \alpha := \frac{1}{\|\diffeo(\emMatrix)^\top \diffeo(\emMatrix)\|} = \frac{1}{\lambda_{\max}( \diffeo(\emMatrix)^\top \diffeo(\emMatrix))}
\end{equation}
is used to ensure convergence. The scheme is initialized at $\RAMweight_{\operatorname{rel}}^{(0)} := \frac{1}{\dimIndB} \mathbf{1}_\dimIndB:= [\frac{1}{\dimIndB}, \ldots, \frac{1}{\dimIndB}]^\top\in \Real^\dimIndB$.

Upon convergence of proximal gradient descent for the relaxed RAM problem, the resulting weights $\RAMweight_{\operatorname{rel}}^*\in \Real^\dimIndB$ serve as initialization for proximal gradient descent on the RAM objective\footnote{In practice, using $\RAMweight_{\operatorname{rel}}^{(0)}$ as initialization can be beneficial if it yields a lower RAM objective value.}, leading to the update scheme
\begin{equation}
    \RAMweight^{(\iterInd+1)} := \Pi_{\Simplex^\dimIndB}\Bigl( \RAMweight^{(\iterInd)} - \alpha_\iterInd \,\diffeo(\emMatrix)^\top (D_{\diffeo(\emMatrix) \RAMweight^{(\iterInd)}}\diffeo^{-1})^\top \bigl(\diffeo^{-1}(\diffeo(\emMatrix) \RAMweight^{(\iterInd)}) - \Vector \bigr)\Bigr),
    \label{eq:pgd-ram}
\end{equation}
where $\alpha_\iterInd$ is determined via line search to promote convergence.

\begin{figure}[h!]
    \centering
    \begin{subfigure}{0.48\linewidth}
        \centering
        \includegraphics[width=\linewidth]{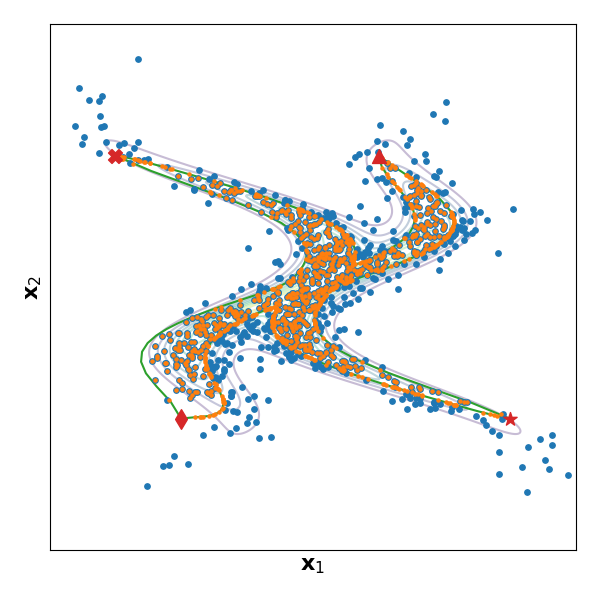}
        \caption{RAM projections (orange) or deformed star data (blue)}
        \label{fig:river-ram-schematic-proj}
    \end{subfigure}
    \hfill
    \begin{subfigure}{0.48\linewidth}
        \centering
        \includegraphics[width=\linewidth]{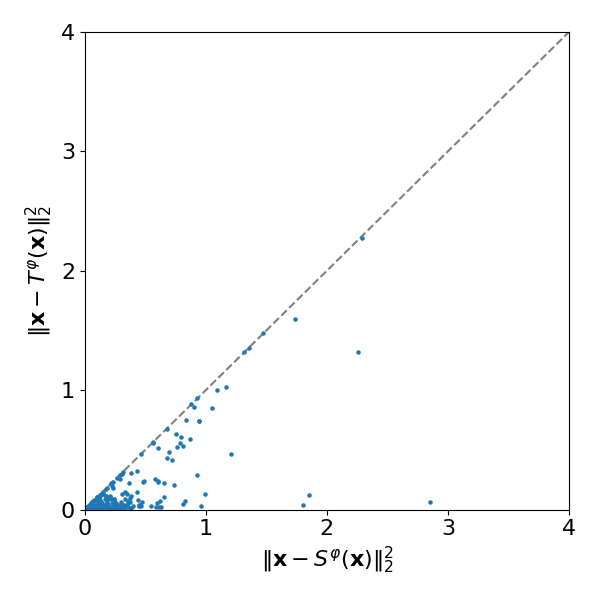}
        \caption{Squared distance of RAM and relaxed RAM to original data}
        \label{fig:river-ram-schematic-errors}
    \end{subfigure}
    \caption{The Riemannian archetypal mapping (RAM) -- initialized from relaxed RAM weights -- projects data points from a deformed star distribution onto the data manifold generated by four archetypes (left). RAM projections achieve substantially smaller distances to the original data compared to relaxed RAM projections, as evidenced by the reduction in reconstruction error (right).}
    \label{fig:river-ram-schematic-resulrs}
\end{figure}

For pullback geometries induced by deformed star distributions, the RAM optimization scheme \eqref{eq:pgd-ram} produces substantial improvements over the relaxed RAM scheme \eqref{eq:pgd-rel-ram}. \Cref{fig:river-ram-schematic-resulrs} illustrates this effect: RAM projections achieve significantly lower reconstruction errors by properly accounting for the pullback metric structure, whereas relaxed RAM incurs systematic projection errors due to neglecting the distance distortion introduced by $\diffeoB_{\rho}$ and $\diffeoC_{\monotone}$.

\begin{remark}
\label{rem:ram-optimization}
    While the proposed proximal gradient scheme performs well empirically, it also highlights clear room for algorithmic improvement. The effective step sizes for solving the RAM problem \(\eqref{eq:pgd-ram}\) are forced to be extremely small, as the objective suffers from rapidly vanishing convexity and very large Lipschitz constants induced by \(\diffeoB_{\rho}\) and \(\diffeoC_{\monotone}\). In the example of \Cref{fig:river-ram-schematic-resulrs}, this manifests in practice: after solving the relaxed RAM problem\footnote{for all data points in parallel} in 19 iterations using \(\eqref{eq:pgd-rel-ram}\), an additional 124 “fine-tuning” iterations of \(\eqref{eq:pgd-ram}\) are required before the maximal change between successive iterates falls below \(10^{-3}\).

    In the Riemannian autoencoder (RAE) setting \cite{diepeveen2024pulling}, iso-Riemannian optimization methods are able to exploit the geometry of the projection problem much more effectively: geodesic convexity of the constraint manifold, combined with (Euclidean) strong convexity of the objective, can yield strong iso-convexity together with a Lipschitz gradient field with substantially better constants \cite{diepeveen2025isoriemanopt}, even when the constraint set is non-convex in the Euclidean sense and the objective lacks classical geodesic convexity. For the RAM, an intrinsic iso-Riemannian approach would not only permit significantly larger step sizes, but would also be especially attractive because the manifold \(\bar{\manifold}^\diffeo\) can have lower dimension than the simplex: working intrinsically on \(\bar{\manifold}^\diffeo\) would remove the multiplicity of minimizers caused by mere convexity on the simplex and instead yield strong iso-convexity on this lower-dimensional space. At present, however, such methods are not available for our problem, as iso-Riemannian optimization theory for constrained settings is still largely undeveloped.
\end{remark}

\subsection{Iso-Riemannian geometry and classification}
Given the strong empirical performance of RAM on the deformed star data in \Cref{fig:river-ram-schematic-resulrs}, one might be inclined to believe that the converged weights $\RAMweight^* \in \Real^\dimIndB$ from proximal gradient descent yield a trustworthy classification of a projected point, and that the magnitude of each entry in $\RAMweight^*$ is directly interpretable. 

However, such an interpretation is premature: due to $\diffeoB_{\rho}$ and $\diffeoC_{\monotone}$ not having a constant Jacobian determinant, the pullback metric generally distorts distances, which affects the geometry experienced by the logarithmic map and hence the meaning of the weights. To make this precise, observe first that  
\begin{multline}
\sum_{\sumIndB=1}^\dimIndB \RAMweight^*_{\sumIndB} \log^\diffeo_{\RAM^\diffeo(\Vector)} (\emVector^{(\sumIndB)}) \overset{\eqref{eq:thm-log-remetrized}}{=} \sum_{\sumIndB=1}^\dimIndB \RAMweight^*_{\sumIndB}  D_{\diffeo(\RAM^\diffeo(\Vector))}\diffeo^{-1}[\diffeo(\emVector^{(\sumIndB)}) - \diffeo(\RAM^\diffeo(\Vector))] \\
=  D_{\diffeo(\RAM^\diffeo(\Vector))}\diffeo^{-1}\Bigl[\sum_{\sumIndB=1}^\dimIndB \RAMweight^*_{\sumIndB} \diffeo(\emVector^{(\sumIndB)}) - \diffeo(\RAM^\diffeo(\Vector))\Bigr]\\
\overset{\text{\Cref{thm:ram-mapping-equivalence}}}{=} D_{\diffeo(\RAM^\diffeo(\Vector))}\diffeo^{-1}\Bigl[\diffeo(\RAM^\diffeo(\Vector)) - \diffeo(\RAM^\diffeo(\Vector))\Bigr] = \mathbf{0}.
\end{multline}
Thus, the weights determine an affine combination of the tangent vectors aimed at the archetypes such that their weighted sum is zero. Conceptually, if $\RAM^\diffeo(\Vector)$ is close to a given archetype, the associated logarithmic vector is short relative to those pointing toward more distant archetypes. To compensate for this disparity, the optimal solution assigns a larger weight to the nearby archetype and smaller weights to archetypes that are farther away.  

The subtlety is that “close”, “far”, “short” and “long” here are measured in the pullback geometry, not in the ambient $\ell^2$-sense.  Because $\diffeo$ is not an isometry, neither the distances nor the lengths of these tangent vectors reflect Euclidean arc length along the underlying geodesics. Consequently, a naive $\ell^2$-based reading of $\RAMweight^*$ can lead to misleading interpretations, analogous to the interpolation pathologies discussed earlier for pullback geodesics. 

This issue was also anticipated in the iso-Riemannian geometry framework of \cite{diepeveen2025manifold}. For the logarithmic map, this means keeping the direction of $\log^\diffeo_{\RAM^\diffeo(\Vector)} (\emVector^{(\sumIndB)})$ but rescaling its length so that it equals the $\ell^2$-arc length of the geodesic connecting the endpoints. Specifically, writing $\log_{\Vector}^{\diffeo,\iso} : \Real^\dimInd \to \tangent_{\Vector}\Real^\dimInd$ for the iso-logarithmic map under $(\Real^\dimInd, (\cdot,\cdot)^\diffeo)$, we have
\begin{equation}
    \log_{\Vector}^{\diffeo,\iso}(\VectorB) 
= \frac{\displaystyle\int_0^1 \bigl\|\dot{\geodesic}^{\diffeo}_{\Vector,\VectorB}(s)\bigr\|_2 \, ds}{\bigl \| \log_{\Vector}^\diffeo(\VectorB)\bigr\|} \, \log_{\Vector}^\diffeo(\VectorB),
\end{equation}
where $\log_{\Vector}^\diffeo(\VectorB)$ is the standard pullback log between $\Vector$ and $\VectorB\in \Real^\dimInd$. 

Rather than using $\RAMweight^*$ to weight the pullback logs, we therefore seek weights that balance the iso-logs to the archetypes, i.e., weights $\RAMweightB^*$ on the simplex satisfying  
\begin{equation}
    \sum_{\sumIndB=1}^\dimIndB \RAMweightB^{*}_{\sumIndB} \log^{\diffeo,\iso}_{\RAM^\diffeo(\Vector)} (\emVector^{(\sumIndB)}) = \mathbf{0}.
    \label{eq:iso-log-weighted}
\end{equation}
Such a solution always exists; in particular, one can verify that  
\begin{equation}
    \RAMweightB^{*}_{\sumIndB} := \frac{c_{\sumIndB}\,\RAMweight^{*}_{\sumIndB}}{\sum_{\sumIndB'=1}^\dimIndB c_{\sumIndB'} \RAMweight^{*}_{\sumIndB'}}, \quad \text{with} \quad 
    c_{\sumIndB} := 
    \begin{cases}
    \displaystyle \frac{\bigl\| \log_{\RAM^\diffeo(\Vector)}^\diffeo(\emVector^{(\sumIndB)})\bigr\|}{\displaystyle\int_0^1 \bigl\|\dot{\geodesic}^{\diffeo}_{\RAM^\diffeo(\Vector),\emVector^{(\sumIndB)}}(s)\bigr\|_2 \, ds} & \text{if } \RAM^\diffeo(\Vector) \neq \emVector^{(\sumIndB)},\\\\
\qquad \quad \quad 1 & \text{otherwise},
\end{cases}
\label{eq:ram-reweighted}
\end{equation}
indeed satisfies \eqref{eq:iso-log-weighted}.  Importantly, we never need to evaluate the iso-logs themselves -- only the scaling coefficients $c_{\sumIndB}$, which suffice to obtain the corrected classification -- and always have $\RAMweightB^{*} = \RAMweight^{*}$ in the classical Euclidean AA setting. 

\Cref{fig:river-logs-schematic-results} illustrates this effect for the deformed star distribution we have been considering. Here, the pullback logarithms (pink) do not encode the true $\ell^2$-distance to the archetypes, whereas the iso-logarithms (purple) do. As a consequence, naive classification overemphasizes the $\textcolor{tabtenfour}{\pmb{\star}}$-archetype for a point lying relatively close to the geodesic connecting the $\textcolor{tabtenfour}{\pmb{\star}}$- and $\textcolor{tabtenfour}{\pmb{\blacktriangle}}$-archetype, while the corrected weights sketch a more tempered, and geometrically faithful, picture.

\begin{remark}
    In the example of \Cref{fig:river-logs-schematic-results}, there are in fact multiple weight vectors $\RAMweight$ that fit the data equally well, because the latent archetypes are collinear in the pullback geometry, i.e., $\dim( \operatorname{int}(\bar{\manifold}^\diffeo)) = 2 \neq  3 = \dimIndB-1$ for this example. This ambiguity can be controlled via regularization, in contrast to methods such as AAnet \cite{van2019finding}, where an overly high-dimensional latent space -- with many points encoding the same information -- can arise without a built-in mechanism to correct for it. 
\end{remark}

\begin{remark}
    As with iso-geodesics, the integral in \eqref{eq:ram-reweighted} is approximated numerically to compute the coefficients $c_{\sumIndB}$.
\end{remark}

\begin{figure}[h!]
    \centering
    \begin{subfigure}{0.48\linewidth}
        \centering
        \includegraphics[width=\linewidth]{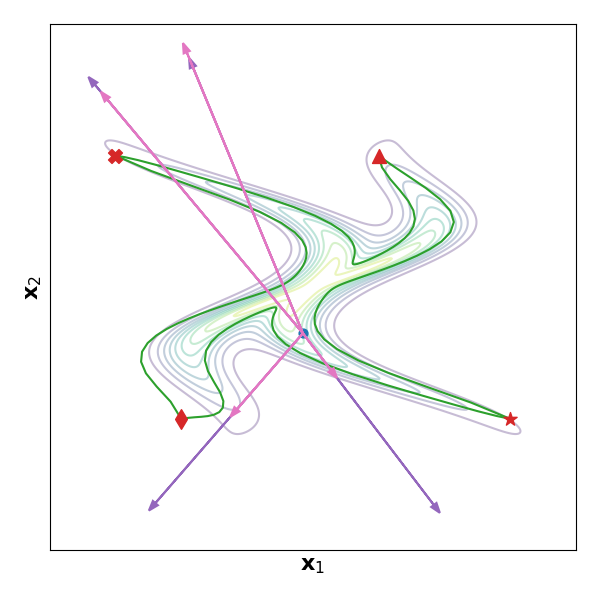}
        \caption{Pullback logs (pink) and iso-pullback logs (purple).}
        \label{fig:river-ram-schematic-proj}
    \end{subfigure}
    \hfill
    \begin{subfigure}{0.48\linewidth}
        \centering
        \includegraphics[width=\linewidth]{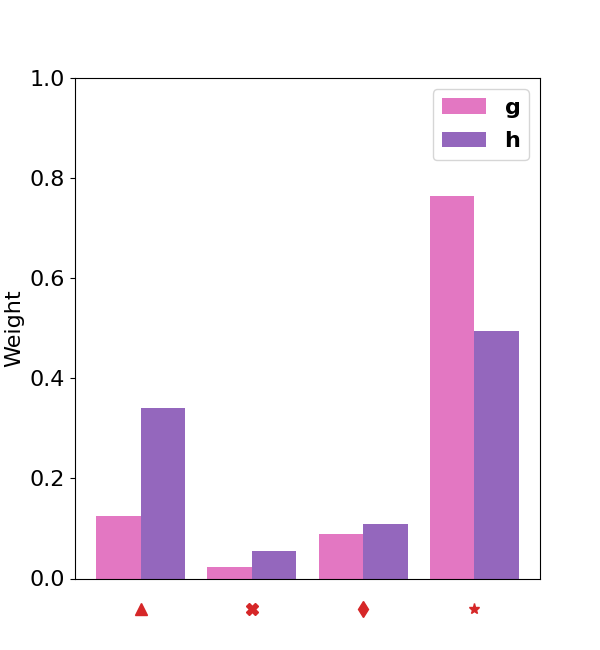}
        \caption{Naive and iso-corrected weights}
        \label{fig:river-ram-schematic-errors}
    \end{subfigure}
    \caption{The pullback logarithmic mappings (pink) do not carry information on how far away the archetypes are in an $\ell^2$-sense -- unlike the iso-logarithmic mappings (purple) --, which causes classification on the $\textcolor{tabtenfour}{\pmb{\star}}$-archetype to be unreasonably high for a data point that lives relatively close to the geodesic connecting the $\textcolor{tabtenfour}{\pmb{\star}}$- and $\textcolor{tabtenfour}{\pmb{\blacktriangle}}$-archetype. When using the iso-corrected weights, the classification is more modest and geometrically faithful.}
    \label{fig:river-logs-schematic-results}
\end{figure}




\section{Learning deformed star distributions}
\label{sec:learning-stars}

Finally, the remaining task is to (jointly) learn a distribution $\density_{\diffeoA,\radial} : \Real^{\dimInd} \to \Real$ of the form \eqref{eq:starflow-density} -- where $\diffeoA : \Real^{\dimInd} \to \Real^{\dimInd}$ is a diffeomorphism with constant Jacobian determinant and $\radial : \Sphere^{\dimInd-1} \to \Real_{>0}$ is a radial function -- and archetypes $\emVector^{(1)}, \ldots, \emVector^{(\dimIndB)} \in \Real^{\dimInd}$ from data $\Vector^{(1)}, \ldots, \Vector^{(\dataNum)}\sim \density_{\text{data}}$. Assuming that the true data distribution $\density_{\text{data}}$ is well-approximated by such a distribution and well-represented by such archetypes, the resulting pullback structure and Riemannian archetypal mapping (RAM) enable the data analysis tasks introduced earlier, i.e., interpolation (\Cref{sec:star-geometry}) and denoising and classification (\Cref{sec:rams}).  In what follows, we first discuss why classical negative log-likelihood training for learning $\density_{\diffeoA,\radial}$ is likely to encounter difficulties, and how a three-step procedure can nevertheless yield a reasonable -- albeit suboptimal -- approximation. We show that existing methods, after suitable adaptations, address all but one component in our three-step approach: learning the radial function, to which we devote a more detailed treatment. As before, all proofs are deferred to \Cref{app:learning-stars}.



\subsection{A three-step approach}

Focusing on the distribution $ \density_{\diffeoA,\radial} $, one would ideally seek a model that is closest to the true data distribution in Kullback-Leibler divergence. In particular, if the true distribution were of this form, it could in principle be recovered by solving
\begin{equation}
    (\diffeoA^*, \radial^*) \in \argmin_{\diffeoA, \radial} \, \mathbb{E}_{\mathbf{\stoVector} \sim \density_{\text{data}}} \left[ -\log \density_{\diffeoA,\radial}(\stoVector) \right].
    \label{eq:star-log-likelihood-loss}
\end{equation}
To understand why this objective is challenging, consider the expanded form of the negative log-likelihood:
\begin{equation}
    -\log \density_{\diffeoA,\radial}(\Vector)
    = \frac{1}{2}\,\radial\Bigl( \frac{\diffeoA(\Vector)}{\|\diffeoA(\Vector)\|_2} \Bigr)^{-2} \|\diffeoA(\Vector)\|_2^2
    - \log\bigl(|\det D_\Vector \diffeoA|\bigr)
    + \log\Bigl( \int_{\Sphere^{\dimInd-1}} \radial(\sVector)^{\dimInd} \, d\sigma(\sVector) \Bigr)
    + \log \Bigl( 2^{\frac{\dimInd}{2}-1} \Gamma\bigl(\tfrac{\dimInd}{2}\bigr) \Bigr).
\end{equation}
This expression shows that optimizing the likelihood requires evaluating an integral over the high-dimensional sphere. For general radial functions, such integrals are not tractable in closed form, while Monte Carlo approximations become numerically unstable due to the exponent $\dimInd$, especially in the high-dimensional regimes of interest.

Motivated by these difficulties, we instead adopt an alternative approach that allows us to learn the diffeomorphism $ \diffeoA $, the archetypes $ \emVector^{(1)}, \ldots, \emVector^{(\dimIndB)} $, and the radial function $ \radial $ separately. 

\paragraph{Step 1: finding a diffeomorphism}
We begin by observing that any diffeomorphism with constant Jacobian determinant cannot alter the topology of the level sets of a distribution. In particular, if the target distribution has the structure of a deformed star, such a diffeomorphism will map it to another deformed star distribution. This observation provides guidance for selecting a suitable diffeomorphism.

In particular, suppose we aim for the pushforward distribution to be as isotropic as possible. Then we expect the latent distribution to exhibit predominantly isotropic behavior, while retaining star-shaped structure in certain directions. Enforcing an isotropic latent distribution corresponds precisely to standard normalizing flow training \cite{dinh2014nice}, which we know to scale well. That is, we seek $\diffeoA_{\networkParams^*}$, for a suitable parametrization $\networkParams^*$ that ensures a constant Jacobian determinant, solving
\begin{equation}
    \networkParams^* \in \argmin_{\networkParams} \, \mathbb{E}_{\mathbf{\stoVector} \sim \density_{\text{data}}} \left[-\log \density_{\diffeoA_{\networkParams}}(\stoVector)\right], 
    \quad \text{where} \quad
    \density_{\diffeoA_{\networkParams}}(\Vector)
    := \frac{1}{\sqrt{(2 \pi)^\dimInd}} \exp\Bigl(-\tfrac{1}{2}\|\diffeoA_{\networkParams}(\Vector)\|_2^2\Bigr)\, \bigl|\det(D_{\Vector} \diffeoA_{\networkParams})\bigr|.
    \label{eq:nf-loss}
\end{equation}

\begin{remark}
    In geometric terms, the first step toward learning a deformed star pullback structure is to proceed along the lines of approaches such as \cite{diepeveen2025scorebased}. Put differently, our method genuinely extends previously proposed schemes.
\end{remark}

\paragraph{Step 2: finding a radial decomposition via archetypes}


Next, rather than attempting to fit all branches simultaneously, it might be better to treat each branch separately and consider radial functions of the form
\begin{equation}
    \radial(\sVector) := \operatorname{softmax}\bigl(\radial_1(\sVector), \ldots, \radial_{\dimIndB}(\sVector)\bigr),
    \label{eq:radial-outer-construction}
\end{equation}
where each $\radial_{\sumIndB} : \Sphere^{\dimInd-1} \to \Real_{>0}$ describes the radial behavior of a single “branch’’ of the data. In \Cref{fig:tree-projections} -- representing latent data $\VectorB^{(\sumIndA)} := \diffeoA_{\networkParams^*}(\Vector^{(\sumIndA)})$ for $\sumIndA=1, \ldots \dataNum$ --, this corresponds to the four individual arms of the distribution. 

There are two natural design choices for how to incorporate archetypes:
\begin{enumerate}[label=(\roman*)]
    \item each branch gets a single archetype, 
    \item each branch gets multiple archetypes.
\end{enumerate}
The first option is most appropriate when we lack prior information about which parts of the data belong to which branch, while the second fits labeled data where, within each class, we aim to identify several archetypes that represent it. In practice the situation need not be this binary, but for the purposes of this work it is useful to analyze both scenarios separately.


\begin{figure}[h!]
    \centering
    \begin{subfigure}{0.48\linewidth}
        \centering
        \includegraphics[width=\linewidth]{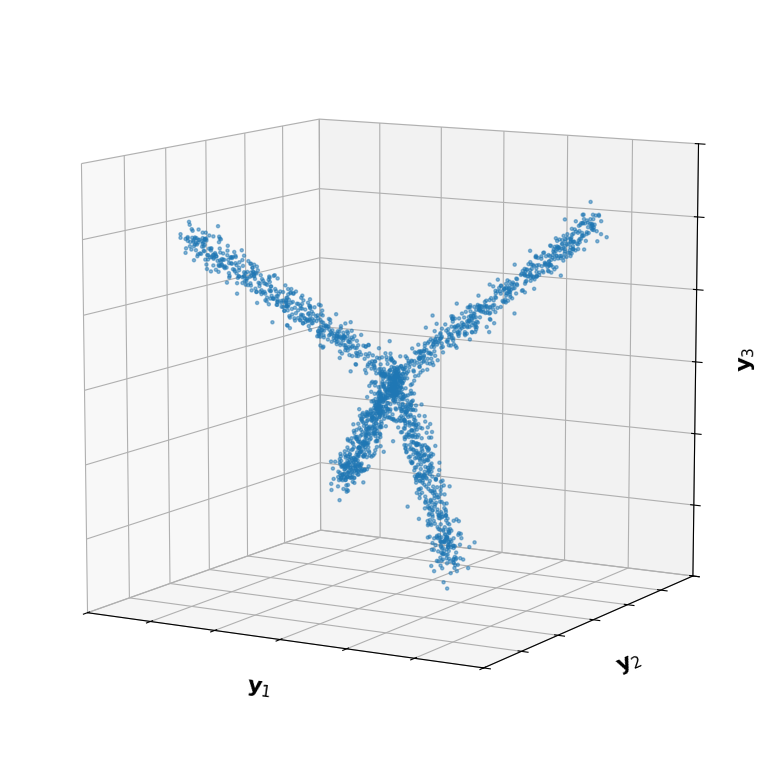}
        \caption{Samples of a star distribution}
        \label{fig:tree-projections}
    \end{subfigure}
    \hfill
    \begin{subfigure}{0.48\linewidth}
        \centering
        \includegraphics[width=\linewidth]{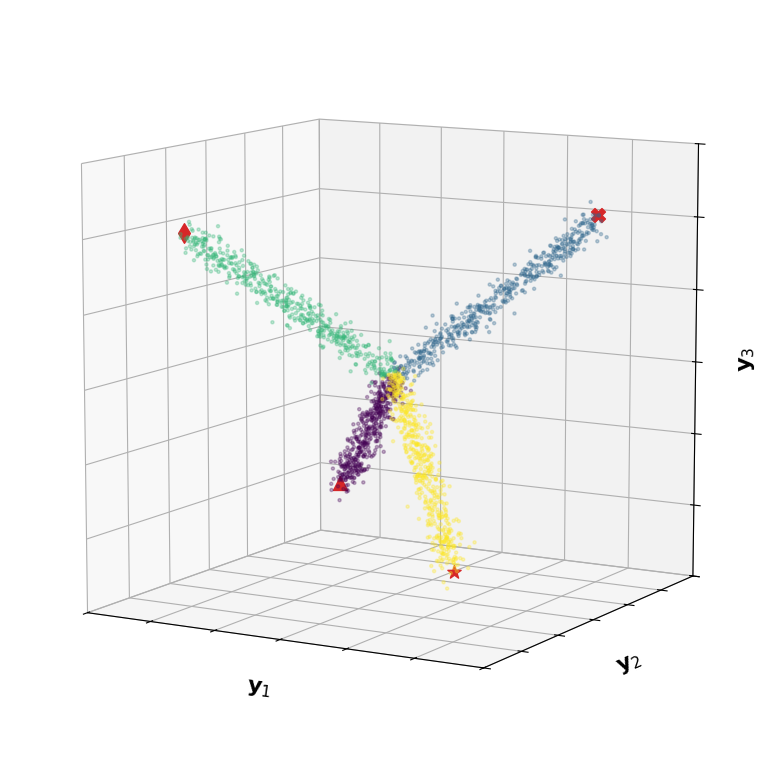}
        \caption{Sample archetypes and corresponding classification}
        \label{fig:tree-aa}
    \end{subfigure}
    \caption{Instead of fitting a single radial function to the entire dataset, we first decompose the latent data (left) using classical archetypal analysis (right) and then model each resulting branch individually.}
    \label{fig:tree-example}
\end{figure}

To make (i) work, we must determine which points belong to which branch, after which it remains to find a radial function $\radial_{\sumIndB}$ that captures the geometry of those data points alone. This can be achieved by applying classical archetypal analysis (AA) \cite{cutler1994archetypal} in the latent space, i.e., choose $\dimIndB \in \Natural$ and solve
\begin{equation}
    \inf_{\simMatrix \in \Real^{\dimInd \times \dimIndB},\; \simMatrixB \in \Real^{\dimIndB \times \dataNum}}
        \bigl\|\MatrixB - \MatrixB\,\simMatrix \simMatrixB\bigr\|_F^2,
    \quad \text{s.t.} \quad
    \simVector^{(\sumIndB)} \in \Simplex^{\dataNum},\;
    \simVectorB^{(\sumIndA)} \in \Simplex^{\dimIndB},
    \label{eq:latent-aa}
\end{equation}
where $\MatrixB := [\VectorB^{(1)}, \ldots, \VectorB^{(\dataNum)}] \in \Real^{\dimInd \times \dataNum}$ collects the mapped data points, and $\simMatrix := [\simVector^{(1)}, \ldots, \simVector^{(\dimIndB)}]$ and $\simMatrixB := [\simVectorB^{(1)}, \ldots, \simVectorB^{(\dataNum)}]$ are the factor matrices whose columns lie in the corresponding unit simplices.

This yields archetypes
\begin{equation}
    \emVector_{\networkParams^*}^{(\sumIndB)}
    := \diffeoA_{\networkParams^*}^{-1}\bigl(\MatrixB \simVector^{(\sumIndB)}\bigr)
    = \diffeoA_{\networkParams^*}^{-1}\Bigl(\sum_{\sumIndA=1}^{\dataNum} \simVector_{\sumIndA}^{(\sumIndB)}\,\diffeoA_{\networkParams^*}(\Vector^{(\sumIndA)})\Bigr)
    \in \Real^{\dimInd},
    \label{eq:latent-aa-aarchetypes}
\end{equation}
and assigns each point $\Vector^{(\sumIndA)}$ a class label
\begin{equation}
    c_{\networkParams^*}^{(\sumIndA)} := \argmax_{\sumIndB = 1,\ldots,\dimIndB} \simVectorB^{(\sumIndA)}_{\sumIndB},
    \label{eq:class-label}
\end{equation}
and is illustrated for $\dimIndB=4$ in \Cref{fig:tree-aa}.

For case (ii), we can proceed in an analogous way by solving \eqref{eq:latent-aa} separately on each labeled subset of the data. We then compute the corresponding archetypes as in \eqref{eq:latent-aa-aarchetypes}, but now there is no need to perform an additional classification step within an already labeled data set.

\begin{remark}
    It is worth noting that the problem in \eqref{eq:latent-aa} is in fact equivalent to an abstract Riemannian formulation of archetypal analysis in the tangent space. Specifically, archetypal analysis can be expressed intrinsically as
\begin{equation}
    \inf_{\simMatrix \in \Real^{\dimInd \times \dimIndB},\; \simMatrixB \in \Real^{\dimIndB \times \dataNum}}
    \Bigl(\bigl\|\tangentVector - \tangentVector\,\simMatrix \simMatrixB\bigr\|_\mPoint^\diffeoA \Bigr)^2,
\quad \text{s.t.} \quad
    \simVector^{(\sumIndB)} \in \Simplex^{\dataNum},\;
    \simVectorB^{(\sumIndA)} \in \Simplex^{\dimIndB},
    \label{eq:abstract-AA}
\end{equation}
where $ \tangentVector := [\log^{\diffeoA_{\networkParams^*}}_{\mPoint} (\Vector^{(1)}), \ldots, \log^{\diffeoA_{\networkParams^*}}_{\mPoint} (\Vector^{(\dataNum)})] \in (\tangent_{\mPoint} \Real^{\dimInd})^{\dataNum} \cong \Real^{\dimInd \times \dataNum} $ collects the logarithmic tangent vectors, and $ \mPoint \in \Real^\dimInd $ is an arbitrary reference point, for instance the Riemannian barycentre \eqref{eq:thm-bary-remetrized} of the data.

To establish this equivalence, we rewrite the objective as
\begin{multline}
    \Bigl(\bigl\|\tangentVector - \tangentVector\,\simMatrix \simMatrixB\bigr\|_\mPoint^\diffeoA \Bigr)^2
    \overset{\eqref{eq:pull-back-metric}}{=}
    \bigl\|D_{\mPoint} \diffeoA_{\networkParams^*}[\tangentVector - \tangentVector\,\simMatrix \simMatrixB]\bigr\|_F^2 \\\\
    \overset{\eqref{eq:thm-log-remetrized}}{=}
    \bigl\|D_{\mPoint} \diffeoA_{\networkParams^*} \circ D_{\diffeoA_{\networkParams^*}(\mPoint)} \diffeoA_{\networkParams^*}^{-1}
    [(\MatrixB - \diffeoA_{\networkParams^*}(\mPoint) \mathbf{1}_\dataNum^\top)
    - (\MatrixB - \diffeoA_{\networkParams^*}(\mPoint) \mathbf{1}_\dataNum^\top)\,\simMatrix \simMatrixB]\bigr\|_F^2 \\\\
    =
    \bigl\| (\MatrixB - \diffeoA_{\networkParams^*}(\mPoint) \mathbf{1}_\dataNum^\top)
    - (\MatrixB - \diffeoA_{\networkParams^*}(\mPoint) \mathbf{1}_\dataNum^\top)\,\simMatrix \simMatrixB \bigr\|_F^2
    =
    \bigl\|\MatrixB - \MatrixB\,\simMatrix \simMatrixB\bigr\|_F^2,
\end{multline}
which coincides with the objective in \eqref{eq:latent-aa} and is notably independent of the reference point $\mPoint$. 

That said, the formulation \eqref{eq:abstract-AA} is more naturally amenable to incorporating curvature corrections 
in more general non-flat Riemannian geometries.
While unnecessary in the present setting, such corrections are expected to have a more pronounced effect in the presence of nontrivial curvature.
\end{remark}

\begin{remark}
    In \Cref{sec:rams}, we emphasized the importance of measuring errors using the $\ell^2$-norm. This naturally raises the question of why we do not instead solve a formulation of the form \eqref{eq:abstract-AA} under a different norm, namely
    \begin{equation}
    \inf_{\simMatrix \in \Real^{\dimInd \times \dimIndB}, \simMatrixB \in \Real^{\dimIndB \times \dataNum}}
    \bigl\|\tangentVector - \tangentVector,\simMatrix \simMatrixB\bigr\|_2^2,
    \quad \text{s.t.} \quad
    \simVector^{(\sumIndB)} \in \Simplex^{\dataNum},\;
    \simVectorB^{(\sumIndA)} \in \Simplex^{\dimIndB}.
    \label{eq:l2-abstract-AA}
    \end{equation}
    One key distinction is that the diffeomorphism $\diffeoA_{\networkParams^*}$ has a constant Jacobian determinant, in contrast to the full diffeomorphism underlying Riemannian archetypal mapping, which does not satisfy this property.
    
    Despite this structural difference, the formulations \eqref{eq:abstract-AA} and \eqref{eq:l2-abstract-AA} produce noticeably different results on real data. In practice, however, the former yields more interpretable archetypes, motivating our choice to adopt it. A more detailed investigation of these differences is left for future work.
\end{remark}

\paragraph{Step 3: finding radial functions}
For each class, the remaining task is to define an appropriate radial function $\rho_{\sumIndB}$. We propose to construct this function by assuming that the branch associated with $\rho_{\sumIndB}$ forms a convex set that tightly encloses the data while containing the origin. Given such a convex set $\StarBody \subset \Real^{\dimInd}$ -- containing the origin -- we can compute its radial function via \cite{leong2025optimal}
\begin{equation}
\radial_{\StarBody}(\sVector) := \sup \{ t > 0 \,\mid\, t \cdot \sVector \in \StarBody \}.
\label{eq:star-radial}
\end{equation}
Although this construction may not be optimal with respect to log-likelihood \eqref{eq:star-log-likelihood-loss}, it yields closed-form expressions for the radial function for certain classes of convex sets and guarantees that all data points are assigned high likelihood.

In contrast to Steps 1 and 2, we can no longer rely on existing methods and must instead design a dedicated procedure to identify suitable convex sets and their corresponding radial functions.

\subsection{Learning ellipsoidal radial functions} 
We propose an ellipsoid-based construction of radial functions for each branch $ \radial_{\sumIndB} $. Specifically, we model each branch using ellipsoidal sets of the form
\begin{equation}
    \Ellipsoid_{\SpdMatrix}(\centroid) := \{\VectorB \in \Real^\dimInd \; \mid \; (\VectorB - \centroid)^\top \SpdMatrix^{-1} (\VectorB - \centroid) \leq 1\},
    \label{eq:ellipsoidal-set}
\end{equation}
where $ \SpdMatrix \in \Real^{\dimInd \times \dimInd} $ is symmetric positive definite and $ \centroid \in \Real^\dimInd $ denotes the center. To ensure that these sets admit a well-defined radial function, we require $ \mathbf{0} \in \operatorname{int}(\Ellipsoid_{\SpdMatrix}(\centroid)) $, which is equivalent to the condition $ \centroid^\top \SpdMatrix^{-1} \centroid < 1 $.

Rather than relying on a single ellipsoid, we instead consider the (soft) intersection of two ellipsoids. This leads to a radial function for each branch of the form
\begin{equation}
    \radial_{\sumIndB} (\sVector) := \operatorname{softmin} \bigl( \radial_{\SpdMatrix^{(\sumIndB)}_o, \centroid^{(\sumIndB)}}(\sVector), \; \radial_{\SpdMatrix^{(\sumIndB)}_c, \mathbf{0}}(\sVector) \bigr),
    \label{eq:softmin-ellipsoidal-radial}
\end{equation}
where each component is given by
\begin{equation}
    \radial_{\SpdMatrix,\centroid}(\sVector) := \radial_{\Ellipsoid_{\SpdMatrix}(\centroid)} (\sVector) = \sup \{ t > 0 \,\mid\, t \cdot \sVector \in \Ellipsoid_{\SpdMatrix}(\centroid) \}.
    \label{eq:ellipsoidal-radial}
\end{equation}

\begin{remark}
    The use of the softmin in \eqref{eq:softmin-ellipsoidal-radial} ensures differentiability. In contrast, the exact radial function of the intersection of two ellipsoids would involve the pointwise minimum instead.
\end{remark}

\paragraph{Ellipsoidal radial functions}
Before proceeding to the construction of suitable ellipsoids, we first note that \eqref{eq:ellipsoidal-radial} admits a more convenient closed-form expression.

\begin{proposition}
\label{prop:ellipsoidal-radial-functions}
    Let $\SpdMatrix\in \Real^{\dimInd\times \dimInd}$ be a symmetric positive definite matrix and let $\centroid \in \Real^\dimInd$ be a vector. Furthermore, assume that $\centroid^\top \SpdMatrix^{-1} \centroid < 1$.

    Then, the radial function \eqref{eq:ellipsoidal-radial} of the ellipsoid generated by $\SpdMatrix$ and $\centroid$ satisfies
    \begin{equation}
        \radial_{\SpdMatrix,\centroid}(\sVector) = \frac{\sVector^{\top}\SpdMatrix^{-1} \centroid + \sqrt{ \left(\sVector^{\top}\SpdMatrix^{-1} \centroid\right)^2 + (\sVector^{\top}\SpdMatrix^{-1} \sVector)\left(1 - \centroid^{\top}\SpdMatrix^{-1} \centroid\right)}} {\sVector^{\top}\SpdMatrix^{-1} \sVector} 
    \end{equation}
    In particular, for $\centroid = \mathbf 0$ the radial function reduces to
    \begin{equation}
        \radial_{\SpdMatrix,\centroid}(\sVector) =(\sVector^\top \SpdMatrix^{-1} \sVector)^{-\frac{1}{2}}
    \end{equation}
\end{proposition}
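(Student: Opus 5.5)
The plan is to turn the supremum in \eqref{eq:ellipsoidal-radial} into the largest positive root of a scalar quadratic. Fix $\sVector\in\Sphere^{\dimInd-1}$ and consider the ray $t\mapsto t\sVector$, $t>0$.

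By \eqref{eq:ellipsoidal-set}, $t\sVector\in\Ellipsoid_{\SpdMatrix}(\centroid)$ exactly when $q(t)\le 0$, where
\begin{equation}
    q(t) := (t\sVector-\centroid)^\top \SpdMatrix^{-1}(t\sVector-\centroid) - 1 = a t^2 - 2 b t + (c-1),
\end{equation}
with $a := \sVector^\top\SpdMatrix^{-1}\sVector$, $b := \sVector^\top\SpdMatrix^{-1}\centroid$ and $c := \centroid^\top\SpdMatrix^{-1}\centroid$. The expansion uses the symmetry of $\SpdMatrix^{-1}$. Since $\SpdMatrix$ is symmetric positive definite and $\sVector\neq\mathbf 0$, we have $a>0$, so $q$ is a convex upward parabola.

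Next I locate the roots. The hypothesis $c<1$ gives $q(0)=c-1<0$, so the origin is interior to the ellipsoid. The discriminant $4b^2+4a(1-c)$ is then strictly positive, so $q$ has two real roots $t_\pm = \bigl(b \pm \sqrt{b^2 + a(1-c)}\bigr)/a$. Their product is $(c-1)/a<0$, so $t_-<0<t_+$.

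From convexity, $\{t\in\Real : q(t)\le 0\}=[t_-,t_+]$. Intersecting with $t>0$ gives the interval $(0,t_+]$. Its supremum is $t_+$, and it is attained, so the ellipsoid being closed is not even needed. Substituting $a,b,c$ into $t_+$ yields the claimed expression.

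For $\centroid=\mathbf 0$ we have $b=0$ and $c=0$, so $t_+=\sqrt{a}/a=a^{-1/2}$, which is the second formula.

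There is no real obstacle here. The only points needing care are showing $a>0$, so that the sign of the square root is right, and using $c<1$ to rule out the smaller root. Both are immediate from positive definiteness and the stated hypothesis.
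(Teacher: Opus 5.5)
Your proposal is correct and follows the same route as the paper: restrict to the ray $t\sVector$, reduce membership in $\Ellipsoid_{\SpdMatrix}(\centroid)$ to a quadratic inequality in $t$, and take the larger root. Your version is slightly more careful than the paper's. You use $\centroid^\top\SpdMatrix^{-1}\centroid<1$ and the negative product of the roots to show $t_-<0<t_+$, and hence that the feasible set is exactly $(0,t_+]$; the paper simply asserts that the supremum lies on the boundary and picks the positive root, and it also writes $\SpdMatrix$ where $\SpdMatrix^{-1}$ is meant in the boundary equation.
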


\begin{remark}
    As a quick sanity check of the above result, it is worth highlighting that the case of $\centroid = \mathbf{0}$ yields
    \begin{equation}
        \radial_{\SpdMatrix,\centroid}\Bigl( \frac{\diffeoA_{\networkParams^*}(\Vector)}{\|\diffeoA_{\networkParams^*}(\Vector)\|_2} \Bigr)^{-2}\|\diffeoA_{\networkParams^*}(\Vector)\|_2^2 = \Bigl(\frac{\diffeoA_{\networkParams^*}(\Vector)^\top  \SpdMatrix^{-1}\diffeoA_{\networkParams^*}(\Vector)}{\|\diffeoA_{\networkParams^*}(\Vector)\|_2^2} \Bigr) \|\diffeoA_{\networkParams^*}(\Vector)\|_2^2 = \diffeoA_{\networkParams^*}(\Vector)^\top  \SpdMatrix^{-1}\diffeoA_{\networkParams^*}(\Vector).
    \end{equation}
    In other words, the distribution $\density_{\diffeoA,\radial_{\SpdMatrix,\centroid}}$ with $\centroid = \mathbf{0}$ reduces to a deformed Gaussian with latent distribution $\mathcal{N}(\mathbf{0}, \SpdMatrix)$.
\end{remark}

\paragraph{Data enclosing ellipsoids}
Selecting an ellipsoid of the form \eqref{eq:ellipsoidal-set} amounts to deciding how tightly the latent data $ \VectorB^{(1)}, \ldots, \VectorB^{(\dataNum)} $ associated with a given branch\footnote{Strictly speaking, one should write $ \tilde{\VectorB}^{(1)}, \ldots, \tilde{\VectorB}^{(\dataNumB)} \in \{\VectorB^{(\sumIndA)} \,\mid\, c_{\networkParams^*}^{(\sumIndA)} = \sumIndB \} $, but we omit this for notational simplicity.} should be enclosed. In the presence of outliers, enclosing every data point may be undesirable. Instead, we aim to construct an off-centered ellipsoid with a meaningful center $ \centroid $ and a symmetric positive definite matrix $ \SpdMatrix_o $, such that the origin is guaranteed to lie in its interior while the data are captured in an average sense. These requirements can be satisfied through a careful construction.

\begin{proposition}
\label{prop:off-centered-ellipsoidal}
    Let $\VectorB^{(1)}, \ldots, \VectorB^{(\dataNum)} \in \Real^\dimInd$ be any $\dataNum$ vectors and let $\alpha >1$ and $\beta \in (0,\alpha)$ be positive real numbers. Furthermore, let $\centroid:= \frac{1}{\dataNum} \sum_{\sumIndA=1}^\dataNum \VectorB^{(\sumIndA)}$ be the mean of the vectors and let $\ProjMatrix_\centroid:= \frac{\centroid  \centroid^\top}{\|\centroid\|_2^2}\in \Real^{\dimInd\times\dimInd}$ be the projection matrix onto the subspace generated by $\centroid$. Finally, consider the singular value decomposition 
    \begin{equation}
        (\mathbf{I} - \ProjMatrix_\centroid) [\VectorB^{(1)}, \ldots, \VectorB^{(\dataNum)}] = \OrthMatrix \mathbf{S} \OrthMatrixB^\top, \quad \OrthMatrix\in \Real^{\dimInd\times (\dimInd-1)}, \mathbf{S} \in \Real^{(\dimInd-1)\times (\dimInd-1)}, \OrthMatrixB\in \Real^{\dataNum \times(\dimInd-1)},
    \end{equation}
    where we write $\varsigma_{\sumIndC} \geq 0$ for the singular values on the diagonal of the matrix $\mathbf{S}$.

    Then, the symmetric positive definite matrix $\SpdMatrix_o := \OrthMatrixC \SpdMatrixB_o \OrthMatrixC^\top$, defined through the orthogonal matrix
    \begin{equation}
        \OrthMatrixC := [\frac{1}{\|\centroid\|_2} \centroid, \OrthMatrix] = [\frac{1}{\|\centroid\|_2} \centroid, \OrthVector^{(1)}, \ldots \OrthVector^{(\dimInd-1)}] \in \Real^{\dimInd\times \dimInd}
    \end{equation}
    and the diagonal matrix
    \begin{equation}
        \SpdMatrixB_o := \operatorname{diag}(\lambda_1, \ldots, \lambda_\dimInd), \quad \lambda_\sumIndC := \begin{cases}
        \max \Bigl\{\frac{\dimInd}{\dataNum} \sum_{\sumIndA=1}^\dataNum \frac{(\centroid^\top (\VectorB^{(\sumIndA)} - \centroid))^2}{\|\centroid\|_2^2}, \alpha \Bigr\} & \sumIndC = 1, \\
        \quad \quad \max\Bigl\{\frac{\dimInd}{\dataNum} \varsigma_{\sumIndC-1}^2, \beta \Bigr\} & \sumIndC = 2, \ldots, \dimInd,
        \end{cases}
    \end{equation}
    satisfies 
    \begin{equation}
     \frac{1}{\dataNum} \sum_{\sumIndA=1}^\dataNum (\VectorB^{(\sumIndA)} - \centroid)^\top \SpdMatrix_o^{-1} (\VectorB^{(\sumIndA)} - \centroid) \leq 1, \quad \text{and} \quad \centroid^\top \SpdMatrix_o^{-1} \centroid < 1.
     \label{eq:prop-off-centered-ellipsoid}
\end{equation}
\end{proposition}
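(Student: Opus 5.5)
The plan is to compute both quadratic forms in \eqref{eq:prop-off-centered-ellipsoid} in the orthonormal basis given by the columns of $\OrthMatrixC$, where $\SpdMatrix_o$ is diagonal. For that I first need $\OrthMatrixC$ to be orthogonal. The columns of $\OrthMatrix$ are left singular vectors of $(\mathbf{I}-\ProjMatrix_\centroid)[\VectorB^{(1)},\ldots,\VectorB^{(\dataNum)}]$. Its column space lies in $\centroid^\perp$, which has dimension $\dimInd-1$. So I read $\OrthMatrix$ as an orthonormal basis of $\centroid^\perp$, completing arbitrarily in the rank-deficient case where some $\varsigma_\sumIndC=0$. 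Then $[\centroid/\|\centroid\|_2,\OrthMatrix]$ is orthogonal, and $\SpdMatrix_o^{-1}=\OrthMatrixC\,\operatorname{diag}(\lambda_1^{-1},\ldots,\lambda_\dimInd^{-1})\OrthMatrixC^\top$, with every $\lambda_\sumIndC\geq\min\{\alpha,\beta\}>0$. I also need $\centroid\neq\mathbf 0$ so that $\ProjMatrix_\centroid$ is defined; I take this as implicit.

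\textbf{First inequality.} Write $\VectorB^{(\sumIndA)}-\centroid = \ProjMatrix_\centroid(\VectorB^{(\sumIndA)}-\centroid) + (\mathbf I-\ProjMatrix_\centroid)\VectorB^{(\sumIndA)}$, using $(\mathbf I-\ProjMatrix_\centroid)\centroid=\mathbf 0$. The quadratic form then splits into two parts.
\begin{itemize}
\item The $\centroid$-direction contributes $\frac{1}{\lambda_1}\frac{1}{\dataNum}\sum_\sumIndA (\centroid^\top(\VectorB^{(\sumIndA)}-\centroid))^2/\|\centroid\|_2^2$. This is at most $\frac{1}{\dimInd}$ by the first branch of the max in $\lambda_1$.
\item The orthogonal part contributes $\frac1\dataNum\sum_{\sumIndC}\lambda_{\sumIndC+1}^{-1}\sum_\sumIndA (\OrthVector^{(\sumIndC)\top}\VectorB^{(\sumIndA)})^2$. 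By the SVD, $\sum_\sumIndA (\OrthVector^{(\sumIndC)\top}(\mathbf I-\ProjMatrix_\centroid)\VectorB^{(\sumIndA)})^2=\varsigma_\sumIndC^2$. Each term is therefore at most $\frac{1}{\dimInd}$, and there are $\dimInd-1$ of them.
\end{itemize}
Summing gives a bound of $1$. This part is routine.

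\textbf{Second inequality.} Since $\centroid$ is, up to scaling, the first column of $\OrthMatrixC$, we get $\centroid^\top\SpdMatrix_o^{-1}\centroid=\|\centroid\|_2^2/\lambda_1$. The claim is therefore equivalent to $\lambda_1>\|\centroid\|_2^2$. This is the main obstacle. The hypotheses only give $\lambda_1\geq\alpha>1$, which settles the case $\|\centroid\|_2\leq 1$ and nothing more. My first attempt would be to bound the variance term $\frac{\dimInd}{\dataNum}\sum_\sumIndA(\centroid^\top(\VectorB^{(\sumIndA)}-\centroid))^2/\|\centroid\|_2^2$ from below by $\|\centroid\|_2^2$.

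That attempt cannot succeed in general. Take all $\VectorB^{(\sumIndA)}=\centroid$ with $\|\centroid\|_2^2=10$ and $\alpha=2$. Then the variance term is zero, $\lambda_1=2$, and $\centroid^\top\SpdMatrix_o^{-1}\centroid=5>1$.

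So I expect the proof to close the first inequality as described, and to prove the second inequality exactly under the additional condition $\lambda_1>\|\centroid\|_2^2$. That condition is automatic when $\|\centroid\|_2\leq 1$. I would flag that, as worded, the second inequality needs such an extra assumption, for instance $\alpha>\|\centroid\|_2^2$, or replacing $\alpha$ by $\alpha\|\centroid\|_2^2$ in $\lambda_1$, and give the example above to show it is genuinely needed.
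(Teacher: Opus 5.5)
Your argument is correct throughout. On the second inequality you have found a genuine error in the paper, not a gap in your own reasoning.

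The paper's proof evaluates $\centroid^\top\SpdMatrix_o^{-1}\centroid=\centroid^\top\OrthMatrixC\SpdMatrixB_o^{-1}\OrthMatrixC^\top\centroid$ as $\|\tfrac{1}{\|\centroid\|_2^2}\centroid^\top\centroid\|_2^2/\lambda_1=1/\lambda_1$. That amounts to normalizing the first column of $\OrthMatrixC$ by $\|\centroid\|_2^2$ instead of $\|\centroid\|_2$. With the column as defined, $\OrthMatrixC^\top\centroid=\|\centroid\|_2\mathbf{e}_1$, and the value is $\|\centroid\|_2^2/\lambda_1$, exactly as you computed. The paper's $1/\lambda_1$ is the value for the unit vector $\centroid/\|\centroid\|_2$. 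That is not the condition $\mathbf{0}\in\operatorname{int}(\Ellipsoid_{\SpdMatrix_o}(\centroid))$ that the construction needs.

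Your counterexample is valid, and the failure is not an artifact of degenerate data. Take $\dimInd=\dataNum=2$ and $\VectorB^{(1)},\VectorB^{(2)}=(10,\pm 1)^\top$. Then $\centroid=(10,0)^\top$ and $(\mathbf{I}-\ProjMatrix_\centroid)\MatrixB$ has rank $\dimInd-1$. There is zero spread along $\centroid$, so $\lambda_1=\alpha$ and $\centroid^\top\SpdMatrix_o^{-1}\centroid=100/\alpha$.

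The underlying reason is that $\lambda_1$ is built from the second moment about the mean. That moment can vanish however far the mean is from the origin. In the centered variant, the second moment about the origin satisfies $\frac{\dimInd}{\dataNum}\sum_{\sumIndA}(\centroid^\top\VectorB^{(\sumIndA)})^2/\|\centroid\|_2^2\ge\dimInd\|\centroid\|_2^2$ by Jensen, so the analogous claim there does hold for $\dimInd\ge 2$. Your repair is the right one: either a floor $\alpha\|\centroid\|_2^2$ in $\lambda_1$, or the assumption $\|\centroid\|_2^2<\alpha$. It does not affect the first inequality, which only uses that $\lambda_1$ dominates the variance term.

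For the first inequality you follow the paper's route: work in the basis $\OrthMatrixC$, where $\SpdMatrix_o$ is diagonal, and bound each of the $\dimInd$ diagonal contributions by $1/\dimInd$. Your version is cleaner, though.

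The paper passes through a trace and asserts $(\MatrixB-\centroid\mathbf{1}_\dataNum^\top)(\MatrixB-\centroid\mathbf{1}_\dataNum^\top)^\top=\OrthMatrixC\operatorname{diag}(\varsigma_0^2,\ldots,\varsigma_{\dimInd-1}^2)\OrthMatrixC^\top$. This ignores the cross-covariance between the $\centroid$-component and $\centroid^\perp$, and it also defines $\varsigma_0$ without the square root. The slip is harmless only because $\SpdMatrix_o^{-1}$ is diagonal in $\OrthMatrixC$, so the trace sees only diagonal entries. That is precisely the block structure your splitting relies on.

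You also note that $\centroid\neq\mathbf{0}$ must be assumed. And in the rank-deficient case $\OrthMatrix$ must be taken as an orthonormal basis of $\centroid^\perp$, since otherwise $\OrthMatrixC$ need not be orthogonal. Both remarks close genuine loose ends in the statement.
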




To motivate the inclusion of a second, centered ellipsoid in the radial function \eqref{eq:softmin-ellipsoidal-radial}, observe that geodesics under a radial function from the previously constructed ellipsoid can be numerically unstable, when used in isolation. In particular, if its center lies far from the origin, the resulting set can extend excessively outward, which is undesirable for interpolation in terms of geodesics swinging out undesirably. While this effect can be partially mitigated through a concave transformation, it is generally preferable to work with enclosing sets that remain as compact as possible. Consequently, our primary interest lies in the portion of the set that is close to the origin, while still capturing the data in an average sense and ensuring that the data mean is contained. This motivates the introduction of a second ellipsoid centered at the origin, which can be constructed in an analogous manner.

\begin{proposition}
\label{prop:centered-ellipsoidal}
    Let $\VectorB^{(1)}, \ldots, \VectorB^{(\dataNum)} \in \Real^\dimInd$ be any $\dataNum$ vectors and let $\alpha >1$ and $\beta \in (0,\alpha)$ be positive real numbers. Furthermore, let $\centroid:= \frac{1}{\dataNum} \sum_{\sumIndA=1}^\dataNum \VectorB^{(\sumIndA)}$ be the mean of the vectors and let $\ProjMatrix_\centroid:= \frac{\centroid  \centroid^\top}{\|\centroid\|_2^2}\in \Real^{\dimInd\times\dimInd}$ be the projection matrix onto the subspace generated by $\centroid$. Finally, consider the singular value decomposition 
    \begin{equation}
        (\mathbf{I} - \ProjMatrix_\centroid) [\VectorB^{(1)}, \ldots, \VectorB^{(\dataNum)}] = \OrthMatrix \mathbf{S} \OrthMatrixB^\top, \quad \OrthMatrix\in \Real^{\dimInd\times (\dimInd-1)}, \mathbf{S} \in \Real^{(\dimInd-1)\times (\dimInd-1)}, \OrthMatrixB\in \Real^{\dataNum \times(\dimInd-1)},
    \end{equation}
    where we write $\varsigma_{\sumIndC} \geq 0$ for the singular values on the diagonal of the matrix $\mathbf{S}$.

    Then, the symmetric positive definite matrix $\SpdMatrix_c := \OrthMatrixC \SpdMatrixB_c \OrthMatrixC^\top$, defined through the orthogonal matrix
    \begin{equation}
        \OrthMatrixC := [\frac{1}{\|\centroid\|_2} \centroid, \OrthMatrix] = [\frac{1}{\|\centroid\|_2} \centroid, \OrthVector^{(1)}, \ldots \OrthVector^{(\dimInd-1)}] \in \Real^{\dimInd\times \dimInd}
    \end{equation}
    and the diagonal matrix
    \begin{equation}
        \SpdMatrixB_c := \operatorname{diag}(\lambda_1, \ldots, \lambda_\dimInd), \quad \lambda_\sumIndC := \begin{cases}
        \max \Bigl\{\frac{\dimInd}{\dataNum} \sum_{\sumIndA=1}^\dataNum \frac{(\centroid^\top \VectorB^{(\sumIndA)})^2}{\|\centroid\|_2^2}, \alpha \Bigr\} & \sumIndC = 1, \\
        \quad \quad \max\Bigl\{\frac{\dimInd}{\dataNum} \varsigma_{\sumIndC-1}^2, \beta \Bigr\} & \sumIndC = 2, \ldots, \dimInd,
        \end{cases}
    \end{equation}
    satisfies 
    \begin{equation}
         \frac{1}{\dataNum} \sum_{\sumIndA=1}^\dataNum (\VectorB^{(\sumIndA)} )^\top \SpdMatrix_c^{-1} \VectorB^{(\sumIndA)}  \leq 1, \quad \text{and} \quad \centroid^\top \SpdMatrix_c^{-1} \centroid < 1.
         \label{eq:prop-centered-ellipsoid}
    \end{equation}
    
\end{proposition}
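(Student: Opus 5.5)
The plan is to work in the orthonormal basis given by $\OrthMatrixC$, where $\SpdMatrix_c^{-1} = \OrthMatrixC \SpdMatrixB_c^{-1} \OrthMatrixC^\top$ is diagonal, and to bound each coordinate's contribution separately. Write $\mathbf{e} := \centroid/\|\centroid\|_2$; I implicitly assume $\centroid \neq \mathbf{0}$, as the statement already does through $\ProjMatrix_\centroid$. For any vector $\VectorB$ we then have
\begin{equation}
\VectorB^\top \SpdMatrix_c^{-1}\VectorB = \frac{(\mathbf{e}^\top \VectorB)^2}{\lambda_1} + \sum_{\sumIndC=2}^{\dimInd} \frac{\bigl((\OrthVector^{(\sumIndC-1)})^\top \VectorB\bigr)^2}{\lambda_\sumIndC}.
\end{equation}
Two facts justify this. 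First, $\OrthMatrixC$ is orthogonal, because the left singular vectors of $(\mathbf{I}-\ProjMatrix_\centroid)[\VectorB^{(1)},\ldots,\VectorB^{(\dataNum)}]$ can be chosen to span $\centroid^\perp$; this completion is needed when the rank is deficient. Second, every $\lambda_\sumIndC \geq \beta > 0$, so $\SpdMatrix_c$ is indeed symmetric positive definite.

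\textbf{Second inequality.} Since $\OrthMatrix^\top \centroid = \mathbf{0}$, only the first term survives, giving $\centroid^\top \SpdMatrix_c^{-1}\centroid = \|\centroid\|_2^2/\lambda_1$. I will use the bound $\lambda_1 \geq \frac{\dimInd}{\dataNum}\sum_\sumIndA (\mathbf{e}^\top \VectorB^{(\sumIndA)})^2$. By Jensen/Cauchy--Schwarz, $\frac1\dataNum\sum_\sumIndA (\mathbf{e}^\top\VectorB^{(\sumIndA)})^2 \geq (\mathbf{e}^\top\centroid)^2 = \|\centroid\|_2^2$, so $\lambda_1 \geq \dimInd\|\centroid\|_2^2$. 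Hence $\centroid^\top\SpdMatrix_c^{-1}\centroid \leq 1/\dimInd < 1$ when $\dimInd\geq 2$. For $\dimInd=1$ this bound only gives $\leq 1$, so I would fall back on $\lambda_1\geq\alpha$: strictness then follows either because the Jensen inequality is strict (the data are not all equal) or because $\lambda_1 \geq \alpha > 1 \cdot$ the relevant quantity. This edge case is the main delicate point; I expect either to handle it through $\alpha$ or to note that equality requires all data to coincide with $\centroid$, and then argue via the $\max$ with $\alpha$.

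\textbf{First inequality.} Average the quadratic form over the data. The first term contributes $\frac{1}{\dataNum}\sum_\sumIndA (\mathbf{e}^\top\VectorB^{(\sumIndA)})^2/\lambda_1 \leq 1/\dimInd$, by definition of $\lambda_1$ as a maximum. For the remaining terms, note that $(\OrthVector^{(\sumIndC-1)})^\top \VectorB^{(\sumIndA)} = (\OrthVector^{(\sumIndC-1)})^\top(\mathbf{I}-\ProjMatrix_\centroid)\VectorB^{(\sumIndA)}$. Summing its square over $\sumIndA$ gives the squared norm of the $(\sumIndC-1)$-th row of $\OrthMatrix^\top\OrthMatrix\mathbf{S}\OrthMatrixB^\top = \mathbf{S}\OrthMatrixB^\top$, which equals $\varsigma_{\sumIndC-1}^2$ because $\OrthMatrixB$ has orthonormal columns. (For zero singular values the completion vectors give $0$.) Each such term therefore contributes at most $\frac{1}{\dataNum}\varsigma_{\sumIndC-1}^2/\lambda_\sumIndC \leq 1/\dimInd$. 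Adding the $\dimInd$ contributions gives the bound $\leq 1$.

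This is a routine computation that mirrors the proof of \Cref{prop:off-centered-ellipsoidal}, except that the data are not recentred in the $\mathbf{e}$-direction. That difference is exactly what makes the Jensen step above work for the containment of $\centroid$.
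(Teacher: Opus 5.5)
Your proof is correct for $\dimInd\geq 2$. For the first inequality it follows the same route as the paper, which defers to the proof of \Cref{prop:off-centered-ellipsoidal}: express $\SpdMatrix_c^{-1}$ in the basis $\OrthMatrixC$ and bound each of the $\dimInd$ diagonal contributions by $1/\dimInd$. Your coordinatewise computation is slightly cleaner. The paper's trace argument writes the scatter matrix as $\OrthMatrixC\operatorname{diag}(\cdot)\OrthMatrixC^\top$, which ignores generally nonzero cross terms between the $\centroid$-direction and $\centroid^\perp$; this is harmless for the trace but not literally true.

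For the second inequality your route genuinely differs from the paper's, and yours is the one that works. The paper's analogous step evaluates $\centroid^\top\SpdMatrix^{-1}\centroid$ as $1/\lambda_1$ and concludes from $\lambda_1\geq\alpha>1$. However, $\OrthMatrixC^\top\centroid=\|\centroid\|_2\,(1,0,\ldots,0)^\top$, so the actual value is $\|\centroid\|_2^2/\lambda_1$, and $\alpha>1$ alone does not control it. For the off-centered matrix the claim in fact fails: taking all $\VectorB^{(\sumIndA)}=\centroid$ with $\|\centroid\|_2^2>\alpha$ gives $\lambda_1=\alpha$ and $\centroid^\top\SpdMatrix_o^{-1}\centroid>1$. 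Your Jensen bound $\lambda_1\geq\dimInd\|\centroid\|_2^2$ is available precisely because the centered $\lambda_1$ is built from the uncentred second moment along $\centroid$. It is what yields $\centroid^\top\SpdMatrix_c^{-1}\centroid\leq 1/\dimInd<1$, and you correctly identified this as the point where the two constructions differ.

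One correction: drop the $\dimInd=1$ fallback instead of trying to rescue it. Neither alternative you sketch works, and the strict inequality is actually false there. Take $\dimInd=1$, all $\VectorB^{(\sumIndA)}=\centroid$, and $\|\centroid\|_2^2\geq\alpha$. Then $\lambda_1=\max\{\|\centroid\|_2^2,\alpha\}=\|\centroid\|_2^2$, so $\centroid^\top\SpdMatrix_c^{-1}\centroid=1$. Simply state $\dimInd\geq 2$, which the construction with $\OrthMatrix\in\Real^{\dimInd\times(\dimInd-1)}$ tacitly assumes anyway.
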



Returning to the example in \Cref{fig:tree-example}, we observe that this construction produces geodesics with a well-behaved shape -- along with its variant incorporating a strongly concave function $ \monotone(s) := \log(5s + 1) $ --, as illustrated in \Cref{fig:tree-geodesics}.

\begin{figure}[h!]
    \centering
        \begin{subfigure}{0.48\linewidth}
        \centering
        \includegraphics[width=\linewidth]{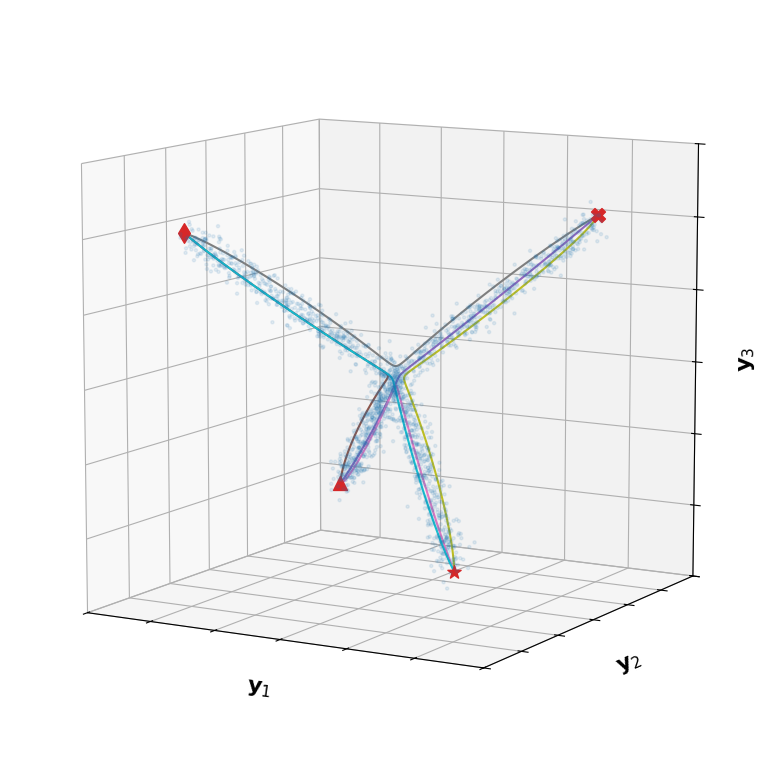}
        \caption{Pullback geodesics under $\diffeo := \diffeoC_{\monotone} \circ \diffeoB_{\radial}$.}
        \label{fig:river-geodesics-schematic-b}
    \end{subfigure}
    \hfill
    \begin{subfigure}{0.48\linewidth}
        \centering
        \includegraphics[width=\linewidth]{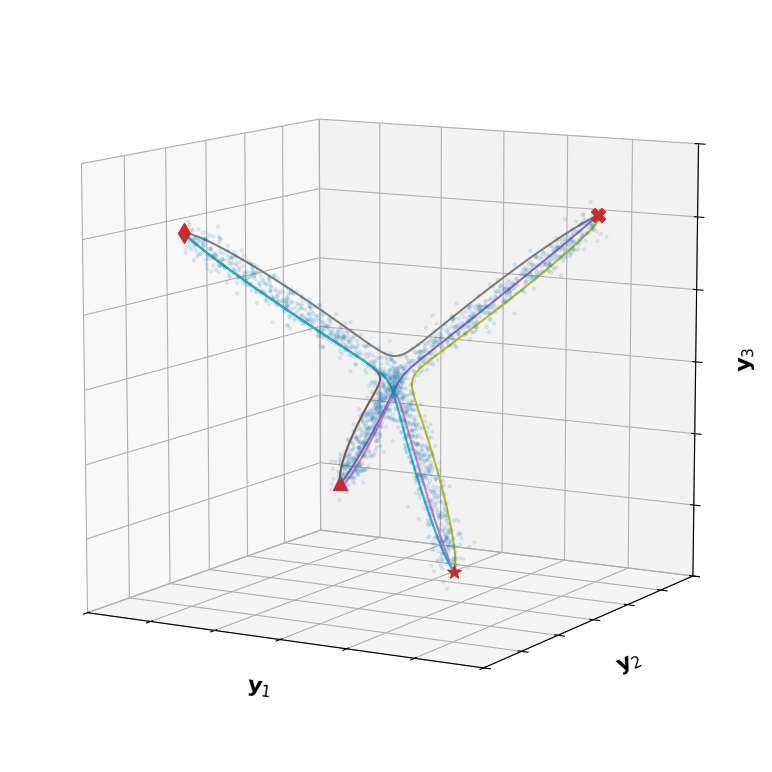}
        \caption{Pullback geodesics under $\diffeo := \diffeoB_{\radial}$.}
        \label{fig:river-geodesics-schematic-a}
    \end{subfigure}
    \caption{The proposed construction \eqref{eq:radial-outer-construction} combined with \eqref{eq:softmin-ellipsoidal-radial} yields geodesics between archetypes that remain close to the data support under the naive construction (right). These can be further drawn inward by incorporating a second diffeomorphism (left).}
    \label{fig:tree-geodesics}
\end{figure}

\section{Numerical Experiments}
\label{sec:numerics}

The proposed framework has already shown to performs well on the low-dimensional examples in previous sections, but a number of questions remain open. In particular, it is not yet clear (i) how effectively the three-step procedure in \Cref{sec:learning-stars} can learn deformed star densities (in both labeled and unlabeled settings), (ii) how well the star geometry captures high-dimensional data sets, for example in terms of the quality of interpolation, and (iii) whether our RAM-efficient optimization scheme requires further refinement, either for optimization itself or for downstream classification performance. Moreover, we would like to understand how iso-Riemannian geometry influences the various data-analysis tasks within this framework.

Since no ground truth is available in high dimensions, we require a setting where we can visually assess whether geodesics meaningfully traverse the data cloud, and which naturally admits both labeled and unlabeled variants. To this end, we consider the MNIST data set in two ways: first, we select a single digit class and apply our learning pipeline in the unlabeled setting, using one archetype per mode, and second, we use all digits together with their labels, grouping the data into ten class-specific subsets. In all experiments we further compose the learned star diffeomorphism $\diffeoB_{\radial} \circ \diffeoA_{\networkParams^*}$ with the additional diffeomorphism $\diffeoC_{\monotone}$, where $\monotone(s) := \log(10s + 1)$, in order to regularize the resulting geodesics. Consequently, all pullback geodesics are generated with respect to the composite map $\diffeo := \diffeoC_{\monotone} \circ \diffeoB_{\radial} \circ \diffeoA_{\networkParams^*}$.

For a fair comparison with existing manifold-learning approaches, we note that the only framework currently shown to learn meaningful (and computationally tractable) geodesics on MNIST is the method of \cite{diepeveen2025scorebased}, whose loss and parametrization were later refined in \cite{diepeveen2025manifold}. This construction reduces to normalizing flow training under a diffeomorphism with constant Jacobian determinant, which coincides with the first step in our star-learning procedure and therefore provides a natural baseline. To make the comparison as fair as possible, we adopt the same training routine for the normalizing flow component—using identical training parameters and network architectures in all settings—and we emphasize that these choices are not numerically optimized, but rather selected so that both the improvements over existing methods and the most pressing directions for further work are clearly visible. All training details are deferred to \Cref{app:numerics}.


\subsection{Single digit MNIST}
When considering the digit 3, \Cref{fig:mnist-three-stars} shows that step 1 of the three-step approach indeed yields latent stars. Here it should be noted that such projections can be misleading: the toy data set in \Cref{fig:tree-example} also yields cross-shaped two-dimensional views, yet the full higher-dimensional structure is not a clean multidimensional cross with symmetric radial branches. Similarly to the toy example, however, this ambiguity in the projections does not interfere with our automated procedure for extracting both archetypes and radial functions.

\begin{figure}[h!]
    \centering
    \includegraphics[width=\linewidth]{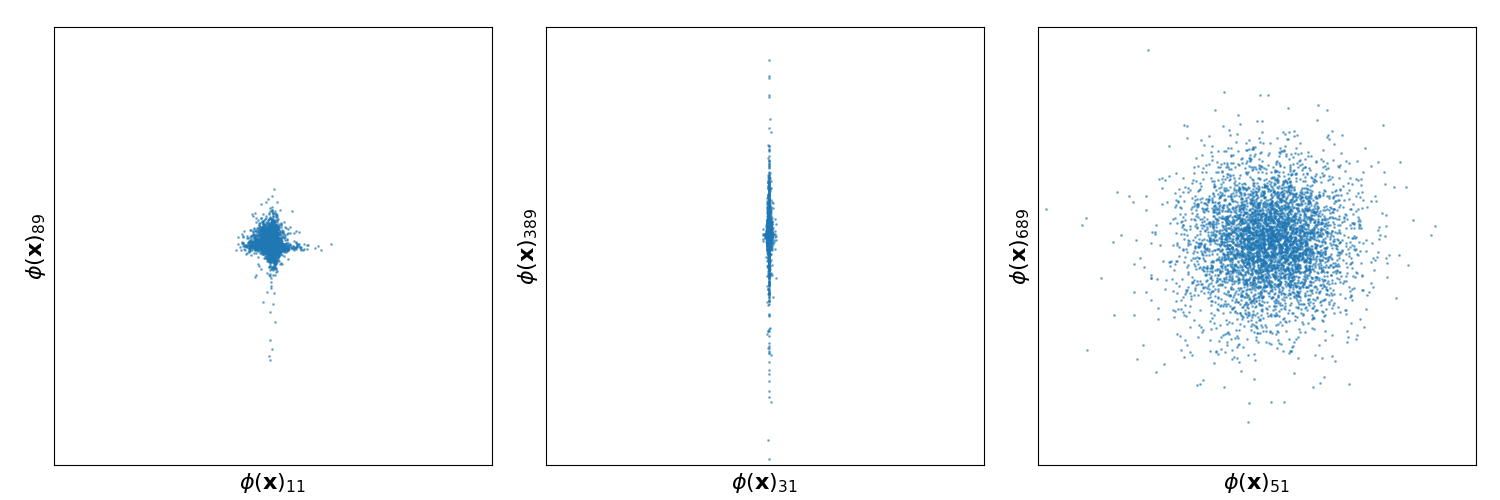}
    \caption{After training the normalizing flow with loss \eqref{eq:nf-loss} on the digit 3 from MNIST, the latent representations $\VectorB^{(\sumIndA)} := \diffeoA_{\networkParams^*}(\Vector^{(\sumIndA)})$ for $\sumIndA = 1,\ldots,\dataNum$ exhibit a pronounced star-like structure in several two-dimensional projections (left and middle), while appearing approximately isotropic in others (right).}
    \label{fig:mnist-three-stars}
\end{figure}

Advancing to step 2, selecting \(\dimIndB = 10\) archetypes produces the digits shown in \Cref{fig:mnist-threes-archetypes}, which provide a reasonable summary of the main ways in which threes appear in the data. Using the labels \eqref{eq:class-label} induced by these archetypes, we then estimate a radial function \eqref{eq:softmin-ellipsoidal-radial} for each class, thereby completing step 3: this yields an estimated deformed star distribution and, together with the archetypes, equips us with a Riemannian structure for subsequent data analysis.

\begin{figure}[h!]
    \centering
    \includegraphics[width=\linewidth]{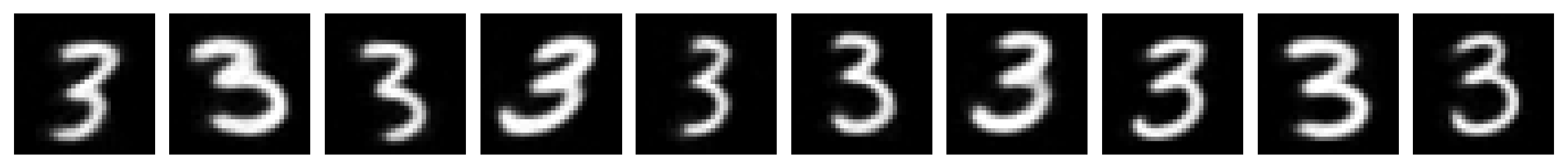}
    \caption{The archetypes $\emVector_{\networkParams^*}^{(\sumIndB)}$ \eqref{eq:latent-aa-aarchetypes} for $\sumIndB = 1, \ldots, 10$ obtained from Riemannian Archetypal analysis \eqref{eq:latent-aa} -- under the pullback geometry generated by $\diffeoA_{\networkParams^*}$ -- sketch a reasonable picture of the different types of threes in the data set.}
    \label{fig:mnist-threes-archetypes}
\end{figure}

Next, \Cref{fig:mnist-three-geodesics} shows that interpolation \eqref{eq:thm-geodesic-remetrized} under the reference diffeomorphism \(\diffeoA_{\networkParams^*}\) already improves substantially on linear interpolation, while the star-based geodesic associated with \(\diffeo\) yields paths that more closely follow the underlying data geometry, and the iso-corrected variant \eqref{eq:iso-geodesic} further restores a more interpretable notion of time along the trajectory.

\begin{figure}[h!]
  \centering
  \begin{minipage}{0.08\textwidth}
    \centering
    $\gamma^{\ell^2}_{\Vector, \VectorB} (t)$\\[1cm]
    $\gamma^{\diffeoA_{\networkParams^*}}_{\Vector, \VectorB} (t)$\\[1cm]
    $\gamma^{\diffeo}_{\Vector, \VectorB} (t)$\\[1cm]
    $\gamma^{\diffeo, \iso}_{\Vector, \VectorB} (t)$\\[1cm]
  \end{minipage}%
  \hfill
  \begin{minipage}{0.9\textwidth}
    \centering
    \includegraphics[width=\linewidth]{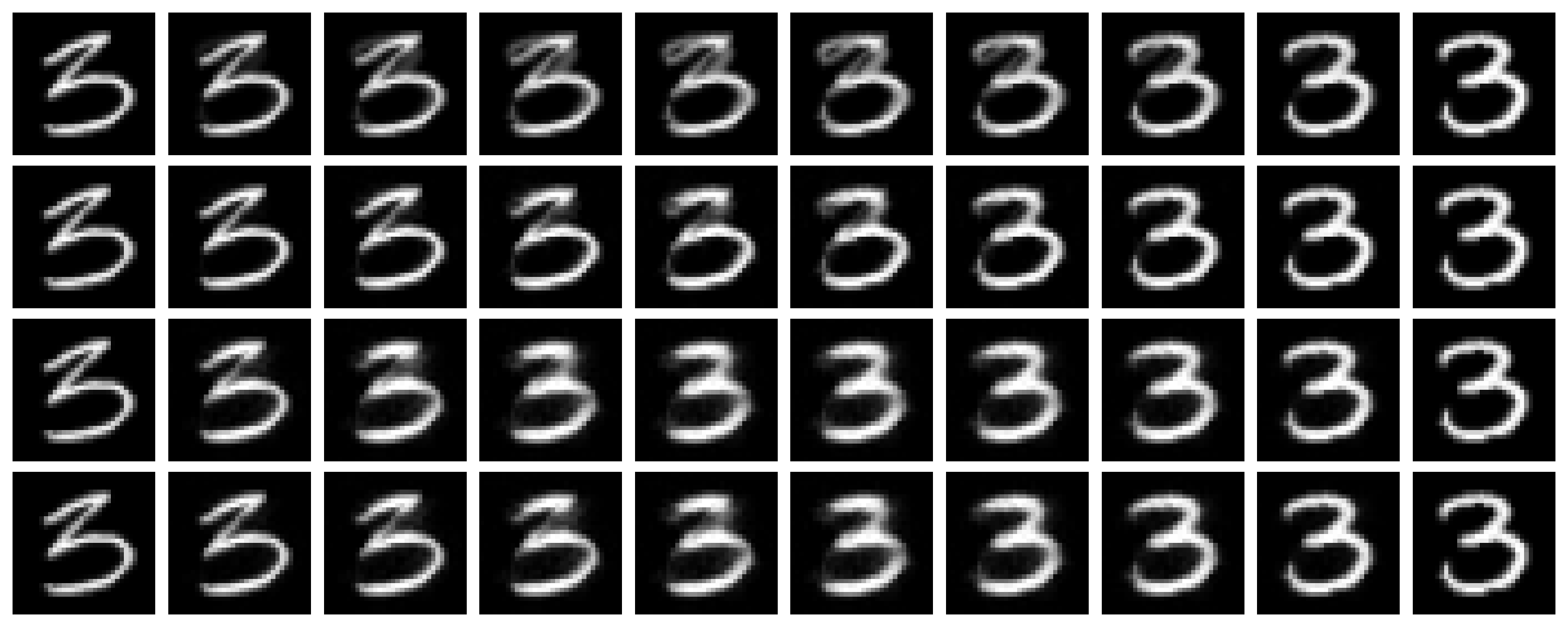}
  \end{minipage}
  \caption{When interpolating between two validation images of the digit three (left- and right-most columns), the linear path (first row) simply causes one image to fade out while the other fades in. In contrast, the baseline pullback geodesic (second row) produces a more nonlinear transformation, but the lower part of the three almost closes into a loop -- an artifact not supported by the data. The proposed geodesic (third row), which respects the learned star structure, first moves the left three toward a more “reference-like” three (closer to the star center) before continuing to the target image. The corresponding iso-geodesic (fourth row) further mitigates the naive impression that the trajectory spends an inordinate amount of time near this reference three.}
  \label{fig:mnist-three-geodesics}
\end{figure}

Finally, the Riemannian archetypal mapping (RAM) projections in \Cref{fig:mnist-threes-ram} highlight the importance of refining the relaxed RAM initialization and demonstrate that this procedure yields very reasonable “denoised” instances of the digit three.

\begin{figure}[h!]
  \centering
  \begin{minipage}{0.08\textwidth}
    \centering
    $\Vector_{\operatorname{val}}^{(\sumIndA)}$\\[1cm]
    $\relRAM^\diffeo(\Vector_{\operatorname{val}}^{(\sumIndA)})$\\[1cm]
    $\RAM^\diffeo(\Vector_{\operatorname{val}}^{(\sumIndA)})$\\[1cm]
  \end{minipage}%
  \hfill
  \begin{minipage}{0.9\textwidth}
    \centering
    \includegraphics[width=\linewidth]{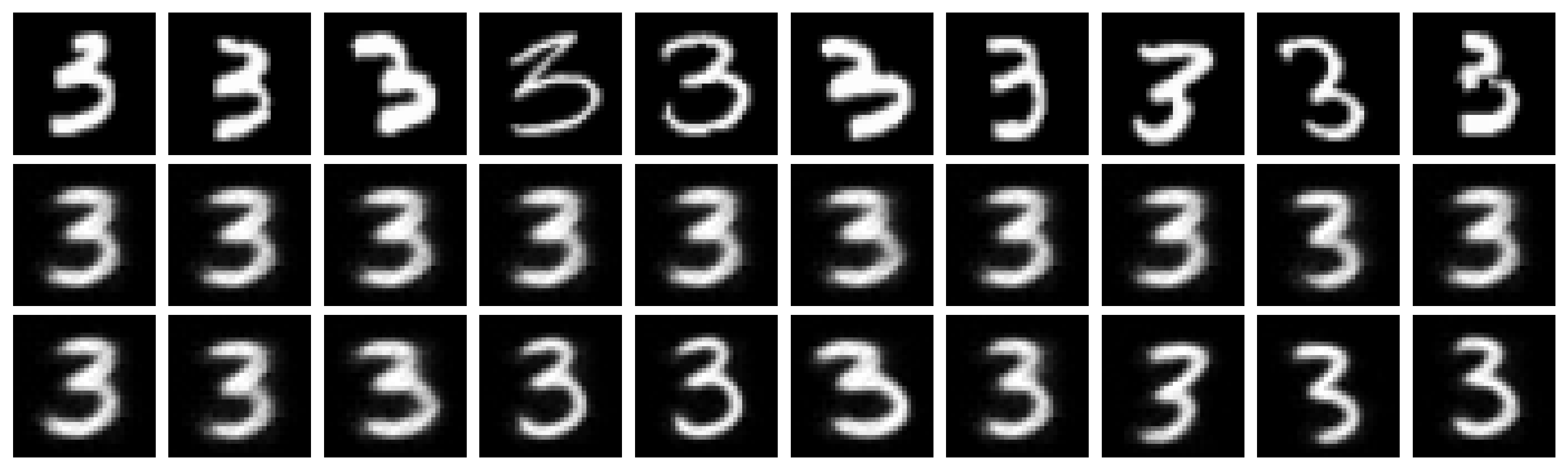}
  \end{minipage}
  \caption{Validation images (top row) are first mapped using the relaxed RAM (middle row), which then serves as an initialization for the RAM (bottom row). Without this additional refinement step, the relaxed RAM tends to produce highly similar projections.}
  \label{fig:mnist-threes-ram}
\end{figure}

\clearpage
\subsection{All digit MNIST}

Similarly to the digit 3 case, when we consider the full MNIST data set, \Cref{fig:mnist-stars} illustrates that step 1 of the three-step procedure again produces latent star structures. As before, however, such projections should be interpreted with care, for the reasons discussed above.

\begin{figure}[h!]
    \centering
    \includegraphics[width=\linewidth]{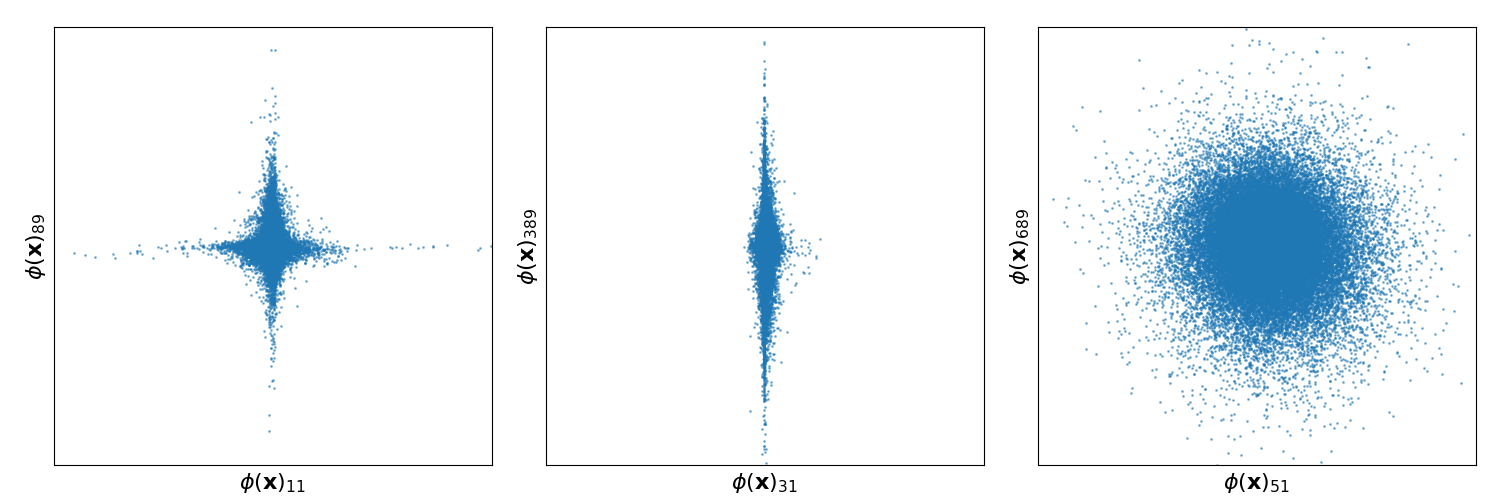}
    \caption{After training the normalizing flow with loss \eqref{eq:nf-loss} on MNIST, the latent representations $\VectorB^{(\sumIndA)} := \diffeoA_{\networkParams^*}(\Vector^{(\sumIndA)})$ for $\sumIndA = 1,\ldots,\dataNum$ exhibit a pronounced star-like structure in several two-dimensional projections (left and middle), while appearing approximately isotropic in others (right).}
    \label{fig:mnist-stars}
\end{figure}

Advancing to step 2, we select 10 archetypes per digit class, giving a total of \(\dimIndB = 100\) archetypes and yielding the representative digits shown in \Cref{fig:mnist-archetypes}, which form a reasonable summary of the data. Using the induced class labels, we then estimate a radial function \eqref{eq:softmin-ellipsoidal-radial} for each class, thereby completing step 3: this provides an estimated deformed star distribution and, together with the archetypes -- now multiple per class -- endows the data with a Riemannian structure suitable for subsequent analysis.

\begin{figure}[h!]
    \centering
    \includegraphics[width=\linewidth]{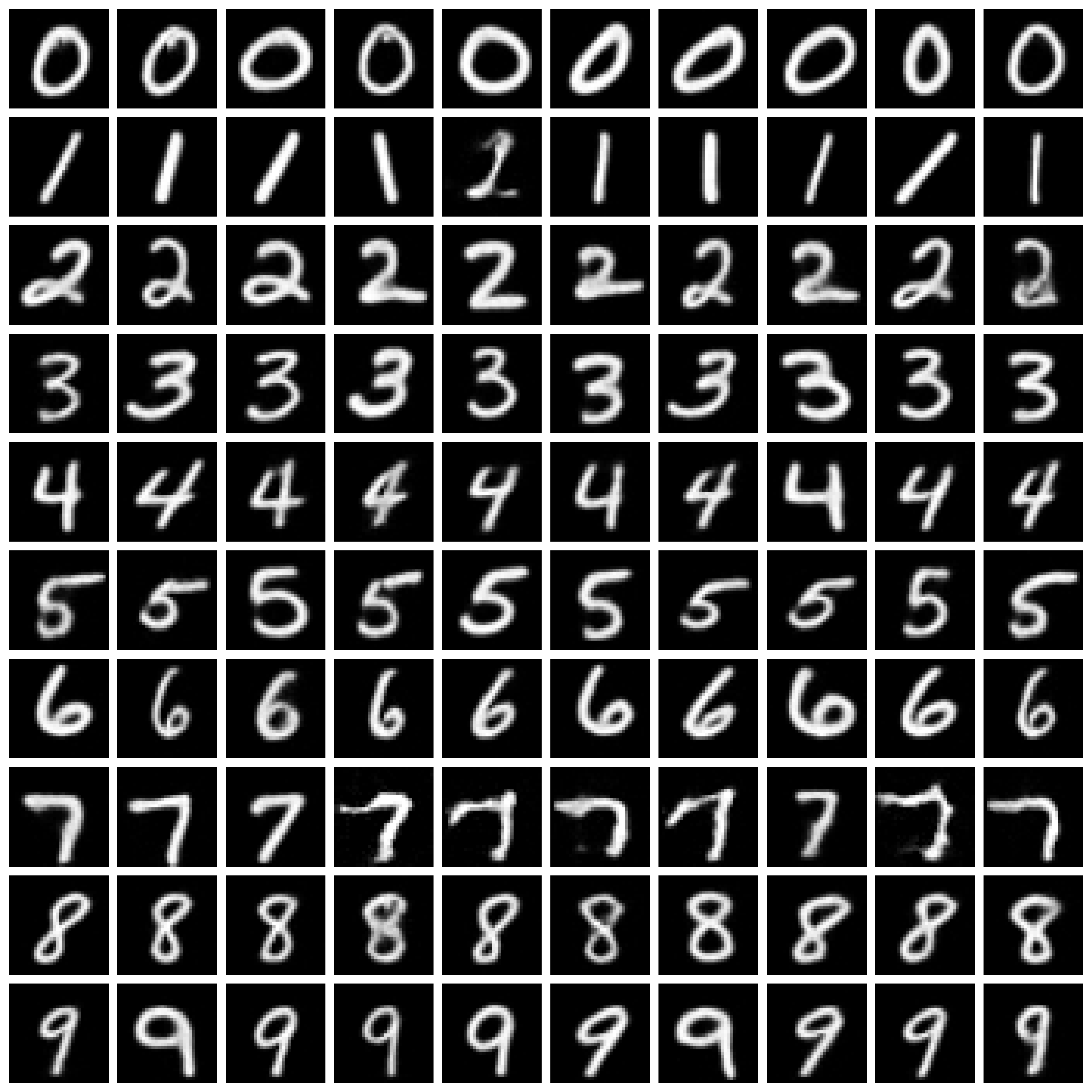}
    \caption{The archetypes $\emVector_{\networkParams^*}^{(\sumIndB)}$ \eqref{eq:latent-aa-aarchetypes} for $\sumIndB = 1, \ldots, 100$ obtained from Riemannian Archetypal analysis \eqref{eq:latent-aa} -- under the pullback geometry generated by $\diffeoA_{\networkParams^*}$ -- on each of the labeled subsets of the data set sketch a reasonable picture of the different types of digits in the data set.}
    \label{fig:mnist-archetypes}
\end{figure}

Next, \Cref{fig:mnist-geodesics} demonstrates that -- much as before -- interpolation \eqref{eq:thm-geodesic-remetrized} under the reference diffeomorphism \(\diffeoA_{\networkParams^*}\) already represents a substantial improvement over linear interpolation, while the star-based geodesic associated with \(\diffeo\) produces trajectories that more faithfully follow the data geometry, and the iso-corrected version \eqref{eq:iso-geodesic} further recovers a more interpretable notion of time along the path.

\begin{figure}[h!]
  \centering
  \begin{minipage}{0.08\textwidth}
    \centering
    $\gamma^{\ell^2}_{\Vector, \VectorB} (t)$\\[1cm]
    $\gamma^{\diffeoA_{\networkParams^*}}_{\Vector, \VectorB} (t)$\\[1cm]
    $\gamma^{\diffeo}_{\Vector, \VectorB} (t)$\\[1cm]
    $\gamma^{\diffeo, \iso}_{\Vector, \VectorB} (t)$\\[1cm]
  \end{minipage}%
  \hfill
  \begin{minipage}{0.9\textwidth}
    \centering
    \includegraphics[width=\linewidth]{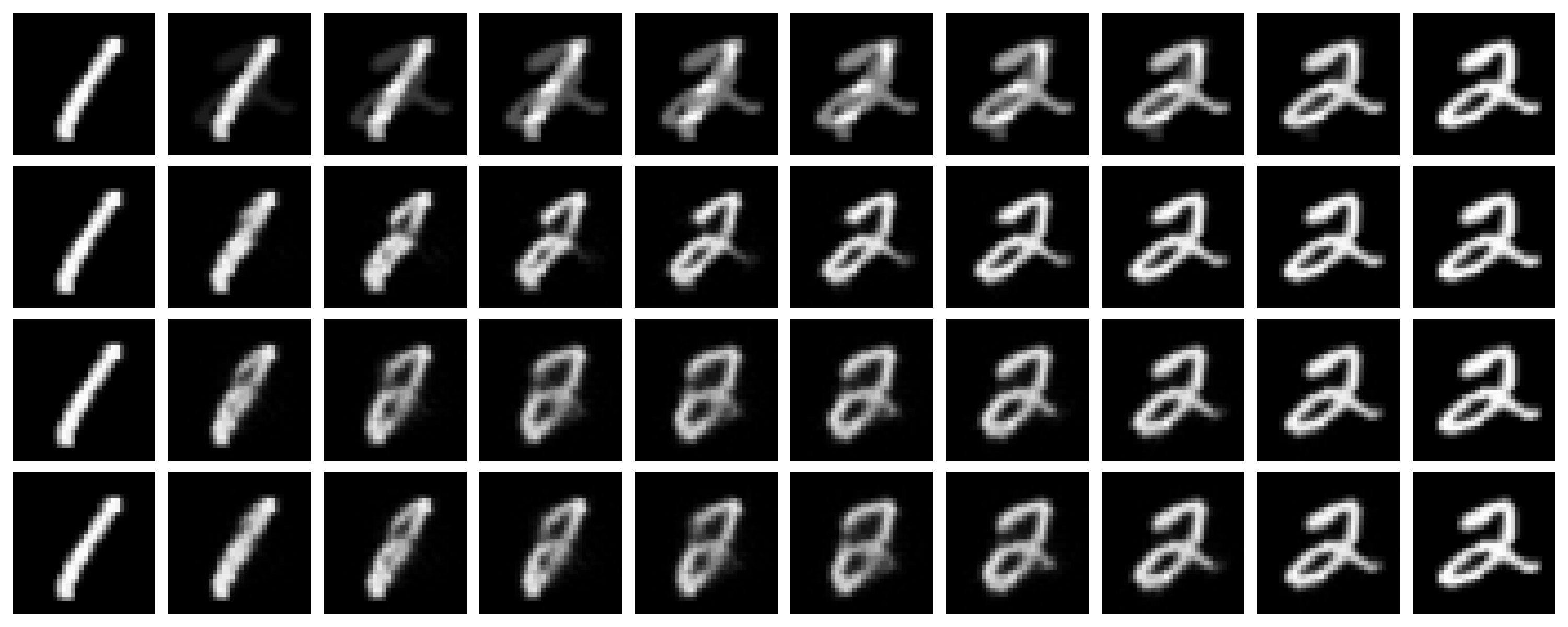}
  \end{minipage}\\
  \begin{minipage}{0.08\textwidth}
    \centering
    $\gamma^{\ell^2}_{\Vector, \VectorB} (t)$\\[1cm]
    $\gamma^{\diffeoA_{\networkParams^*}}_{\Vector, \VectorB} (t)$\\[1cm]
    $\gamma^{\diffeo}_{\Vector, \VectorB} (t)$\\[1cm]
    $\gamma^{\diffeo, \iso}_{\Vector, \VectorB} (t)$\\[1cm]
  \end{minipage}%
  \hfill
  \begin{minipage}{0.9\textwidth}
    \centering
    \includegraphics[width=\linewidth]{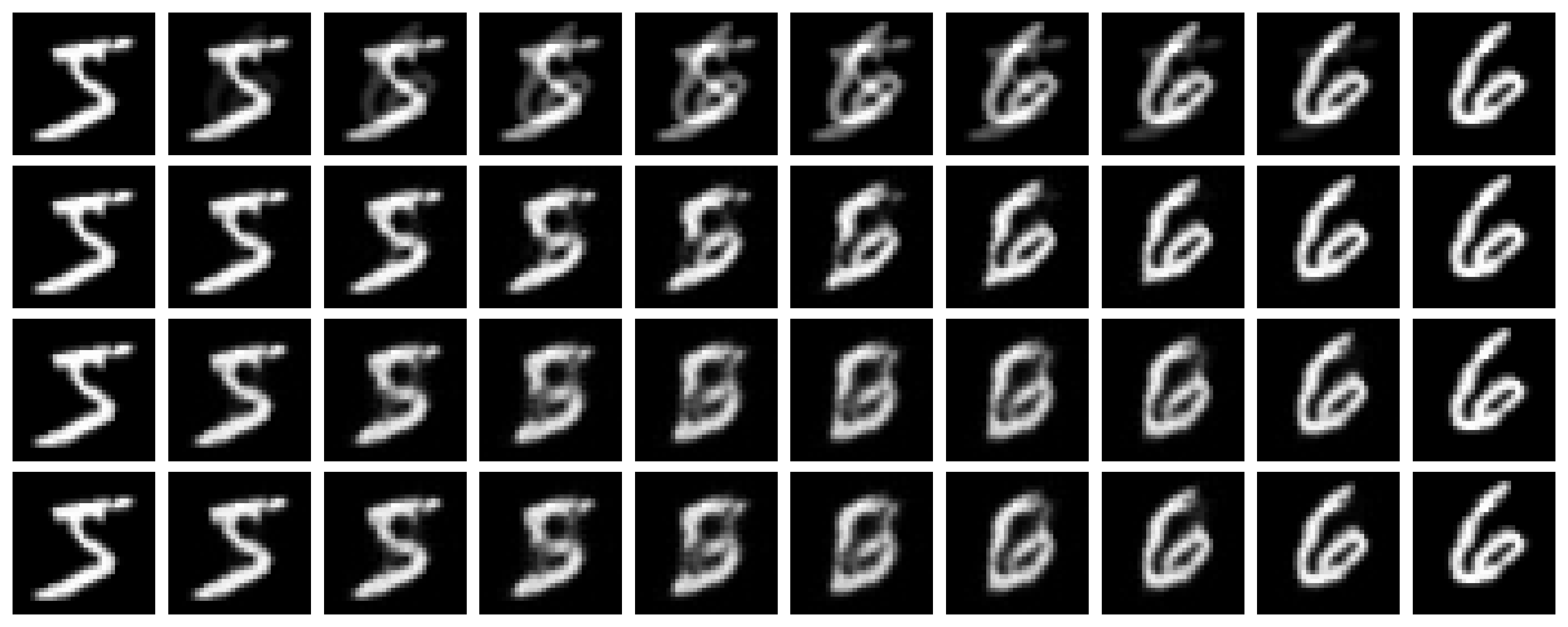}
  \end{minipage}
  \caption{When interpolating between different digits (left- and right-most columns), the linear path (first rows) simply causes one image to fade out while the other fades in. In contrast, the baseline pullback geodesic (second rows) produces a more nonlinear transformation, but it might move through images that are not always supported by the data. The proposed geodesic (third rows), which respects the learned star structure, first moves the left digit toward a more “reference-like” digit (closer to the star center) before continuing to the target digit. In particular, these reference-like digits seem to be eights. The corresponding iso-geodesic (fourth rows) further mitigates the naive impression that the trajectory spends an inordinate amount of time near this reference points.}
  \label{fig:mnist-geodesics}
\end{figure}

Then, the Riemannian archetypal mapping (RAM) projections in \Cref{fig:mnist-ram} expose current limitations of the method. In this setting, the algorithm fails to converge within a reasonable time, as the line search terminates once admissible step sizes fall below numerical precision. In other words, optimizing through the diffeomorphism -- already flagged in \Cref{rem:ram-optimization} as a potential bottleneck -- appears to have reached its limits, and further progress will realistically require a more developed iso-Riemannian optimization theory.

Finally, it is worth emphasizing what these limitations imply for classification. Starting from the vector \(\RAMweight^* \in \Real^{100}\) obtained by solving \eqref{eq:RAM-rewrite}, and its isometry-corrected counterpart \(\RAMweightB^* \in \Real^{100}\) from \eqref{eq:iso-log-weighted}, we aggregate the components associated with each digit class (for example, \(\RAMweight^*_1 + \cdots + \RAMweight^*_{10}\) gives the net weight for the digit 0 class). This allows us to examine more directly how the optimization issues impact downstream classification performance. As shown in \Cref{fig:mnist-classifications}, points that are visibly well-projected in \Cref{fig:mnist-ram}, i.e., digits 0, 1, 2, 3, 4, 6, and 7, are classified reliably -- often with additional improvement after the isometry correction -- whereas digits 5, 8, and 9 are classified less accurately. We stress once more that this shortcoming stems from optimization limits rather than from any intrinsic deficiency of the overall methodology.

\begin{figure}[h!]
  \centering
  \begin{minipage}{0.08\textwidth}
    \centering
    $\Vector_{\operatorname{val}}^{(\sumIndA)}$\\[1cm]
    $\relRAM^\diffeo(\Vector_{\operatorname{val}}^{(\sumIndA)})$\\[1cm]
    $\RAM^\diffeo(\Vector_{\operatorname{val}}^{(\sumIndA)})$\\[1cm]
  \end{minipage}%
  \hfill
  \begin{minipage}{0.9\textwidth}
    \centering
    \includegraphics[width=\linewidth]{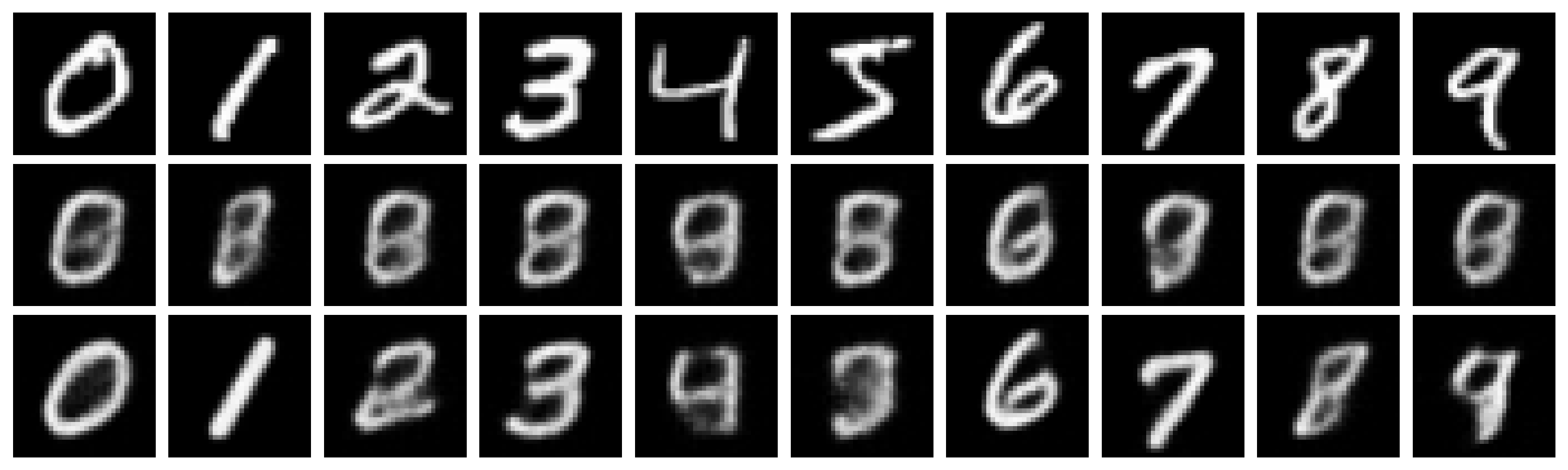}
  \end{minipage}
  \caption{Validation data points (top row) are first mapped using the relaxed RAM (middle row), which then serves as an initialization for the RAM (bottom row). Without this additional refinement step, the relaxed RAM tends to produce highly similar projections. Nevertheless, the RAM does not fully converge for all data points, because the line search step size eventually falls below numerical precision and the procedure can no longer make progress.}
  \label{fig:mnist-ram}
\end{figure}

\begin{figure}[h!]
\vspace{-1cm}
  \centering
  \begin{minipage}{0.48\textwidth}
    \includegraphics[width=\linewidth]{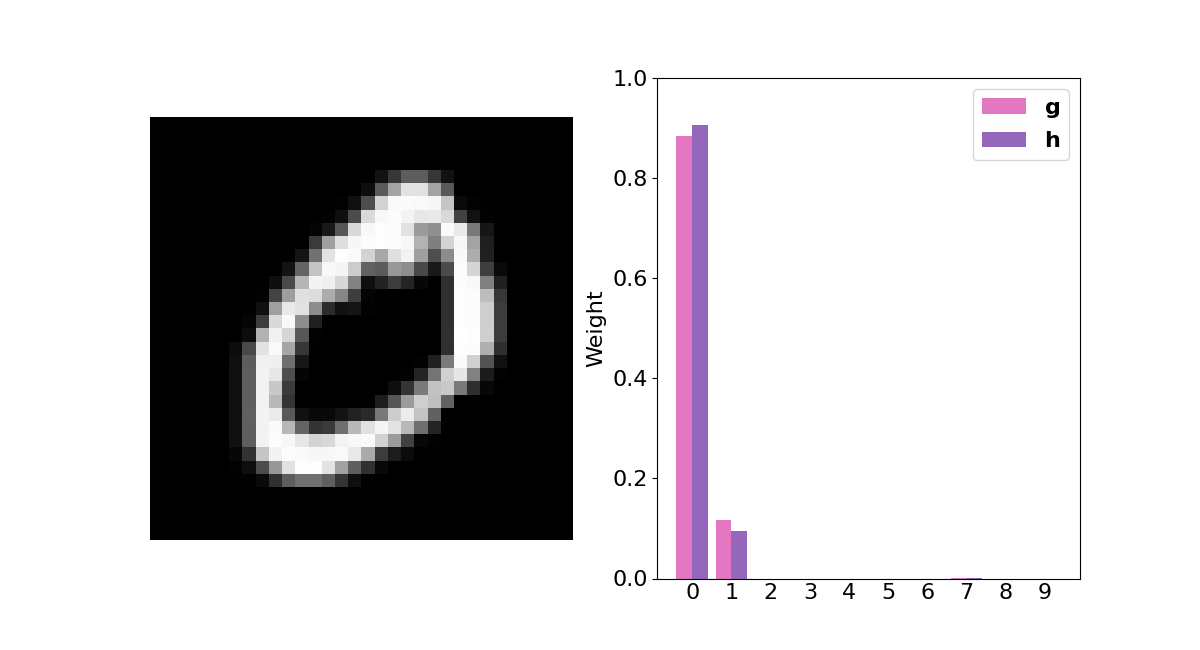}
  \end{minipage}%
  \hfill
  \begin{minipage}{0.48\textwidth}
    \includegraphics[width=\linewidth]{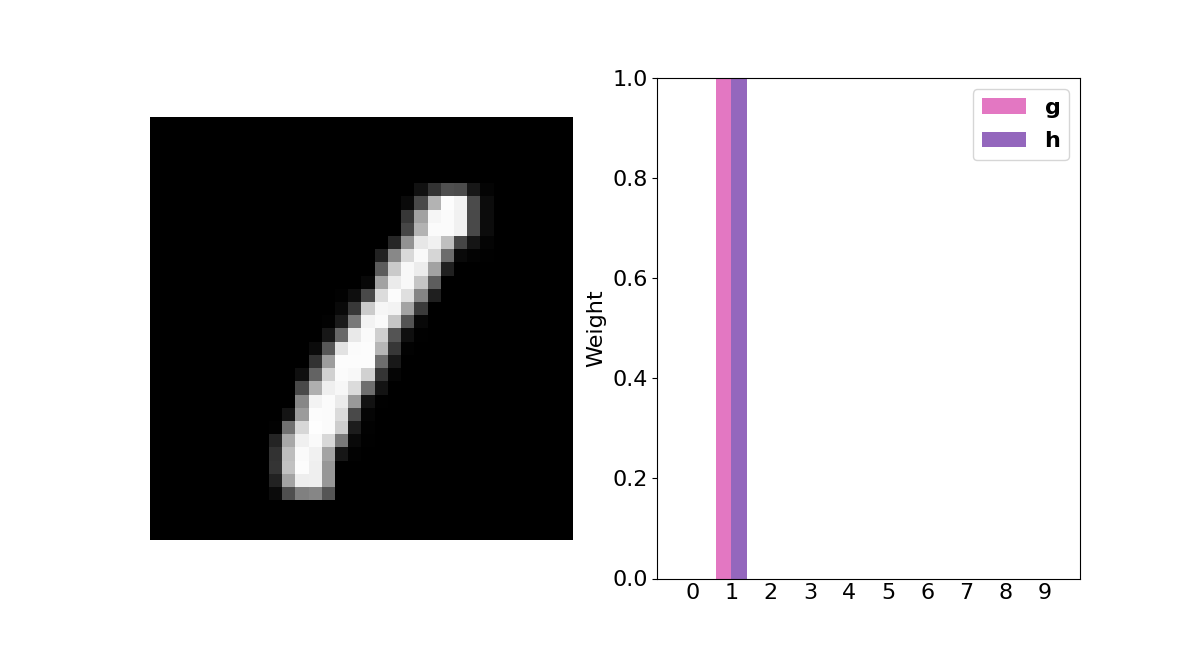}
  \end{minipage}%
  \\
  \begin{minipage}{0.48\textwidth}
    \includegraphics[width=\linewidth]{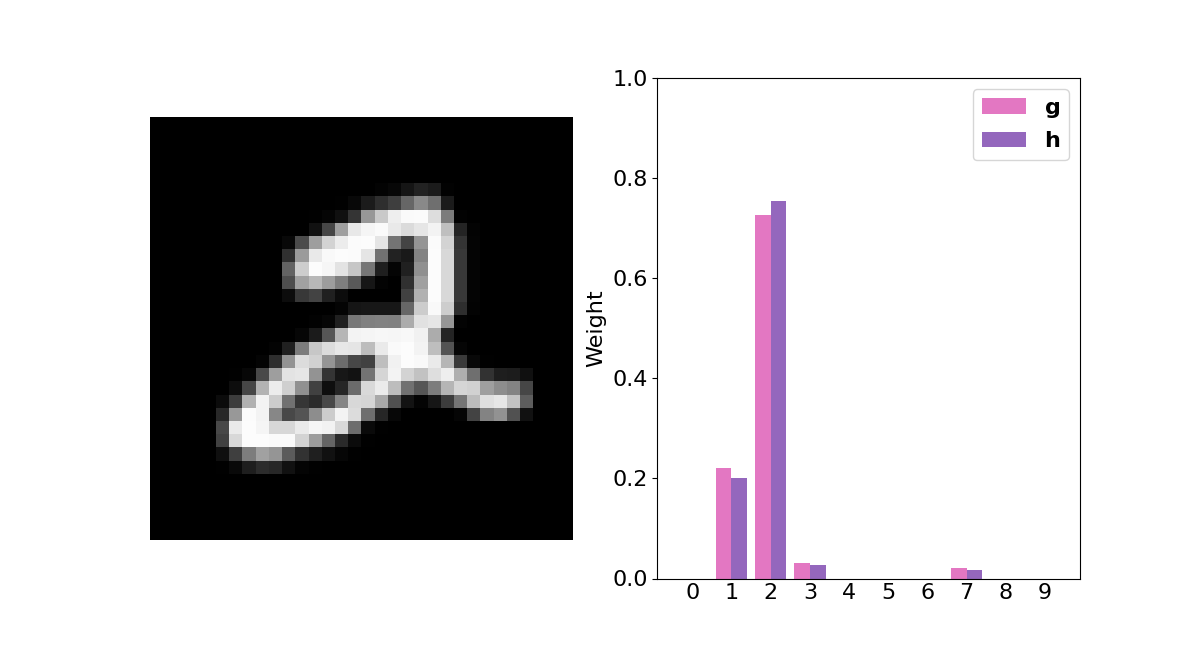}
  \end{minipage}%
  \hfill
  \begin{minipage}{0.48\textwidth}
    \includegraphics[width=\linewidth]{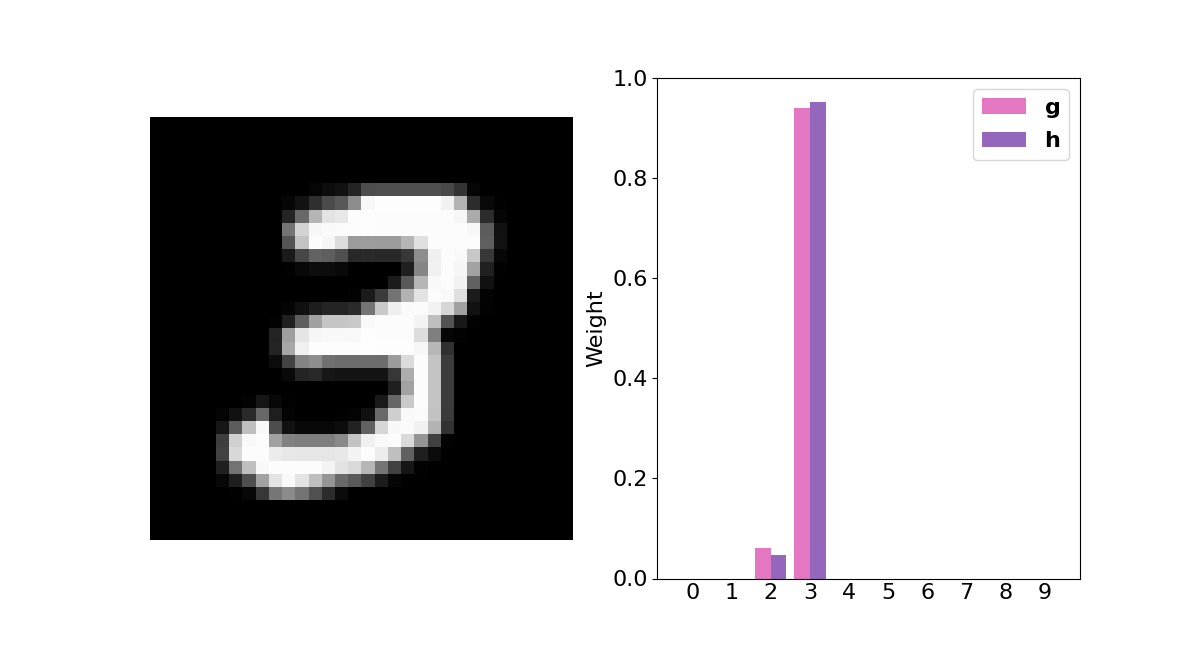}
  \end{minipage}%
  \\
  \begin{minipage}{0.48\textwidth}
    \includegraphics[width=\linewidth]{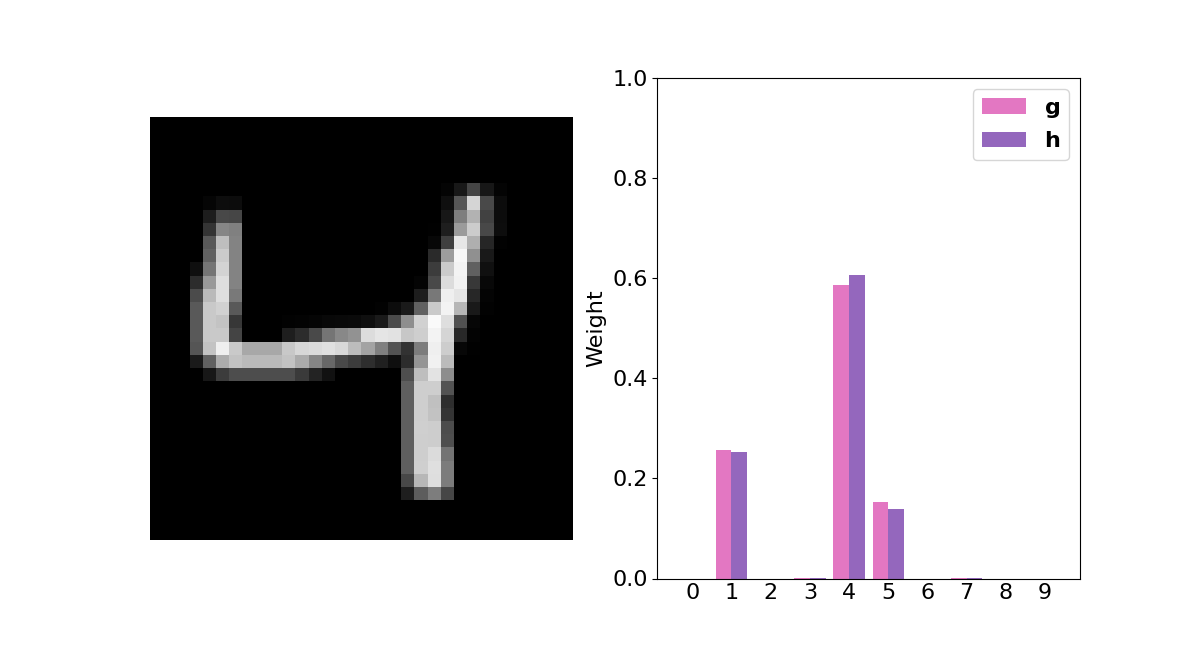}
  \end{minipage}%
  \hfill
  \begin{minipage}{0.48\textwidth}
    \includegraphics[width=\linewidth]{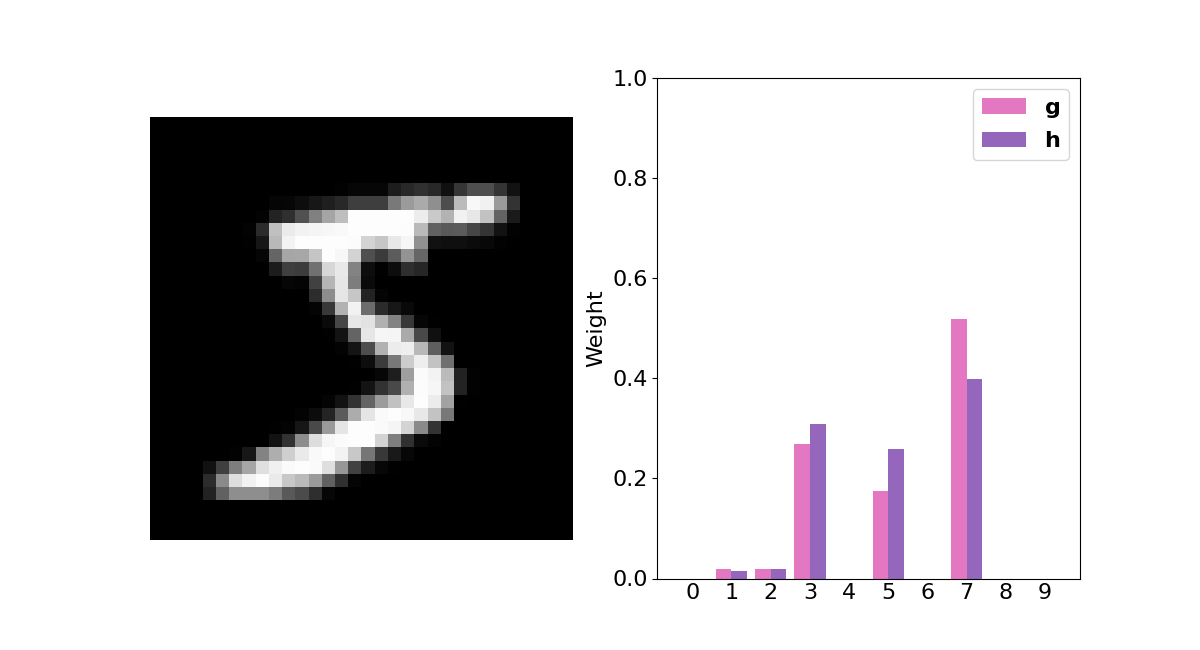}
  \end{minipage}%
  \\
  \begin{minipage}{0.48\textwidth}
    \includegraphics[width=\linewidth]{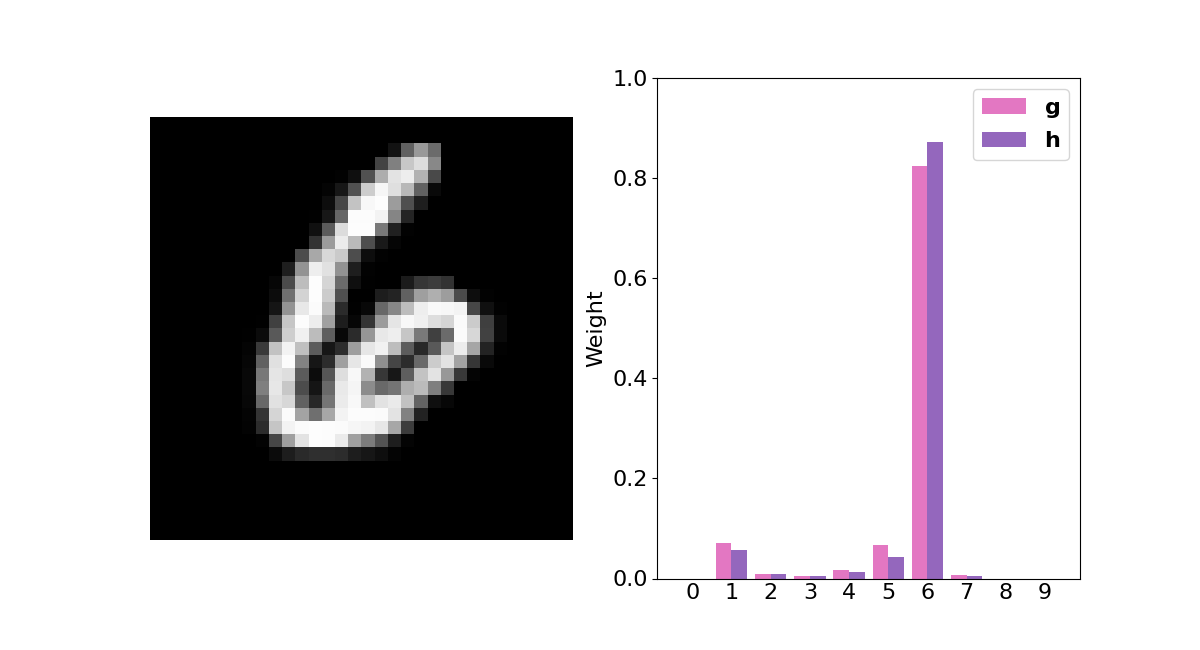}
  \end{minipage}%
  \hfill
  \begin{minipage}{0.48\textwidth}
    \includegraphics[width=\linewidth]{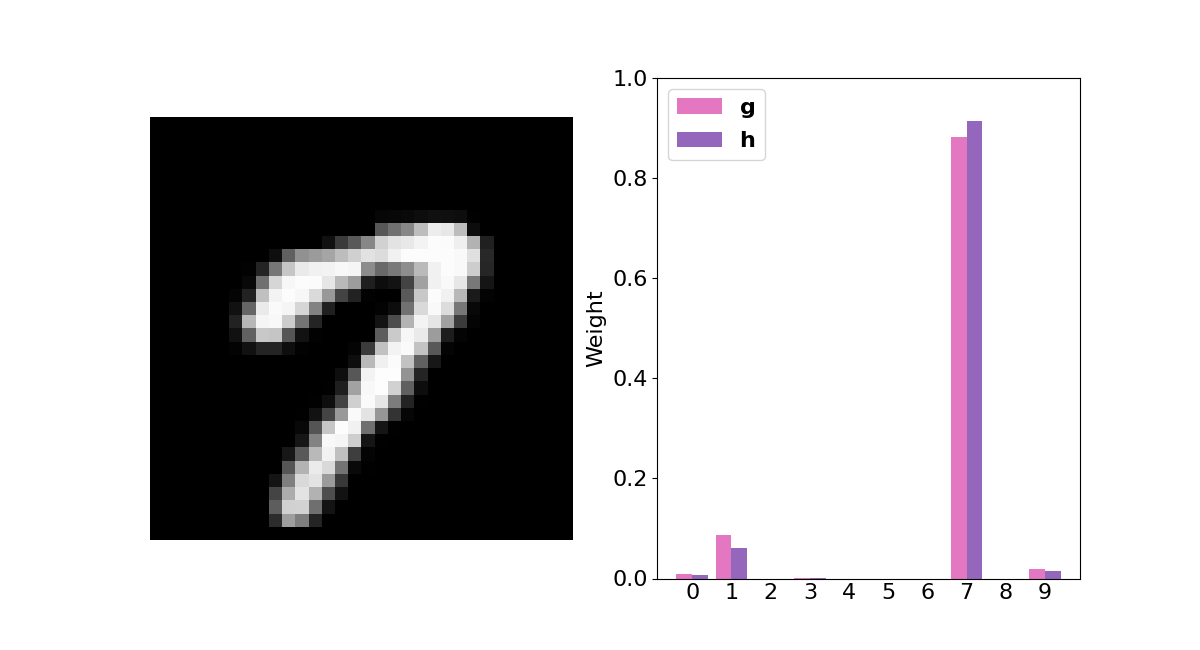}
  \end{minipage}%
  \\
  \begin{minipage}{0.48\textwidth}
    \includegraphics[width=\linewidth]{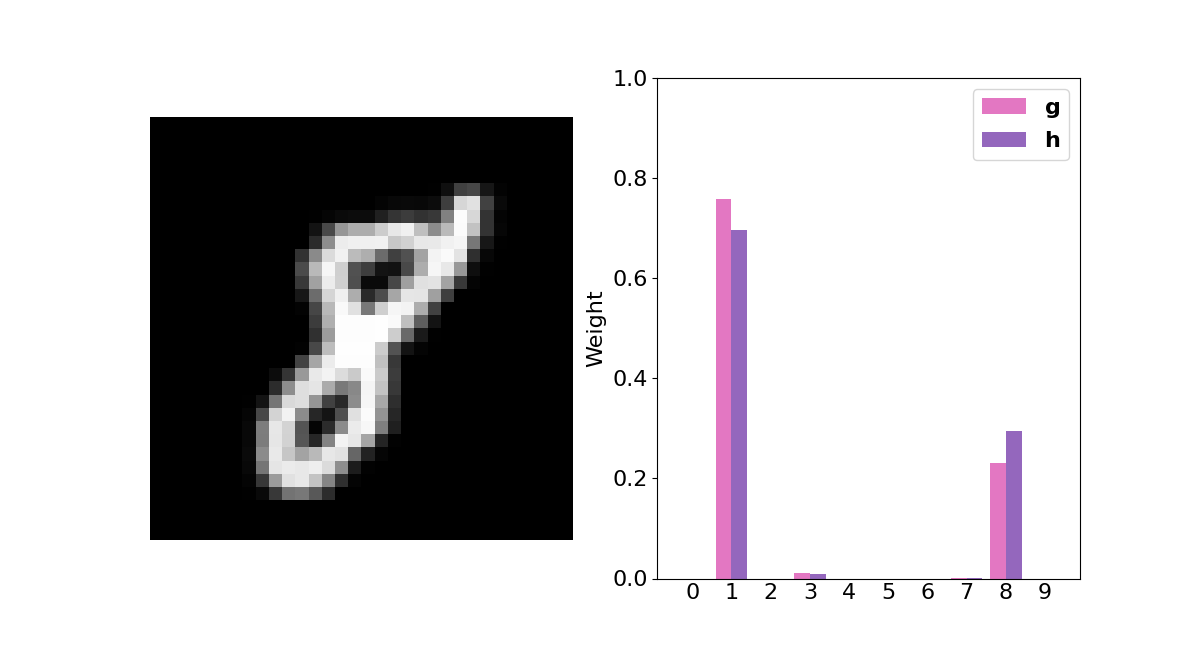}
  \end{minipage}%
  \hfill
  \begin{minipage}{0.48\textwidth}
    \includegraphics[width=\linewidth]{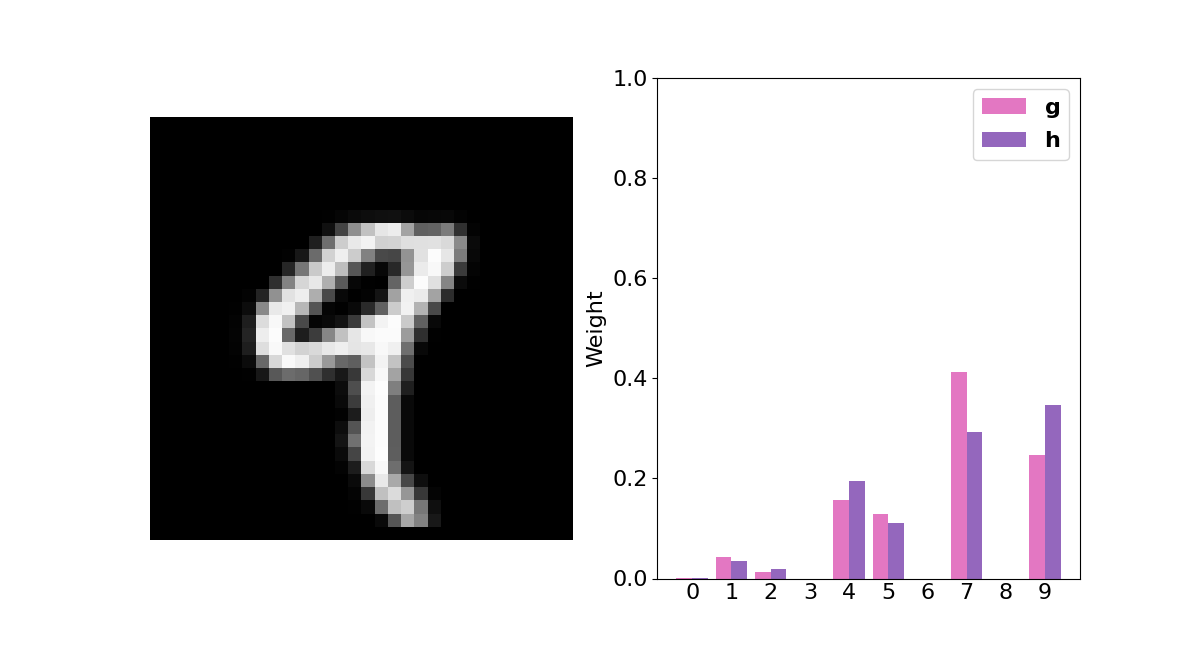}
  \end{minipage}%
  \caption{For each of the validation points in \Cref{fig:mnist-ram}, we look at how its mass is distributed over the archetypes, add up the contributions belonging to the same digit, and read off a predicted class -- before and after the isometry correction, i.e., considering $\RAMweight$ and $\RAMweightB$. For the digits 0, 1, 2, 3, 4, 6, and 7, where the RAM projection is visually clean, this simple RAM-based classifier performs well, often even better after isometry correction, while digits 5, 8, and 9 expose the current optimization limits of the approach rather than a failure of the underlying geometric model.}
  \label{fig:mnist-classifications}
\end{figure}

\clearpage
\section{Conclusions}
\label{sec:conclusions}

This work introduces Riemannian archetypal analysis as a geometrically grounded non-linear extension of classical archetypal analysis for real-valued data whose generating distribution can be approximated by a deformed star distribution. The framework preserves the central interpretability goal of archetypal methods while endowing the ambient space with a data-driven pullback Riemannian structure that supports fast interpolation, projection, denoising, and classification in a unified way. Our experiments show that this perspective is effective on both synthetic and real data, but they also make clear that optimization through the learned diffeomorphism remains the main computational bottleneck, especially for Riemannian archetypal mappings (RAMs) in higher-dimensional settings.

Several natural directions follow from this work. The most immediate is the development of constrained iso-Riemannian optimization methods tailored to RAM-type problems, since the current theory mainly covers first-order unconstrained settings and the numerical results indicate that substantially better optimization schemes should be possible. A second direction is to improve the learning of deformed star models beyond the present constructive scheme, ideally with stronger statistical guarantees and more expressive radial parametrizations. More broadly, the present work suggests that star-shaped geometric models may provide a useful foundation for other interpretable non-linear factorization methods, and that iso-Riemannian ideas may play an important role in making such models computationally viable.

\subsubsection*{Acknowledgments}
WD and DN were partially funded by  Defense Health Agency CDMRP TB240022 and NSF DMS 2408912

\bibliographystyle{plain}
\bibliography{bibliography}

\appendix
\crefalias{section}{appendix}
\crefalias{subsection}{appendix}

\section{Supplementary material to \cref{sec:star-geometry}}
\label{app:star-geometry}

\paragraph{Proof of \Cref{prop:deformed-star-prob-density}}
\begin{proof}

The statement follows from direct computation. To see this, first note that since $\diffeoA$ is a diffeomorphism, the substitution $\VectorB = \diffeoA(\Vector)$ is valid globally, and the
Jacobian factor satisfies $|\det D_\Vector \diffeoA| d\Vector =  d\VectorB$. So
\begin{equation}\label{eq:after-cov}
    \int_{\mathbb{R}^\dimInd} p_{\diffeoA,\radial}(\Vector) \, d\Vector
    =
    \frac{1}{2^{\frac{d}{2}-1}\,\Gamma\!\left(\frac{d}{2}\right)}
    \,
    \frac{1}{
        \displaystyle\int_{\Sphere^{d-1}} \radial(\sVector)^\dimInd \, d\sigma(\sVector)
    }
    \int_{\mathbb{R}^\dimInd}
    \exp\!\left(
        -\frac{1}{2}\,
        \radial\!\left(\frac{\VectorB}{\|\VectorB\|_2}\right)^{-2}
        \|\VectorB\|_2^2
    \right)
    d\VectorB.
\end{equation}

Next, we can write $\VectorB = r\sVector$ with $r \in (0, \infty)$ and $\sVector \in \Sphere^{d-1}$.
Under polar coordinates the Lebesgue measure decomposes as
\[
    d\VectorB = r^{d-1} \, dr \, d\sigma(\sVector).
\]
Since $\|\VectorB\|_2 = r$ and $\VectorB / \|\VectorB\|_2 = \sVector$, the inner integral in \eqref{eq:after-cov} becomes
\begin{equation}\label{eq:polar}
    \int_{\mathbb{R}^\dimInd}
    \exp\!\left(
        -\frac{1}{2}\,\radial\!\left(\frac{\VectorB}{\|\VectorB\|_2}\right)^{-2}\|\VectorB\|_2^2
    \right) d\VectorB
    =
    \int_{\Sphere^{d-1}}
    \int_0^{\infty}
    \exp\!\left(
        -\frac{1}{2}\,\radial(\sVector)^{-2}\, r^2
    \right)
    r^{d-1} \, dr \, d\sigma(\sVector).
\end{equation}

Then, defining $a(\sVector) := \tfrac{1}{2}\,\radial(\sVector)^{-2} > 0$, we can
Use the standard Gamma-function identity
\[
    \int_0^{\infty} e^{-a r^2} r^{d-1} \, dr
    = \frac{1}{2}\, a^{-d/2}\, \Gamma\!\left(\frac{d}{2}\right),
    \qquad a > 0,
\]
to obtain
\begin{multline}
    \int_0^{\infty}
    \exp\!\left(-\frac{1}{2}\,\radial(\sVector)^{-2}\, r^2\right) r^{d-1} \, dr
    = \frac{1}{2}\,
       \left(\frac{1}{2}\,\radial(\sVector)^{-2}\right)^{-d/2}
       \Gamma\!\left(\frac{d}{2}\right)\\
    = \frac{1}{2}\, 2^{d/2}\, \radial(\sVector)^{d}\, \Gamma\!\left(\frac{d}{2}\right) 
    = 2^{\frac{d}{2}-1}\, \Gamma\!\left(\frac{d}{2}\right)\, \radial(\sVector)^{d}.
    \label{eq:polar-worked-out}
\end{multline}

Substituting \eqref{eq:polar-worked-out} back into \eqref{eq:polar} gives
\begin{equation}\label{eq:radial-done}
    \int_{\mathbb{R}^\dimInd}
    \exp\!\left(
        -\frac{1}{2}\,\radial\!\left(\frac{\VectorB}{\|\VectorB\|_2}\right)^{-2}\|\VectorB\|_2^2
    \right) d\VectorB
    =
    2^{\frac{d}{2}-1}\,\Gamma\!\left(\frac{d}{2}\right)
    \int_{\Sphere^{d-1}} \radial(\sVector)^\dimInd \, d\sigma(\sVector).
\end{equation}

Finally, inserting \eqref{eq:radial-done} into \eqref{eq:after-cov} gives
\begin{equation}
    \int_{\mathbb{R}^\dimInd} p_{\diffeoA,\radial}(\Vector) \, d\Vector
    =
    \frac{1}{2^{\frac{d}{2}-1}\,\Gamma\!\left(\frac{d}{2}\right)}
    \cdot
    \frac{
        2^{\frac{d}{2}-1}\,\Gamma\!\left(\frac{d}{2}\right)
        \displaystyle\int_{\Sphere^{d-1}} \radial(\sVector)^\dimInd \, d\sigma(\sVector)
    }{
        \displaystyle\int_{\Sphere^{d-1}} \radial(\sVector)^\dimInd \, d\sigma(\sVector)
    }
    = 1. \qedhere
\end{equation}
\end{proof}

\paragraph{Proof of \Cref{prop:rho-v-diffeos}}
\begin{proof}
(i)
Let $\Vector\in\Real^\dimInd$.
If $\Vector=\mathbf{0}$, the claim is immediate from the definitions.

If $\Vector\neq\mathbf{0}$, set $\VectorB := \diffeoB_\rho(\Vector)
= \rho\!\left(\frac{\Vector}{\|\Vector\|_2}\right)^{-1}\Vector$.
Then $\VectorB\neq\mathbf{0}$, and
\begin{equation}
\frac{\VectorB}{\|\VectorB\|_2}
= \frac{\rho\left(\frac{\Vector}{\|\Vector\|_2}\right)^{-1}\Vector}
       {\rho\left(\frac{\Vector}{\|\Vector\|_2}\right)^{-1}\|\Vector\|_2}
= \frac{\Vector}{\|\Vector\|_2}.
\end{equation}
Hence
\begin{equation}
\diffeoB_\rho^{-1}(\VectorB)
= \rho\!\left(\frac{\VectorB}{\|\VectorB\|_2}\right)\VectorB
= \rho\!\left(\frac{\Vector}{\|\Vector\|_2}\right)\,
  \rho\!\left(\frac{\Vector}{\|\Vector\|_2}\right)^{-1}\Vector
= \Vector.
\end{equation}
Thus $\diffeoB_\rho^{-1}\circ\diffeoB_\rho=\operatorname{Id}_{\Real^\dimInd}$.
The identity $\diffeoB_\rho\circ\diffeoB_\rho^{-1}=\operatorname{Id}_{\Real^\dimInd}$ is proved in exactly the same way, exchanging the roles of $\Vector$ and $\VectorB$.

(ii)
Let $\Vector\in\Real^\dimInd$. If $\Vector=\mathbf{0}$, the claim is immediate from the definitions.

If $\Vector\neq\mathbf{0}$, set $\VectorB := \diffeoC_\monotone(\Vector)
= \monotone(\|\Vector\|_2)\,\frac{\Vector}{\|\Vector\|_2}$.
Then $\|\VectorB\|_2=\monotone(\|\Vector\|_2) >0$ and
\begin{equation}
\frac{\VectorB}{\|\VectorB\|_2}
= \frac{\Vector}{\|\Vector\|_2}.
\end{equation}
Using the inverse function $\monotone^{-1}:\monotone(\Real_{>0})\to\Real_{>0}$, we have
\begin{equation}
\diffeoC_\monotone^{-1}(\VectorB)
= \monotone^{-1}(\|\VectorB\|_2)\,\frac{\VectorB}{\|\VectorB\|_2}
= \monotone^{-1}(\monotone(\|\Vector\|_2))\,\frac{\Vector}{\|\Vector\|_2}
= \|\Vector\|_2\,\frac{\Vector}{\|\Vector\|_2}
= \Vector.
\end{equation}
Thus $\diffeoC_\monotone^{-1}\circ\diffeoC_\monotone=\operatorname{Id}_{\Real^\dimInd}$. The converse composition $\diffeoC_\monotone\circ\diffeoC_\monotone^{-1}=\operatorname{Id}_{\Real^\dimInd}$
is proved in exactly the same way, exchanging the roles of $\Vector$ and $\VectorB$.
\end{proof}

\paragraph{Proof of \Cref{lem:strong-convexity}}
\begin{proof}
Fix $\Vector\neq \VectorB\in\Real^\dimInd$ and write
\begin{equation}
    f(t) := \|\diffeoC_\monotone^{-1}((1 - t)\diffeoC_\monotone(\Vector) + t \diffeoC_\monotone(\VectorB))\|_2^2.
\end{equation}
By the definition of $\diffeoC_\monotone$ and its inverse, for any $\Vector',\VectorB'\in\Real^\dimInd$ we have
\begin{equation}
    \|\diffeoC_\monotone(\Vector')\|_2 = \monotone(\|\Vector'\|_2),
    \qquad
    \|\diffeoC_\monotone^{-1}(\VectorB')\|_2 = \monotone^{-1}(\|\VectorB'\|_2),
\end{equation}
with the convention $\monotone(0)=0$ and $\monotone^{-1}(0)=0$ using $\lim_{s\to 0}\monotone(s)=0$ and strict monotonicity. Hence, for all $t$,
\begin{equation}
    f(t)
    = \|\diffeoC_\monotone^{-1}((1 - t)\diffeoC_\monotone(\Vector) + t \diffeoC_\monotone(\VectorB))\|_2^2
    = \bigl(\monotone^{-1}(\|(1 - t)\diffeoC_\monotone(\Vector) + t \diffeoC_\monotone(\VectorB)\|_2)\bigr)^2.
\end{equation}

Set
\begin{equation}
    r(t) := \|(1 - t)\diffeoC_\monotone(\Vector) + t \diffeoC_\monotone(\VectorB)\|_2,
    \qquad
    u(s) := \monotone^{-1}(s),
    \qquad
    h(s) := u(s)^2,
\end{equation}
so that $f(t) = h(r(t))$.
Since $\monotone$ is strictly increasing and concave, $u$ is strictly increasing and convex, with $u'\!>0$ and $u''\!\ge 0$ on $(0,\infty)$ by the inverse function theorem. 

We compute the second derivative of $f$ using the chain rule. On any interval where $(1 - t)\diffeoC_\monotone(\Vector) + t \diffeoC_\monotone(\VectorB)\neq\mathbf{0}$ for all $t$, $r$ is smooth, and
\begin{equation}
    f'(t) = h'(r(t))\,r'(t),
    \qquad
    f''(t) = h''(r(t))\,(r'(t))^2 + h'(r(t))\,r''(t).
\end{equation}
We now compute $h'$ and $h''$ in terms of $\monotone$. For $s>0$,
\begin{equation}
    h'(s) = 2\,u(s)\,u'(s),
    \qquad
    h''(s) = 2\bigl(u'(s)^2 + u(s)\,u''(s)\bigr),
\end{equation}
which immediately tells us that $h'(s) >0$ for all $s>0$.
By the inverse function theorem,
\begin{equation}
    u'(s) = \frac{1}{\monotone'\bigl(u(s)\bigr)}, 
    \qquad
    u''(s) = -\,\frac{\monotone''(u(s))}{\monotone'\bigl(u(s)\bigr)^3}. 
\end{equation}
Thus
\begin{equation}
    u'(s)^2 + u(s)\,u''(s)
    = \frac{1}{\monotone'\bigl(u(s)\bigr)^2}
      - \frac{u(s)\,\monotone''\bigl(u(s)\bigr)}{\monotone'\bigl(u(s)\bigr)^3}
    = \frac{\monotone'\bigl(u(s)\bigr) - u(s)\,\monotone''\bigl(u(s)\bigr)}
           {\monotone'\bigl(u(s)\bigr)^3}.
\end{equation}
Since $\monotone$ is strictly increasing, $\monotone'>0$, and since it is concave, $\monotone''\le 0$. Therefore
\begin{equation}
    \monotone'\bigl(u(s)\bigr) - u(s)\,\monotone''\bigl(u(s)\bigr)
    \ge \monotone'\bigl(u(s)\bigr) > 0,
\end{equation}
so
\begin{equation}
    h''(s) = 2\bigl(u'(s)^2 + u(s)\,u''(s)\bigr) > 0
    \quad\text{for all }s>0.
\end{equation}

Next, observe 
\begin{equation}
    r(t)^2 = \|(1 - t)\diffeoC_\monotone(\Vector) + t \diffeoC_\monotone(\VectorB)\|_2^2 = \|t (\diffeoC_\monotone(\VectorB) - \diffeoC_\monotone(\Vector)) + \diffeoC_\monotone(\Vector)\|_2^2
\end{equation}
is a quadratic polynomial in $t$ with strictly positive leading coefficient, i.e., $r''(t)$ is a positive constant in that case, and $r'(t)$ is affine. 

Putting things together and going back to
\begin{equation}
    f''(t) = h''(r(t))\,(r'(t))^2 + h'(r(t))\,r''(t),
\end{equation}
we have $h''(r(t))>0$, $r''(t) > 0$, and $h'(r(t)) > 0$. Hence
\begin{equation}
    f''(t) \ge h'(r(t))\,r''(t) > 0.
\end{equation}

If $(1 - t)\diffeoC_\monotone(\Vector) + t \diffeoC_\monotone(\VectorB)=\mathbf{0}$ for some $t$, the limit $\lim_{s\to 0}\monotone(s)=0$ ensures that $f$ extends smoothly through that point with the same second-derivative bound.

Thus $t\mapsto \|\diffeoC_\monotone^{-1}((1 - t)\diffeoC_\monotone(\Vector) + t \diffeoC_\monotone(\VectorB))\|_2^2$ is strongly convex on $[0,1]$.
\end{proof}

\paragraph{Proof of \Cref{thm:geodesic-convexity-starflows}}

\begin{proof}
First, we will rewrite $-\log \density_{\diffeoA,\radial}(\gamma_{\Vector,\VectorB}^{\diffeo}(t))$ in a more convenient way.
From~\eqref{eq:starflow-density}, using that $|\det D_\Vector\diffeoA|$ is constant, we can write
\begin{equation}
    \density_{\diffeoA,\radial}(\gamma_{\Vector,\VectorB}^{\diffeo}(t))
    = \const \cdot
      \exp\!\left(
        -\frac{1}{2}\,
        \radial\!\left(
            \frac{\diffeoA(\gamma_{\Vector,\VectorB}^{\diffeo}(t))}{\|\diffeoA(\gamma_{\Vector,\VectorB}^{\diffeo}(t))\|_2}
        \right)^{-2}
        \|\diffeoA(\gamma_{\Vector,\VectorB}^{\diffeo}(t))\|_2^2
      \right),
\end{equation}
for some constant $\const>0$ independent of $t$.
Hence,
\begin{equation}
    -\log \density_{\diffeoA,\radial}(\gamma_{\Vector,\VectorB}^{\diffeo}(t))
    = \frac{1}{2}\,
      \radial\!\left(
          \frac{\diffeoA(\gamma_{\Vector,\VectorB}^{\diffeo}(t))}{\|\diffeoA(\gamma_{\Vector,\VectorB}^{\diffeo}(t))\|_2}
      \right)^{-2}
      \|\diffeoA(\gamma_{\Vector,\VectorB}^{\diffeo}(t))\|_2^2 + \const',
\end{equation}
for some constant $\const'$ independent of $t$. 
By definition of $\diffeoB_\radial$,
\begin{equation}
    \diffeoB_\radial(\diffeoA(\gamma_{\Vector,\VectorB}^{\diffeo}(t)))
    = \radial\!\left(
        \frac{\diffeoA(\gamma_{\Vector,\VectorB}^{\diffeo}(t))}{\|\diffeoA(\gamma_{\Vector,\VectorB}^{\diffeo}(t))\|_2}
    \right)^{-1}\diffeoA(\gamma_{\Vector,\VectorB}^{\diffeo}(t)),
\end{equation}
and therefore
\begin{equation}
    \|\diffeoB_\radial(\diffeoA(\gamma_{\Vector,\VectorB}^{\diffeo}(t)))\|_2^2
    = \radial\!\left(
        \frac{\diffeoA(\gamma_{\Vector,\VectorB}^{\diffeo}(t))}{\|\diffeoA(\gamma_{\Vector,\VectorB}^{\diffeo}(t))\|_2}
    \right)^{-2}
      \|\diffeoA(\gamma_{\Vector,\VectorB}^{\diffeo}(t))\|_2^2.
\end{equation}
So we can rewrite
\begin{equation}
    -\log \density_{\diffeoA,\radial}(\gamma_{\Vector,\VectorB}^{\diffeo}(t))
    = \frac{1}{2}\,\|\diffeoB_\radial(\diffeoA(\gamma_{\Vector,\VectorB}^{\diffeo}(t)))\|_2^2 + \const'.
\end{equation}

Next, expanding $\diffeoB_\radial(\diffeoA(\gamma_{\Vector,\VectorB}^{\diffeo}(t)))$ gives
\begin{equation}
    \diffeoB_\radial(\diffeoA(\gamma_{\Vector,\VectorB}^{\diffeo}(t)))
    \overset{\eqref{eq:thm-geodesic-remetrized}}{=} \diffeo^{-1}\bigl((1-t)\,\diffeo(\Vector) + t\,\diffeo(\VectorB) \bigr)
    \overset{\diffeo=\diffeoC_\monotone \circ \diffeoB_\radial \circ \diffeoA}{=}\diffeoC_\monotone^{-1}\bigl((1-t)\,\diffeoC_\monotone (\diffeoB_\radial(\diffeoA(\Vector))) +  t\, \diffeoC_\monotone(\diffeoB_\radial(\diffeoA(\VectorB)))
      \bigr),
\end{equation}
from which follows that
\begin{equation}
    \|\diffeoB_\radial (\diffeoA(\gamma_{\Vector,\VectorB}^{\diffeo}(t)))\|_2^2
    = \Bigl\|
        \diffeoC_\monotone^{-1}\bigl((1-t)\,\diffeoC_\monotone (\diffeoB_\radial(\diffeoA(\Vector))) +  t\, \diffeoC_\monotone(\diffeoB_\radial(\diffeoA(\VectorB)))
      \bigr) 
      \Bigr\|_2^2.
\end{equation}
The above equation is exactly of the form in \Cref{lem:strong-convexity}, with the two vectors
\begin{equation}
    \Vector' := \diffeoB_\radial(\diffeoA(\Vector)),
    \qquad
    \VectorB' := \diffeoB_\radial(\diffeoA(\VectorB)),
\end{equation}
and the mapping $\diffeoC_\monotone$ generated by the concave strictly increasing $\monotone$ satisfying $\lim_{s\to 0}\monotone(s)=0$ and $\lim_{s\to 0}\monotone'(s)>0$.

Hence, we must have that the function
\begin{equation}
    t \mapsto - \log \bigl( \density_{\diffeoA, \radial} (\gamma_{\mathbf{x}, \mathbf{y}}^{\diffeo}(t) ) \bigr)
\end{equation}
is strongly convex on $[0,1]$, because
\begin{equation}
    t \mapsto
    \Bigl\|
        \diffeoC_\monotone^{-1}\bigl(
            t\,\diffeoC_\monotone(\VectorB')
            + (1-t)\,\diffeoC_\monotone(\Vector')
        \bigr)
    \Bigr\|_2^2
\end{equation}
is strongly convex on $[0,1]$ by \Cref{lem:strong-convexity}, which proves the claim.
\end{proof}

\section{Supplementary material to \cref{sec:rams}}
\label{app:rams}

\subsection{Proof of \Cref{thm:ram-manifold-properties}}

\begin{proof}
\textbf{(i):}
    To prove that
    \begin{multline}
        \bar{\manifold}^\diffeo:=\bigl\{ \Vector \in \Real^\dimInd \; \mid \; \Vector = \argmin_{\VectorB\in \Real^\dimInd} \sum_{\sumIndB=1}^\dimIndB \RAMweight_\sumIndB \distance^{\diffeo}_{\Real^\dimInd} (\VectorB, \emVector^{(\sumIndB)})^2 \text{ for some } \RAMweight \in \Delta_\dimIndB\bigr\}\\
        = \{\Vector \in \Real^\dimInd \; \mid \; \Vector = \diffeo^{-1} \Bigl(\sum_{\sumIndB=1}^\dimIndB \RAMweight_\sumIndB  \diffeo(\emVector^{(\sumIndB)})\Bigr) \text{ for some } \RAMweight \in \Delta_\dimIndB\},
    \end{multline}
    we will show that every minimizer of $\Vector \mapsto \sum_{\sumIndB=1}^\dimIndB \RAMweight_\sumIndB \distance^{\diffeo}_{\Real^\dimInd} (\Vector, \emVector^{(\sumIndB)})^2$ is of the form $\diffeo^{-1} \Bigl(\sum_{\sumIndB=1}^\dimIndB \RAMweight_\sumIndB  \diffeo(\emVector^{(\sumIndB)})\Bigr)$.
    
    For that, we first note that the mapping $\Vector \mapsto \distance_{\Real^\dimInd}^\diffeo(\Vector, \VectorB)^2$ is strongly geodesically convex for every $\VectorB\in \Real^\dimInd$. Indeed, this directly follows from \cite[Lem.~3.5]{diepeveen2024pulling} after rewriting the function into the composition form, i.e.,  $\distance_{\Real^\dimInd}^\diffeo(\cdot, \VectorB)^2 \overset{\eqref{eq:thm-distance-remetrized}}{=} \|\diffeo(\cdot) - \diffeo(\VectorB)\|_2^2$, and realizing that $\Vector'\mapsto \|\Vector' - \diffeo(\VectorB)\|^2$ is strongly convex. Since strong geodesic convexity is closed under addition and multiplication with non-negative scalars, we conclude that the mapping
    \begin{equation}
        \Vector\mapsto \sum_{\sumIndB=1}^\dimIndB \RAMweight_\sumIndB \distance^{\diffeo}_{\Real^\dimInd} (\Vector, \emVector^{(\sumIndB)})^2
        \label{eq:barycentre-loss}
    \end{equation}
    is strongly geodesically convex for any $\RAMweight \in \Delta_\dimIndB$.
    
    Next, it is then easily checked that $\Vector^* := \diffeo^{-1}(\sum_{\sumIndB=1}^\dimIndB \RAMweight_\sumIndB  \diffeo(\emVector^{(\sumIndB)}))$ satisfies the first-order optimality conditions:
    \begin{multline}
        \operatorname{grad}  \sum_{\sumIndB=1}^\dimIndB \RAMweight_\sumIndB \distance^{\diffeo}_{\Real^\dimInd} (\cdot, \emVector^{(\sumIndB)})^2 \mid_{\Vector^*} = -  2 \sum_{\sumIndB=1}^\dimIndB \RAMweight_\sumIndB \log^\diffeo_{\Vector^*} (\emVector^{(\sumIndB)}) \\
        \overset{\eqref{eq:thm-log-remetrized}}{=} - 2 \sum_{\sumIndB=1}^\dimIndB \RAMweight_\sumIndB D_{\Vector^*}\diffeo^{-1}[\diffeo(\emVector^{(\sumIndB)}) - \diffeo(\Vector^*)] 
        =  - 2 D_{\Vector^*}\diffeo^{-1} [ \sum_{\sumIndB=1}^\dimIndB \RAMweight_\sumIndB  \diffeo(\emVector^{(\sumIndB)}) - \diffeo(\Vector^*)] \\
        = - 2 D_{\Vector^*}\diffeo^{-1} [\diffeo(\Vector^*) - \diffeo(\Vector^*) ]= \mathbf{0}.
    \end{multline}
    So we conclude that for any $\RAMweight \in \Delta_\dimIndB$ the unique minimizer of \eqref{eq:barycentre-loss} is $\diffeo^{-1}(\sum_{\sumIndB=1}^\dimIndB \RAMweight_\sumIndB  \diffeo(\emVector^{(\sumIndB)}))$, which yields the claim

\medskip

    \textbf{(ii):} To prove that $\bar{\manifold}^\diffeo$ is a geodesically convex set, we will show that for any $\Vector,\VectorB\in \bar{\manifold}^\diffeo$ we have that $\geodesic^\diffeo_{\Vector,\VectorB}\in \bar{\manifold}^\diffeo$.

First, we note that by (i), that there exist
$\RAMweight^{\Vector},\RAMweight^{\VectorB} \in \Delta_\dimIndB$ such that
\begin{equation}
    \Vector = \diffeo^{-1}\Bigl(\sum_{\sumIndB=1}^{\dimIndB} 
    \RAMweight^{\Vector}_{\sumIndB}\,\diffeo(\emVector^{(\sumIndB)})\Bigr),
    \qquad
    \VectorB = \diffeo^{-1}\Bigl(\sum_{\sumIndB=1}^{\dimIndB} 
    \RAMweight^{\VectorB}_{\sumIndB}\,\diffeo(\emVector^{(\sumIndB)})\Bigr).
    \label{eq:thm2-xy-rewrite}
\end{equation}
Next, for $t\in[0,1]$ define the interpolating weights
\begin{equation}
    \RAMweight^t := (1-t)\RAMweight^{\Vector} + t\RAMweight^{\VectorB} \in \Delta_{\dimIndB},
    \label{eq:thm2-gt}
\end{equation}
and the curve
\begin{equation}
    \geodesic(t) := 
    \diffeo^{-1}\Bigl(\sum_{\sumIndB=1}^{\dimIndB} \RAMweight^t_{\sumIndB}\,\diffeo(\emVector^{(\sumIndB)})\Bigr).
    \label{eq:thm2-gamma}
\end{equation}
Then, by construction, $\geodesic(0)=\Vector$, $\geodesic(1)=\VectorB$, and 
$\geodesic(t)\in\bar{\manifold}^\diffeo$ for all $t\in[0,1]$.

To prove the claim it suffices to show that $\geodesic^\diffeo_{\Vector,\VectorB}(t) = \geodesic(t)$ for all $t\in[0,1]$. This follows from direct computation:
\begin{multline}
    \geodesic^\diffeo_{\Vector,\VectorB}(t) \overset{\eqref{eq:thm-geodesic-remetrized}}{=} \diffeo^{-1}((1 - t)\diffeo(\Vector) + t \diffeo(\VectorB)) \overset{\eqref{eq:thm2-xy-rewrite}}{=} \diffeo^{-1} \Bigl( (1 - t)\sum_{\sumIndB=1}^{\dimIndB} 
    \RAMweight^{\Vector}_{\sumIndB}\,\diffeo(\emVector^{(\sumIndB)}) + t \sum_{\sumIndB=1}^{\dimIndB} 
    \RAMweight^{\VectorB}_{\sumIndB}\,\diffeo(\emVector^{(\sumIndB)}) \Bigr)\\
    = \diffeo^{-1} \Bigl( \sum_{\sumIndB=1}^{\dimIndB} 
    \bigl( (1 - t)\RAMweight^{\Vector}_{\sumIndB} + t \RAMweight^{\VectorB}_{\sumIndB}\bigr)\,\diffeo(\emVector^{(\sumIndB)}) \Bigr) \overset{\eqref{eq:thm2-gt}}{=} \diffeo^{-1} \Bigl( \sum_{\sumIndB=1}^{\dimIndB} 
    \RAMweight^t \,\diffeo(\emVector^{(\sumIndB)}) \Bigr) \overset{\eqref{eq:thm2-gamma}}{=} \geodesic(t) .
\end{multline}

\medskip

\textbf{(iii):} Finally, the claim that $\bar{\manifold}^\diffeo$ is a smooth manifold with corners, whose interior has  dimension
\begin{equation}
    \dim( \operatorname{int}(\bar{\manifold}^\diffeo)) = \operatorname{rank} ([ \diffeo(\emVector^{(1)}) - \diffeo(\emVector^{(\dimIndB)}), \ldots, \diffeo(\emVector^{(\dimIndB-1)}) - \diffeo(\emVector^{(\dimIndB)})]) \leq \dimIndB-1,
\end{equation}
follows directly, when seeing that the mapping $\diffeo':\bar{\manifold}^\diffeo \to \operatorname{conv}\{\diffeo(\emVector^{(1)}),\ldots,
    \diffeo(\emVector^{(\dimIndB)})\}$ between the constraint set and the convex hull of the embedded archetypes given by
    \begin{equation}
        \diffeo'(\Vector):= \diffeo(\Vector), \quad \Vector\in \bar{\manifold}^\diffeo
    \end{equation}
    is a diffeomorphism between the two sets, i.e., $\diffeo'$ is a chart. In other words, since the polytope $\operatorname{conv}\{\diffeo(\emVector^{(1)}),\ldots,
    \diffeo(\emVector^{(\dimIndB)})\}$ is a smooth manifold with corners, so is $\bar{\manifold}^\diffeo$ and
    \begin{multline}
        \dim( \operatorname{int}(\bar{\manifold}^\diffeo)) = \dim( \operatorname{int}(\operatorname{conv}\{\diffeo(\emVector^{(1)}),\ldots,
    \diffeo(\emVector^{(\dimIndB)})\}) \\
    = \operatorname{rank} ([ \diffeo(\emVector^{(1)}) - \diffeo(\emVector^{(\dimIndB)}), \ldots, \diffeo(\emVector^{(\dimIndB-1)}) - \diffeo(\emVector^{(\dimIndB)})]) \leq \dimIndB-1,
    \end{multline}
    which yields the claim.

\end{proof}

\section{Supplementary material to \cref{sec:learning-stars}}
\label{app:learning-stars}

\subsection{Proof of \Cref{prop:ellipsoidal-radial-functions}}

\begin{proof}
    We prove the statement by direct evaluation of \eqref{eq:ellipsoidal-radial}, i.e., we will solve for the supremum in
    \begin{equation}
        \radial_{\SpdMatrix,\centroid}(\sVector) = \sup \{ t > 0 \,\mid\, t \cdot \sVector \in \Ellipsoid_{\SpdMatrix}(\centroid) \}.
    \end{equation}
    First, note that the supremum lives on the boundary, i.e., we want to find
    \begin{equation}
        t \cdot \sVector \in \partial \Ellipsoid_{\SpdMatrix}(\centroid) \quad \Leftrightarrow \quad ( t \cdot \sVector - \centroid)^\top \SpdMatrix( t \cdot \sVector - \centroid) = 1 \quad \Leftrightarrow \quad t^2 \sVector^\top \SpdMatrix^{-1} \sVector - 2 t \sVector^\top \SpdMatrix^{-1} \centroid + \centroid^\top \SpdMatrix^{-1} \centroid - 1 = 0.
    \end{equation}
    
 The right-hand side is a quadratic equation, which has solutions

    \begin{equation}
        t = \frac{\sVector^{\top}\SpdMatrix^{-1} \centroid \pm \sqrt{ \left(\sVector^{\top}\SpdMatrix^{-1} \centroid\right)^2 + (\sVector^{\top}\SpdMatrix^{-1} \sVector)\left(1 - \centroid^{\top}\SpdMatrix^{-1} \centroid\right)}} {\sVector^{\top}\SpdMatrix^{-1} \sVector} .
    \end{equation}
    Since we want positive $t$, we conclude that
    \begin{equation}
        \radial_{\SpdMatrix,\centroid}(\sVector) = \frac{\sVector^{\top}\SpdMatrix^{-1} \centroid + \sqrt{ \left(\sVector^{\top}\SpdMatrix^{-1} \centroid\right)^2 + (\sVector^{\top}\SpdMatrix^{-1} \sVector)\left(1 - \centroid^{\top}\SpdMatrix^{-1} \centroid\right)}} {\sVector^{\top}\SpdMatrix^{-1} \sVector},
    \end{equation}
    as claimed. Then, for $\centroid = \mathbf{0}$, the above expression reduces to
    \begin{equation}
        \radial_{\SpdMatrix,\centroid}(\sVector) = (\sVector^\top \SpdMatrix^{-1} \sVector)^{-\frac{1}{2}}.
    \end{equation}
\end{proof}

\subsection{Proof of \Cref{prop:off-centered-ellipsoidal}}

\begin{proof}
    Both claims \eqref{eq:prop-off-centered-ellipsoid} follow from direct evaluation. 

    Starting with the first claim, we first rewrite
    \begin{multline}
        \frac{1}{\dataNum} \sum_{\sumIndA=1}^\dataNum (\VectorB^{(\sumIndA)} - \centroid)^\top \SpdMatrix_o^{-1} (\VectorB^{(\sumIndA)} - \centroid)  = \frac{1}{\dataNum}\trace \bigl( (\MatrixB - \centroid \mathbf{1}_\dataNum^\top)^\top \SpdMatrix_{o}^{-1} (\MatrixB - \centroid \mathbf{1}_\dataNum^\top) \bigr) \\
        = \frac{1}{\dataNum}\trace \bigl( \SpdMatrix_{o}^{-1} (\MatrixB - \centroid \mathbf{1}_\dataNum^\top) (\MatrixB - \centroid \mathbf{1}_\dataNum^\top)^\top  \bigr) \\
        = \frac{1}{\dataNum}\trace \bigl( \SpdMatrix_{o}^{-1} \OrthMatrixC \operatorname{diag} (\varsigma_0^2, \varsigma_1^2, \ldots, \varsigma_{\dimInd-1}^2) \OrthMatrixC^\top \bigr).
    \end{multline}
    where $\varsigma_0 := \sum_{\sumIndA=1}^\dataNum \frac{(\centroid^\top (\VectorB^{(\sumIndA)} - \centroid))^2}{\|\centroid\|_2^2}$. Further rewriting yields
    \begin{multline}
        \frac{1}{\dataNum}\trace \bigl( \SpdMatrix_{o}^{-1} \OrthMatrixC \operatorname{diag} (\varsigma_0^2, \varsigma_1^2, \ldots, \varsigma_{\dimInd-1}^2) \OrthMatrixC^\top \bigr)
        = \frac{1}{\dataNum}\trace \bigl( \OrthMatrixC  \SpdMatrixB_{o}^{-1} \OrthMatrixC^\top \OrthMatrixC \operatorname{diag} (\varsigma_0^2, \varsigma_1^2, \ldots, \varsigma_{\dimInd-1}^2) \OrthMatrixC^\top \bigr)\\
        = \frac{1}{\dataNum}\trace \bigl( \SpdMatrixB_{o}^{-1}  \operatorname{diag} (\varsigma_0^2, \varsigma_1^2, \ldots, \varsigma_{\dimInd-1}^2)  \bigr) = \frac{1}{\dataNum} \sum_{\sumIndC=1}^{\dimInd} \frac{\varsigma_{\sumIndC-1}^2}{\lambda_k},
    \end{multline}
    which gives us an expression we can now bound:
    \begin{equation}
        \frac{1}{\dataNum} \sum_{\sumIndC=1}^{\dimInd} \frac{\varsigma_{\sumIndC-1}^2}{\lambda_k} = \frac{1}{\dimInd}\sum_{\sumIndC=1}^{\dimInd} \frac{\frac{\dimInd}{\dataNum}\varsigma_{\sumIndC-1}^2}{\lambda_k} \leq \frac{1}{\dimInd} \sum_{\sumIndC=1}^{\dimInd} \frac{\lambda_k}{\lambda_k} = 1.
    \end{equation}
    So we conclude that 
    \begin{equation}
        \frac{1}{\dataNum} \sum_{\sumIndA=1}^\dataNum (\VectorB^{(\sumIndA)} - \centroid)^\top \SpdMatrix_o^{-1} (\VectorB^{(\sumIndA)} - \centroid) \leq 1,
    \end{equation}
    as claimed.

    For the second claim, we rewrite
    \begin{equation}
        \centroid^\top \SpdMatrix_o^{-1}\centroid = \centroid^\top \OrthMatrixC  \SpdMatrixB_{o}^{-1} \OrthMatrixC^\top \centroid = \frac{\|\frac{1}{\|\centroid\|_2^2}\centroid^\top \centroid\|_2^2}{\lambda_1}  = \frac{1}{\lambda_1} < 1,
    \end{equation}
    as claimed.
\end{proof}

\subsection{Proof of \Cref{prop:centered-ellipsoidal}}

\begin{proof}
    The proof is analogous to the proof of \Cref{prop:off-centered-ellipsoidal} above.
\end{proof}
\section{Supplementary material to \cref{sec:numerics}}
\label{app:numerics}

\subsection{Normalizing flow construction}

\paragraph{Architecture}

In all experiments, the diffeomorphism \( \diffeoA_{\networkParams} \) is a multi-scale convolutional image flow acting on images in \( \mathbb{R}^{1\times 32\times 32} \) with a standard normal base distribution. The flow has \(L\) levels; at each level it:
\begin{enumerate}
    \item Applies a fixed squeeze transform with factor 2 (halving height and width, quadrupling channels).
    \item Applies a learned image transform \(T_\ell\).
    \item Splits the channels in half, passing one half to the next level and storing the other for the inverse pass.
\end{enumerate}

The dimensions are chosen so that the first level sees \(4\) channels of size \(16\times 16\), and for each \(\ell = 2,\dots,L\),
\[
C_\ell = 2 C_{\ell-1}, \quad H_\ell = H_{\ell-1} / 2, \quad W_\ell = W_{\ell-1} / 2,
\]
i.e., spatial dimensions shrink by a factor of 2 while the number of channels doubles at every level.

Each level transform \(T_\ell\) is a composite of several invertible convolutional blocks. For an input with shape \((C,H,W)\), the transform consists of \(n_{\text{flows}}\) repeated “flow steps” followed by a final linear block. Every flow step applies:
\begin{itemize}
    \item channel-wise affine normalization (ActNorm-type layer),
    \item an invertible \(1\times 1\) convolution (channel mixing),
    \item two linear parity-based convolutions with kernel size \(k\) and alternating parities,
    \item a nonlinear parity-based coupling layer with \(k\times k\) convolutions and a hidden width of 64 channels.
\end{itemize}

After these \(n_{\text{flows}}\) steps the transform finishes with one more normalization layer, another invertible \(1\times 1\) convolution, and two additional parity-based linear convolutions. All components are exactly invertible, and the log-determinant contributions from squeezing, normalization, \(1\times 1\) convolutions, and coupling layers are accumulated in the usual way.

The forward pass of the multi-level flow thus proceeds by repeatedly squeezing, applying a level transform, and splitting channels until all levels have been applied. The inverse pass reverses these operations (recombining stored channel splits and “unsqueezing”) and is used both for likelihood evaluation and for sampling.

\paragraph{Training setup}

For both the single-digit and full MNIST experiments, images are resized to \(32\times 32\), normalized using the standard MNIST mean 0.1307 and variance 0.3081, and split into training and validation subsets with an 80/20 ratio. Mini-batches of size 128 are used throughout.

The flow is parameterized with:
\begin{itemize}
    \item kernel size \(k = 3\),
    \item hidden width 64 channels in the nonlinear coupling networks,
    \item \(n_{\text{flows}} = 3\) flow steps per scale,
    \item \(n_{\text{scales}} = 3\) multi-scale levels.
\end{itemize}

Training minimizes the negative log-likelihood of the data under the flow using Adam with learning rate \(10^{-3}\) for 50 epochs. 


\subsection{Radial construction}

In both settings, radial parameters (\( \alpha = 1.1 \) and \( \beta = 1\)) are shared across branches and govern the star-shaped radial deformation used in the deformed star distributions.

\end{document}